\tikzstyle{block} = [rectangle, draw, fill=blue!20, 
\tikzstyle{line} = [draw, -latex']
\tikzstyle{cloud} = [draw, ellipse,fill=red!20, node distance=3cm,
\newtheorem{thm}{Theorem}
\newtheorem{prop}{Proposition}
\newtheorem{cor}{Corollary}
\newtheorem{lemma}{Lemma}
\newtheorem{defn}{Definition}
\newtheorem*{remark*}{Remark}
\newmdtheoremenv{mhyp}{Hypothesis} 
\newmdtheoremenv{mthm}{Theorem}
\newmdtheoremenv{mtheorem}{Theorem}
\newmdtheoremenv{mprop}{Proposition}
\newmdtheoremenv{mcor}{Corollary}
\newmdtheoremenv{mlemma}{Lemma}
\newmdtheoremenv{mdefn}{Definition}
\newmdtheoremenv{mmydef}{Definition}
\newmdtheoremenv{mconj}{Conjecture}
\newmdtheoremenv{mex}{Example}
\newmdtheoremenv{mexercise}{Exercise}
\DeclareMathAlphabet\mathbfcal{OMS}{cmsy}{b}{n}
\DeclareMathOperator*{\argmax}{arg\,max}
\DeclareMathOperator*{\argmin}{arg\,min}
\def \bx{\boldsymbol{x}}
\def \by{\boldsymbol{y}}
\def \bz{\boldsymbol{z}}
\def \bm{\boldsymbol{m}}
\def \bv{\boldsymbol{v}}
\def \bh{\boldsymbol{h}}
\def \bu{\boldsymbol{u}}
\def \bv{\boldsymbol{v}}
\def \bb{\boldsymbol{b}}
\def \bepsilon{\boldsymbol{\epsilon}}
\def \bq{\boldsymbol{q}}
\def \bm{\boldsymbol{m}}
\def \bw{\boldsymbol{w}}
\def \be{\boldsymbol{e}}
\def \br{\boldsymbol{r}}
\def \ba{\boldsymbol{a}}
\def \bA{\boldsymbol{A}}
\def \bV{\boldsymbol{V}}
\def \bW{\boldsymbol{W}}
\def \bt{\boldsymbol{t}}
\def \bl{\boldsymbol{l}}
\def \bE{\boldsymbol{E}}
\def \bQ{\boldsymbol{D}}
\def \bM{\boldsymbol{M}}
\def \bSigma{\boldsymbol{\Sigma}}
\def \bmu{\boldsymbol{\mu}}
\def\R{{\mathbb R}}
\def\Indic{\mathbbm{1}}
\newcommand{\im}{\text{Im}}
\newcommand{\card}{\text{Card}}
\newcommand{\trace}{\text{Tr}}
\newcommand{\vect}{\text{vect}}
\newcommand{\sign}{\text{sign}}
\newcommand{\diag}{\text{diag}}
\newcommand{\qq}{\vspace*{-2mm}}
\newcommand{\brewriteA}{
\bA^{\ell+1\rightarrow L}\bQ_{\omega}^{\ell}\bW^{\ell}\bQ_{\omega}^{\ell-1}\bb_{\omega}^{1\rightarrow \ell-1}
}
\newcommand{\brewriteb}{
\sum_{i=\ell}^{L}\bA_{\omega}^{i + 1 \rightarrow L}\bQ^{i}_{\omega}\bv^{i}
}
\newcommand{\Arewrite}
{
\bA^{\ell+1\rightarrow L}_{\omega}\bQ^{\ell}_{\omega}\bW^{\ell}\bQ^{\ell-1}_{\omega}\bA^{1\rightarrow \ell -1}
}
\newcommand{\loss}
{
-\frac{1}{2}\log\Big( (2\pi)^{S+D}|\det(\bSigma_{\bx})||\det(\bSigma_{\bz})| \Big)-\frac{1}{2}\Bigg(\bx^T\bSigma_{\bx}^{-1}\bx
    -2 \bx^T\bSigma_{\bx}^{-1}\left(\sum_{\omega}\bA_{\omega}\bm^1_{\omega}(\bx)+\bb_{\omega}m^0_{\omega}(\bx)\right)\\
    &
    +\sum_{\omega} m^0_{\omega}\bb_{\omega}^T\bSigma^{-1}_{\bx}\bb_{\omega}  +\trace(\bA_{\omega}^T\bSigma_{\bx}^{-1}\bA_{\omega}\bM^2_{\omega}(\bx))+2 (\bA_{\omega}\bm_{\omega}^1(\bx))^T\bSigma_{\bx}^{-1} \bb_{\omega}\Bigg)-\frac{1}{2}\trace(\bSigma_{\bz}^{-1}\bM^2(\bx))
}
\tikzstyle{block} = [rectangle, draw, fill=blue!20, 
\tikzstyle{line} = [draw, -latex']
\tikzstyle{cloud} = [draw, ellipse,fill=red!20, node distance=3cm,
\title{Analytical Probability Distributions and EM-Learning for Deep Generative Networks}
\author{%
  Randall Balestriero \\
  ECE Department\\
  Rice University\\
  %Pittsburgh, PA 15213 \\
  %\texttt{hippo@cs.cranberry-lemon.edu} \\
  % examples of more authors
   \And
  Sébastien Paris \\
  Aix-Marseille Univ, Université de Toulon, \\ 
  CNRS, LIS, Toulon, France\\ 
  % \texttt{email} \\
   \And
  Richard~G.~Baraniuk\\
  ECE Department\\
  Rice University \\
  % Address \\
  % \texttt{email} \\
  % \And
  % Coauthor \\
  % Affiliation \\
  % Address \\
  % \texttt{email} \\
  % \And
  % Coauthor \\
  % Affiliation \\
  % Address \\
  % \texttt{email} \\
}
\begin{document}

\maketitle

\begin{abstract}
Deep Generative Networks (DGNs) with probabilistic modeling of their output
and latent space are currently trained via Variational Autoencoders
(VAEs).
In the absence of a known analytical form for the posterior and
likelihood expectation, VAEs resort to approximations, including
(Amortized) Variational Inference (AVI) and Monte-Carlo (MC) sampling.
We exploit the Continuous Piecewise Affine (CPA) property
of modern DGNs to derive their posterior and marginal
distributions as well as the latter's first moments. These findings enable us 
to derive an analytical Expectation-Maximization (EM) algorithm that enables gradient-free DGN learning.
We demonstrate empirically that EM training of DGNs produces greater
likelihood than VAE training.
Our findings will guide the design of new VAE AVI that better approximate the true posterior and open avenues to appply standard statistical tools for model comparison, anomaly detection, and missing data imputation.
\end{abstract}

\section{Introduction}
%\qq

Deep Generative Networks (DGNs), which map a low-dimensional latent variable $\bz$ to a higher-dimensional generated sample $\bx$ are the state-of-the-art methods for a range of machine learning applications, including anomaly detection, data generation, likelihood estimation, and exploratory analysis across a wide variety of datasets
\cite{blaauw2016modeling,inoue2018transfer,NIPS2018_8005,lim2018molecular}. 

Training of DGNs roughly falls into two camps: (i) by leveraging an adversarial network as in a Generative Adversarial Network (GAN) \cite{goodfellow2014generative} to turn the method into an adversarial game; and (ii) by modeling the latent variable and observed variables as random variables and performing some flavor of likelihood maximization training. A widely used solution to likelihood based DGN training is via Variational Autoencoders (VAEs) \cite{kingma2013auto}. The popularity of the VAE is due to its intuitive and interpretable loss function, which is obtained from likelihood estimation, and its ability to exploit standard estimation techniques ported from the probabilistic graphical models literature.

Yet, VAEs only offer an {\em approximate} solution for likelihood based training of DGNs. In fact, all current VAEs employ three major approximation steps in the likelihood maximization process. 
First, the true (unknown) posterior is approximated by a variational distribution. This estimate is governed by some free parameters that must be optimized to fit the variational distribution to the true posterior. VAEs estimate such parameters by means of an alternative network, the {\em encoder}, with the datum as input and the predicted optimal parameters as output. This step is referred to as Amortized Variational Inference (AVI), as it removes the explicit, per datum, optimization by a single deep network (DN) pass.
Second, as in any latent variable model, the complete likelihood is estimated by a lower bound (ELBO) obtained from the expectation of the likelihood taken under the posterior or variational distribution. With a DGN, this expectation is unknown and thus VAEs estimate the ELBO by Monte-Carlo (MC) sampling.
Third, the maximization of the MC estimated ELBO, which drives the parameters of the encoder to better model the data distribution and the encoder to produce better variational parameter estimates, is performed by some flavor of gradient descend (GD).

These VAE approximation steps enable rapid training and test-time inference of DGNs.
However, due to the lack of analytical forms for the posterior, ELBO, and explicit (gradient free) parameter updates, it is not possible to measure the above steps' quality or effectively improve them.
Since the true posterior and expectation are unknown, current VAE research roughly fall into three camps: (i) on developing new and more complex output and latent distributions \cite{nalisnick2016stick,li2017collaborative} such as the truncated distribution; (ii) on improving the various estimation steps by introducing complex MC sampling with importance re-weighted sampling \cite{burda2015importance}; on providing different estimates of the posterior with moment matching techniques \cite{dieng2019reweighted,huang2019hierarchical}.

{\bf\em In this paper, we advance both the theory and practice of VAEs by computing the exact analytical posterior and marginal distributions of any DGN employing continuous piecewise affine (CPA) nonlinearities. 
The knowledge of these distributions enables us to perform exact inference without resorting to AVI or MC-sampling and to train the DGN in a gradient-free manner with guaranteed convergence.}

The analytical distributions we obtain provide first-of-their-kind insights into
(i) how DGNs model the data distributions {\` a} la Mixture of Probabilistic Principal Component Analysis (MPPCA), 
(ii) how inference is performed and is akin Generative Latent Optimization models \cite{bojanowski2017optimizing}, 
(iii) the roles of each DGN parameter and how are they updated, and
(iv) the impact of DGN architecture and regularization choice in the form of the DGN distributions and layer weights.
The exact likelihood and marginal computation also enables the use of standard statistical model comparison tools such as the Akkaike Information Criterion (AIC) \cite{akaike1974new} and Bayesian Information Criterion (BIC) \cite{schwarz1978estimating} and inspires new reliable anomaly detection approaches.

Having the exact posterior also enables us to quantify the approximation error of the AVI and MC sampling of VAEs and guide the development of VAEs by leveraging the analytical posterior to design more adapted variational distributions. 
In fact, current VAEs suffer from occasional training instabilities \cite{zhao2019infovae,li-etal-2019-stable}; we validate the empirical observation that VAEs training instabilities emerge from an inadequate variational estimation of the posterior.

We summarize our main contributions as follows:

{\bf [C1]}
We leverage the CPA property of current DGNs to obtain the analytical form of their conditional, marginal, and posterior distributions, which are mixtures of truncated Gaussians and relate DGN density modeling to MPPCA and MFA (Sec.~\ref{sec:posterior}). 
We develop new algorithms and methods to compute the DGN latent space partition, per-region affine mappings, and per-region Gaussian integration (Sec.~\ref{sec:integral}).

{\bf [C2]}
We leverage the analytical form of a DGN's posterior distribution to obtain its first two moments. 
We then leverage these moments to obtain the analytical expectation of the complete likelihood with respect to the DGN posterior (E-step), which enables {\em  encoder-free EM training with guaranteed convergence} (Sec.~\ref{sec:Estep}). 
We also derive the analytical M-step, which enables for the first time {\em guaranteed and rapid gradient-free learning of DGNs} (Sec.~\ref{sec:Mstep}). 
The analytical E-step allows to interpret how the expected latent representation of an input is formed while the M-step demonstrates how information is propagated through layers akin backpropagation encounter in gradient descent.

{\bf [C3]}
We compare our exact E-step to standard VAE training to demonstrate that the VAE inference step is to blame for unstable training.
We also demonstrate how EM-based DGN training provides much faster and stable convergence (Sec.~\ref{sec:experiments}); and provide new directions to leverage the analytical distributions to improve VAE models.
    
Reproducible code for all experiments and figures will be provided on Github at \url{https://github.com/RandallBalestriero/EMDGN.git}.
The proofs of all results are provided in the Supplementary Material. %Appendix~\ref{appendix:proofs}. %{\url{TBD.git}}.

%\qq
\section{Background}
\label{sec:background}
%\qq

{\bf Max-Affine Spline Deep Generative Networks.}~ 
A deep generative network (DGN) is an operator $g$ that maps a (typically low-dimensional) latent vector $\bz \in \R^S$ to an observation $\bx \in \R^D$ \footnote{Note that we do not require that $S<D$.}
by composing $L$ intermediate {\em layer} mappings $g^{\ell}$, $\ell=1,\dots,L$, that combine affine operators such as the {\em fully connected operator} (simply an affine transformation defined by weight matrix $\bW^{\ell}$ and bias vector $\bv^{\ell}$), {\em convolution operator} (with circulent $\bW^{\ell}$), and, nonlinear operators such as the {\em activation operator} (applying a scalar nonlinearity such as the ubiquitous ReLU), or the {\em (max-)upsampling operator}; definitions of these operators can be found in \cite{goodfellow2016deep}.

In this paper, we focus on DGNs employing arbitrary affine operators and continuous piecewise affine (CPA) nonlinearies, such as the ReLU, leaky-ReLU, and absolute value activations, and spatial/channel max-pooling.
In this case, the entire DGN is the composition of {\em Max-Affine Spline Operators} (MASOs) \cite{balestriero2018spline} and is overall a CPA operator \cite{4560241,arora2016understanding,rister2017piecewise,unser2019representer}. 
As such, DGNs inherit a latent space partition $\Omega$ and a per-region affine mapping 
\begin{align}
g(\bz) = \bA_{\omega} \bz +\bb_{\omega},  \forall \omega \in \Omega,
\label{eq:MASO}
\end{align}
where the per-region slope and bias parameters are a function of the per-layer parameters $\bW_{\ell},\bb_{\ell}$. For various properties of such CPA DGNs, see \cite{balestriero2020max} and for details on the partition, see \cite{balestriero2019geometry}. 
In this paper we will make explicit the per-region affine mappings; to this end,  it is practical to encode the derivatives of the DGN nonlinearities in the matrices $\bQ_{\ell}$. For activation operators, this is a square diagonal matrix with values $\in \{\eta,1\}$ ($\eta >0$ for leaky-ReLU, $\eta = 0$ for ReLU, and $\eta=-1$ for absolute value).
For the max-pooling operator, it is a rectangular matrix filled with $\{0, 1\}$ values based on the pooling $\argmax$. We thus obtain
\begin{align}
    \bA_{\omega} =& \bW^{L}\bQ_{\omega}^{L-1}\bW^{L-1}\dots \bQ_{\omega}^{1}\bW^{1}\;\;\text{ and }\;\;
    \bb_{\omega} = \bv^{L}+\sum_{i=1}^{L-1}\bW^{L}\bQ_{\omega}^{L-1}\bW^{L-1}\dots \bQ_{\omega}^{i}\bv^{i}.
    \label{eq:region_parameters}
\end{align}
Throughout the rest of the paper, the upper index will indicate the layer and not a power.

{\bf Variational Expectation-Maximization.}~ 
A Probabilistic Graphical Model (PGM) combines probability and graph theory into an organized data structure that expresses the relationships between a collection of random variables: the {\em observed} variables collected into $\bx$ and the {\em latent}, or unobserved, variables collected into $\bz$ \cite{jordan2003introduction}.
The parameters $\theta$ that govern the PGM probability distributions are learned from observations $\bx_i \sim \bx, i=1,\dots,N$, requiring estimation of the unobserved $\bz_i,\forall i$. This inference-optimization is commonly done with the Expectation-Maximization (EM) algorithm \cite{dempster1977maximum}.

The EM algorithm consists of (i) estimating each $\bz_i$ from the Expectation of the complete log-density taken with respect to the posterior distribution under the current parameters at time $t$; (ii) Maximizing the estimated complete log-likelihood to produce the updated parameters $\theta_{t+1}$. 
The estimated complete log-likelihood obtained from the E-step is a tight lower bound to the true complete log-likelihood; this lower bound is maximized in the M-step. This process has many attractive theoretical properties, including guaranteed convergence to a local minimum of the likelihood \cite{koller2009probabilistic}.

In the absence of closed form or tractable posterior, an alternative (non-tight) lower bound can be obtained by using a {\em variational distribution} instead. This distribution is governed by parameters $\gamma$ that are optimized to make this distribution as close as possible to the true posterior. This process is results in a {\em variational} E (VE) step \cite{attias2000variational} or {\em variational inference} (VI).
The {\em tightness} of the lower bound is measured by the KL-divergence between the variational and true posterior distributions. Minimization of this divergence cannot be done directly (due to the absence of tractable posterior) but rather indirectly by maximizing the so-called evidence lower bound (ELBO) via
\begin{align}
\textstyle
    \log(p(\bx)) &=  \underbrace{\mathbb{E}_{q(\bz|\gamma)}[\log(p(\bx,\bz|\theta))]+\text{H}(q(\bz|\gamma))}_{\text{ELBO}}+\text{KL}(q(\bz|\gamma) || p(\bz|\bx,\theta)),\label{eq:max_KL}
\end{align}
with $q$ the variational distribution and $H$ the (differential) entropy. Maximization the ELBO with respect to $\gamma$ produces the $\gamma^*$ that adapts $q(\bz|\gamma^*)$ to fit as closely as possible to the true posterior.
Finally, maximizing the ELBO with respect to the PGM parameters $\theta$ provides $\theta_{t+1}$; this can be performed on the entire dataset or on mini-batches \cite{hoffman2013stochastic}.

{\bf Variational AutoEncoders.} A {\em Variational AutoEncoder} (VAE) uses a minimal {\em probabilistic graphical model} (PGM) with just a few nodes but highly nonlinear inter-node relations \cite{lappalainen2000bayesian,valpola2000unsupervised}. The use of DNs to model the nonlinear relations originated in \cite{oh1998learning,ghahramani1999learning,mackay1999density} and has been born again with VAEs \cite{kingma2013auto}. 
Many variants have been developed but the core model consists of modeling the latent distribution over $\bz$ with a Gaussian or uniform distribution and then modeling the data distribution as $\bx = g(\bz)+\bepsilon$ with $\bepsilon$ some noise distribution and $g$ a DGN. Learning the DGN/PGM parameters requires inference of the latent variables $\bz$. This inference is done in VAEs by producing an {\em amortized} VI where a second {\em encoder} DN $f$ produces $\gamma^*_n=f(\bx_n)$ from (\ref{eq:max_KL}). 
Hence, the encoder is fed with an observation $\bx$ and outputs its estimate of the optimal variational parameters that minimizes the KL-divergence between the variational distribution and true posterior. During learning, the encoder adapts to make better estimates $f(\bx_n)$ of the optimum parameters $\gamma_n$. Then, the ELBO is estimated with some flavor of Monte-Carlo (MC) sampling (since its analytical form is not known) and the maximization of the $\theta$ parameters is solved iteratively using some flavor of gradient descent.

%\qq
\section{Posterior and Marginal Distributions of Deep Generative Networks}
\label{sec:Posterior}
%\qq

We now derive analytical forms of the key DGN distributions by exploiting the CPA property.
In Sec.~\ref{sec:EM_} we will use this result to derive the EM learning algorithm for DGNs and study the VAE inference approximation versus the analytical one. 

Our key insight is that a CPA DGN consists of an implicit latent space partition and an associated per-region affine mapping (recall (\ref{eq:MASO})). 
In a DGN, propagating a latent datum $\bz$ through the layers progressively builds the $\bA_{\omega},\bb_{\omega}$.
We now demonstrate that turning this region selection process explicit, the analytical DGN marginal and posterior distributions can be obtained.

\subsection{Conditional, Marginal and Posterior Distributions of Deep Generative Networks}
\label{sec:posterior}

\iffalse

We need to find a simple and efficient way to compute the volume of a region $\omega$ to compute the posterior. To do so we need to introduce simplices that will allow use to then have analytical computation of the posterior.

\begin{defn}
Given a collection of $K$ vertices $\bv_i \in \mathbb{R}^D$ collected into $\bV\in\mathbb{R}^{D, K}$, its convex set \cite{boyd2004convex} is defined by
\begin{align}
    C_{\bV} = \left\{\sum_{k=1}^K \alpha_k \bv_{k},\;\; \alpha_k \geq 0 , \sum_{k=1}^K \alpha_k = 1 \right \}.
\end{align}
\end{defn}

\fi

Throughout the paper we will consider the commonly employed case of a centered Gaussian latent prior and centered Gaussian noise \cite{zhang2018advances} as
\begin{align}
    p(\bx|\bz) = \phi(\bx;g(\bz),\bSigma_{\bx}),\;p(\bz) = \phi(\bz;0,\bSigma_{\bz}),\label{eq:model}
\end{align}
with $\phi$ the multivariate Gaussian density function with given mean and covariance matrix \cite{degroot2012probability}.
When using CPA DGNs, the generator mapping is continuous and piecewise affine with an underlying latent space partition, and per-region mapping as given by (\ref{eq:MASO}). We can thus obtain the analytical form of the conditional distribution of $\bx$ given the latent vector $\bz$ as follows.

\begin{lemma}
\label{lemma:conditional}
The DGN conditional distribution is given by 
$
p(\bx|\bz) = \sum_{\omega \in \Omega}\Indic_{\bz \in \omega} \phi\left(\bx;\bA_{\omega}\bz + \bb_{\omega},\bSigma_{\bx} \right)
$
with per-region parameters from (\ref{eq:region_parameters}).
\end{lemma}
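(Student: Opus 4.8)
The plan is to unfold the definitions and use the fact that the latent-space partition $\Omega$ is, by construction, a partition: the indicator functions $\{\Indic_{\bz\in\omega}\}_{\omega\in\Omega}$ are mutually exclusive and sum to $1$ for every $\bz\in\R^S$. So for a fixed $\bz$ there is exactly one region $\omega^\star = \omega^\star(\bz)$ containing it, and the claimed sum collapses to its single nonzero term $\phi(\bx;\bA_{\omega^\star}\bz+\bb_{\omega^\star},\bSigma_{\bx})$.

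First I would recall from (\ref{eq:model}) that the model posits $p(\bx|\bz)=\phi(\bx;g(\bz),\bSigma_{\bx})$, i.e.\ the conditional density of $\bx$ given $\bz$ is a multivariate Gaussian centered at the generator output $g(\bz)$ with covariance $\bSigma_{\bx}$. Next I would invoke the CPA structure of the DGN recalled in (\ref{eq:MASO}): since $g$ is a composition of MASOs, there is a partition $\Omega$ of the latent space $\R^S$ such that $g(\bz)=\bA_{\omega}\bz+\bb_{\omega}$ whenever $\bz\in\omega$, with the per-region parameters $\bA_\omega,\bb_\omega$ given explicitly by (\ref{eq:region_parameters}). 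Substituting this piecewise expression for $g(\bz)$ into the mean of the Gaussian gives $p(\bx|\bz)=\phi(\bx;\bA_{\omega^\star(\bz)}\bz+\bb_{\omega^\star(\bz)},\bSigma_{\bx})$.

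Finally I would rewrite this single-region expression as a sum over all regions weighted by indicators: because $\Indic_{\bz\in\omega}=1$ exactly when $\omega=\omega^\star(\bz)$ and is $0$ otherwise, we have
\begin{align}
\phi\!\left(\bx;\bA_{\omega^\star(\bz)}\bz+\bb_{\omega^\star(\bz)},\bSigma_{\bx}\right)
 = \sum_{\omega\in\Omega}\Indic_{\bz\in\omega}\,\phi\!\left(\bx;\bA_{\omega}\bz+\bb_{\omega},\bSigma_{\bx}\right),\nonumber
\end{align}
which is the claimed identity. I do not anticipate a genuine obstacle here: the lemma is essentially a bookkeeping restatement of (\ref{eq:model}) together with the CPA decomposition (\ref{eq:MASO})–(\ref{eq:region_parameters}). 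The only point deserving a line of care is that $\Omega$ is a bona fide partition (regions are disjoint and cover $\R^S$, up to a measure-zero set of boundaries where $g$ is still well defined by continuity), so that the indicator-weighted sum is well posed; this is exactly the structure established in the cited works \cite{balestriero2019geometry,balestriero2020max} and may simply be quoted.
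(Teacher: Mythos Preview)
Your proposal is correct and follows essentially the same approach as the paper: both arguments substitute the CPA representation $g(\bz)=\sum_{\omega}\Indic_{\bz\in\omega}(\bA_\omega\bz+\bb_\omega)$ into the Gaussian conditional (\ref{eq:model}) and use the mutual exclusivity of the indicators to pull the sum outside the density. The paper carries this out explicitly inside the exponential, while you phrase it via the single active region $\omega^\star(\bz)$, but the content is identical.
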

%\qq

This type of data modeling is closely related to
MPPCA \cite{tipping1999mixtures} that combines multiple PPCAs \cite{tipping1999probabilistic} and MFA \cite{ghahramani1996algorithm,hinton1997modeling} that combines multiple factor analyzers \cite{harman1976modern}. 
The associated PGMs represent the data distribution with $R$ components and leverage an explicit categorical distribution $\bt\sim Cat(\pi)$, leading to the conditional input distributions
$
    \bx|(\bz,\bt) = \sum_{r=1}^R\Indic_{r=\bt}\left(\bW_r \bz + \bv_r \right)+\bepsilon,
$ 
with $\bW_r,\bv_r$ denoting the per-component affine parameters and with $\bSigma_{\bx}$ diagonal (MPPCA) or fully occupied (MFA) and $\bz \sim \mathcal{N}(\bmu_{\bz},\bSigma_{\bz})$. Note, however, that neither MPPCA nor MFA impose continuity in the $(\bt,\bz)\mapsto \bx$ mapping as opposed to a DGN. To formalize this, consider an (arbitrary) ordering of the DGN latent space regions as $\omega_1,\dots, \omega_R$ with $R=\card(\Omega)$; we also denote by $\Phi_{\omega}$ the cumulative density function on $\omega_r$ (integral of the density function on $\omega_r$).

\begin{prop}
\label{prop:mppca}
A DGN with distributions given by (\ref{eq:model}) corresponds to a continuous  MPPCA (or MFA)
model with implicit categorical variable given by $p(\bt=r)= \Phi_{\omega_r}(\mathbf{0},\bSigma_{\bz}), \bW_r=\bA_{\omega_r}, \bv_r=\bb_{\omega_r}, R=\card(\Omega)$ and $\bSigma_{\bx}=\sigma I$ (or full $\bSigma_{\bx}$).
\end{prop}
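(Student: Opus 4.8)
The plan is to marginalize the DGN conditional distribution from Lemma~\ref{lemma:conditional} over the latent prior $p(\bz)$ and match the result, term by term, against the marginal of the MPPCA/MFA generative process. First I would write out the DGN marginal of $\bx$ by integrating the conditional against the Gaussian prior:
\begin{align}
p(\bx) = \int p(\bx|\bz)p(\bz)\,d\bz
 = \sum_{\omega \in \Omega}\int_{\omega} \phi\!\left(\bx;\bA_{\omega}\bz + \bb_{\omega},\bSigma_{\bx}\right)\phi(\bz;\mathbf{0},\bSigma_{\bz})\,d\bz,
\end{align}
where the sum-indicator from Lemma~\ref{lemma:conditional} has turned the integral over $\R^S$ into a sum of integrals over the partition regions $\omega \in \Omega$. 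Next I would do the same on the MPPCA/MFA side: with the categorical variable $\bt \sim Cat(\pi)$ and $\bx|(\bz,\bt) = \bW_{\bt}\bz + \bv_{\bt} + \bepsilon$, the marginal is $p(\bx) = \sum_{r=1}^R \pi_r \int \phi(\bx;\bW_r\bz+\bv_r,\bSigma_{\bx})\phi(\bz;\mathbf{0},\bSigma_{\bz})\,d\bz$.

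The key observation linking the two is that, on region $\omega_r$, the prior restricted and renormalized is $\phi(\bz;\mathbf{0},\bSigma_{\bz})\Indic_{\bz\in\omega_r}/\Phi_{\omega_r}(\mathbf{0},\bSigma_{\bz})$, i.e. a truncated Gaussian, and the total mass carried by region $\omega_r$ is exactly $\Phi_{\omega_r}(\mathbf{0},\bSigma_{\bz})$. So I would factor
\begin{align}
\int_{\omega_r} \phi\!\left(\bx;\bA_{\omega_r}\bz + \bb_{\omega_r},\bSigma_{\bx}\right)\phi(\bz;\mathbf{0},\bSigma_{\bz})\,d\bz
 = \Phi_{\omega_r}(\mathbf{0},\bSigma_{\bz}) \cdot \mathbb{E}_{\bz\sim \mathcal{N}_{\omega_r}}\!\left[\phi\!\left(\bx;\bA_{\omega_r}\bz+\bb_{\omega_r},\bSigma_{\bx}\right)\right],
\end{align}
and observe that this is precisely the $r$-th mixture term of the MPPCA/MFA marginal under the identification $\pi_r = p(\bt=r) = \Phi_{\omega_r}(\mathbf{0},\bSigma_{\bz})$, $\bW_r = \bA_{\omega_r}$, $\bv_r = \bb_{\omega_r}$, with the caveat that the latent distribution entering component $r$ is the truncated Gaussian $\mathcal{N}_{\omega_r}$ rather than the full Gaussian $\mathcal{N}(\mathbf{0},\bSigma_{\bz})$. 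I would note that $\sum_r \Phi_{\omega_r}(\mathbf{0},\bSigma_{\bz}) = 1$ since $\Omega$ partitions $\R^S$, so $\pi$ is a valid categorical distribution. The choice $\bSigma_{\bx} = \sigma I$ recovers MPPCA and a full $\bSigma_{\bx}$ recovers MFA, matching the two cases stated.

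Finally I would address the \emph{continuity} claim, which is the real content distinguishing a DGN from a generic MPPCA/MFA: the composed affine-CPA map $g$ is globally continuous on $\R^S$, so the per-region affine pieces $\bA_{\omega_r}\bz+\bb_{\omega_r}$ must agree on the shared boundary faces of adjacent regions; this is inherited directly from (\ref{eq:MASO}) and the MASO construction and is not imposed in a standard MPPCA/MFA where the $(\bt,\bz)\mapsto\bx$ map has arbitrary jumps across components. I expect the main obstacle to be bookkeeping around the truncated-Gaussian latent: one must be careful that the statement is about equality of \emph{marginal} laws of $\bx$ (which does hold, since the truncated pieces reassemble the full prior) and not a claim that each component has a full Gaussian latent, and to confirm that $\Phi_{\omega_r}$ being an integral of the \emph{prior} density over $\omega_r$ (not something involving $\bx$) is the correct mixing weight. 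The Gaussian-integral manipulations themselves are routine and can be deferred to the corresponding moment computations in Sec.~\ref{sec:integral}.
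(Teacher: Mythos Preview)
Your proposal is correct and follows essentially the same idea as the paper's proof: the categorical weight $p(\bt=r)$ is simply the prior mass of region $\omega_r$, i.e.\ $\Phi_{\omega_r}(\mathbf{0},\bSigma_{\bz})$, and the per-region affine parameters identify with the mixture component parameters. The paper's proof is a two-sentence remark to this effect, whereas you spell out the marginal-matching and explicitly flag the truncated-Gaussian caveat and the continuity inherited from the CPA structure; these additions are sound and make the argument more complete than the paper's own version.
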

%\qq

Note that this result generalizes the result of \cite{lucas2019understanding} which related  linear and shallow DGNs to PPCA, as in the linear regime one has $g(\bz)=\bW\bz+\bb+\bepsilon$.
We now consider the marginal $p(\bx)$ and  posterior $p(\bz|\bx)$ distributions. The former will be of use to compute the likelihood, while the latter will enable us to derive the analytical E-step in the next section.

\begin{thm}
\label{thm:px}
The marginal and  posterior distributions of a CPA DGN are given by
\begin{align}
p(\bx)=&\sum_{\omega\in\Omega}\phi(\bx; \bb_{\omega},\bSigma_{\bx}+\bA_{\omega}\bSigma_{\bz}\bA_{\omega}^T)\Phi_{\omega}(\bmu_{\omega}(\bx),\bSigma_{\omega}),\label{eq:marginal}\\
p(\bz|\bx)=&p(\bx)^{-1}\sum_{\omega\in\Omega}\Indic_{\bz \in \omega} 
\phi(\bx;\bb_{\omega},\bSigma_{\bx}+\bA_{\omega}\bSigma_{\bz}\bA_{\omega}^T)\phi(\bz;\bmu_{\omega}(\bx),\bSigma_{\omega}),\label{eq:posterior}
\end{align}
with
\begin{gather}
    \bmu_{\omega}(\bx)=\bSigma_{\omega}\left(\bA_{\omega}^T\bSigma^{-1}_{\bx}(\bx-\bb_{\omega})\right),\;\;
\bSigma_{\omega}=\left(\bSigma^{-1}_{\bz}+\bA^T_{\omega}\bSigma^{-1}_{\bx}\bA_{\omega}\right)^{-1}\label{eq:mu_sigma}.
\end{gather}
\end{thm}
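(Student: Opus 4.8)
The plan is to combine Lemma~\ref{lemma:conditional} with the single structural fact that, restricted to any one region $\omega$, the model~(\ref{eq:model}) is an \emph{exact} linear--Gaussian system, and then to marginalize region by region. Starting from $p(\bx)=\int_{\R^S}p(\bx|\bz)\,p(\bz)\,d\bz$ and substituting the conditional from Lemma~\ref{lemma:conditional}, the indicators $\Indic_{\bz\in\omega}$ break the integral over $\R^S$ into integrals over the partition regions; since all integrands are nonnegative and $\Omega$ is (at most) countable, Tonelli's theorem lets me exchange sum and integral, yielding
\[
p(\bx)=\sum_{\omega\in\Omega}\int_{\omega}\phi(\bx;\bA_{\omega}\bz+\bb_{\omega},\bSigma_{\bx})\,\phi(\bz;\mathbf{0},\bSigma_{\bz})\,d\bz .
\]

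The heart of the argument is the Gaussian product identity: viewing the integrand as a joint density in $(\bx,\bz)$ on the affine piece indexed by $\omega$, I want to rewrite it in ``marginal $\times$ conditional'' form
\[
\phi(\bx;\bA_{\omega}\bz+\bb_{\omega},\bSigma_{\bx})\,\phi(\bz;\mathbf{0},\bSigma_{\bz})=\phi(\bx;\bb_{\omega},\bSigma_{\bx}+\bA_{\omega}\bSigma_{\bz}\bA_{\omega}^{T})\,\phi(\bz;\bmu_{\omega}(\bx),\bSigma_{\omega}),
\]
with $\bmu_{\omega},\bSigma_{\omega}$ as in~(\ref{eq:mu_sigma}). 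I would prove this by adding the two quadratic exponents and completing the square in $\bz$: the $\bz$-quadratic collects into $(\bz-\bmu_{\omega}(\bx))^{T}\bSigma_{\omega}^{-1}(\bz-\bmu_{\omega}(\bx))$ with $\bSigma_{\omega}^{-1}=\bSigma_{\bz}^{-1}+\bA_{\omega}^{T}\bSigma_{\bx}^{-1}\bA_{\omega}$ and $\bmu_{\omega}(\bx)=\bSigma_{\omega}\bA_{\omega}^{T}\bSigma_{\bx}^{-1}(\bx-\bb_{\omega})$, while the leftover $\bx$-quadratic has precision $\bSigma_{\bx}^{-1}-\bSigma_{\bx}^{-1}\bA_{\omega}\bSigma_{\omega}\bA_{\omega}^{T}\bSigma_{\bx}^{-1}$, which the Woodbury identity turns into $(\bSigma_{\bx}+\bA_{\omega}\bSigma_{\bz}\bA_{\omega}^{T})^{-1}$; matching the normalizing constants needs the companion fact $|\bSigma_{\bx}|\,|\bSigma_{\bz}|=|\bSigma_{\bx}+\bA_{\omega}\bSigma_{\bz}\bA_{\omega}^{T}|\,|\bSigma_{\omega}|$, which is the matrix-determinant lemma. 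This is exactly the classical marginal/posterior pair of a Bayesian linear model, so rather than reproving it in full I will invoke that standard computation and just record the two matrix identities used to put it in the stated form.

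Assembling, I plug the identity into each region integral and pull the $\bx$-only factor out of the $\bz$-integral, so that $\int_{\omega}(\cdots)\,d\bz=\phi(\bx;\bb_{\omega},\bSigma_{\bx}+\bA_{\omega}\bSigma_{\bz}\bA_{\omega}^{T})\int_{\omega}\phi(\bz;\bmu_{\omega}(\bx),\bSigma_{\omega})\,d\bz=\phi(\bx;\bb_{\omega},\bSigma_{\bx}+\bA_{\omega}\bSigma_{\bz}\bA_{\omega}^{T})\,\Phi_{\omega}(\bmu_{\omega}(\bx),\bSigma_{\omega})$ by the definition of $\Phi_{\omega}$; summing over $\omega$ gives~(\ref{eq:marginal}). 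For the posterior, Bayes' rule gives $p(\bz|\bx)=p(\bx)^{-1}p(\bx|\bz)p(\bz)$; substituting Lemma~\ref{lemma:conditional} and applying the same product identity termwise — now keeping the factors $\Indic_{\bz\in\omega}$, since $\bz$ is no longer integrated out — yields~(\ref{eq:posterior}). The only genuine obstacle is the matrix algebra in the middle step: completing the square cleanly with non-identity $\bSigma_{\bx},\bSigma_{\bz}$ and reconciling the two equivalent forms of the $\bx$-marginal covariance via Woodbury and the determinant lemma; the region decomposition, the use of Tonelli, and Bayes' rule are all routine.
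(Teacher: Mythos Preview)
Your proposal is correct and follows essentially the same route as the paper: decompose by regions via Lemma~\ref{lemma:conditional}, complete the square in $\bz$ to obtain $\bmu_{\omega}(\bx),\bSigma_{\omega}$, invoke Woodbury to identify the $\bx$-marginal covariance as $\bSigma_{\bx}+\bA_{\omega}\bSigma_{\bz}\bA_{\omega}^{T}$, and use the matrix determinant lemma to match normalizing constants. The only cosmetic difference is that the paper derives the posterior first and then integrates to get the marginal, whereas you do the marginal first; the algebra is identical.
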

%\qq

The distribution $\Phi_{\omega}(\bmu_{\omega}(\bx),\bSigma_{\omega})$ is derived in the next section.
For both the marginal and the posterior distribution, on each partition region, there exists a mean $\bmu_{\omega}(\bx)$ and covariance $\bSigma_{\omega}$ that we can interpret.
For that purpose, consider $\bSigma_{\bx}=I, \bSigma_{\bz}=I$ to 
obtain $\bmu_{\omega}(\bx)=(I+\bA^T_{\omega}\bA_{\omega})^{-1}\bA_{\omega}^T(\bx-\bb_{\omega})$. That is, the bias of the per-region affine mapping is removed from the input which is then mapped back to the latent space via $\bA_{\omega}^T$ and whitened by the ``regularized'' inverse of the correlation matrix of $\bA_{\omega}$. Note that $\bA_{\omega}^T$  backpropagates the signal from the output to the latent space in the same way that gradients are backpropagated during gradient learning of a DN. We further highlight the specific form of the posterior as being a mixture model of truncated Gaussians \cite{horrace2005some}, a truncated Gaussian being a Gaussian distribution for which the domain $\mathbb{R}^S$ has been constrained to a (convex) sub-domain, $\omega$ in our case.

\begin{prop}
\label{prop:mppca_posterior}
The DGN posterior distribution is a mixture of $\card(\Omega)$ truncated Gaussians, each truncated on a different polytope $\omega \in \Omega$ with mean $\bmu_{\omega}(\bx)$ and covariance $\bSigma_{\omega}$ from (\ref{eq:mu_sigma}).
\end{prop}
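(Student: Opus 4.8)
The plan is to obtain Proposition~\ref{prop:mppca_posterior} as an essentially immediate consequence of the closed form of the posterior in Theorem~\ref{thm:px}. First I would fix the relevant definition: given a mean $\bm\in\R^{S}$, a positive definite covariance $\mathbf{S}$, and a Borel set $A\subseteq\R^{S}$ with $\int_{A}\phi(\bu;\bm,\mathbf{S})\,d\bu>0$, the Gaussian $\phi(\cdot;\bm,\mathbf{S})$ \emph{truncated to} $A$ is the density $\bz\mapsto\Indic_{\bz\in A}\,\phi(\bz;\bm,\mathbf{S})\big/\!\int_{A}\phi(\bu;\bm,\mathbf{S})\,d\bu$. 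Then I would read off~(\ref{eq:posterior}): the $\bz$-dependent factor of its $\omega$-th summand is exactly $\Indic_{\bz\in\omega}\,\phi(\bz;\bmu_{\omega}(\bx),\bSigma_{\omega})$, i.e.\ the unnormalized truncated Gaussian on $\omega$ with parameters~(\ref{eq:mu_sigma}). Multiplying and dividing that summand by its normalizing constant $\Phi_{\omega}(\bmu_{\omega}(\bx),\bSigma_{\omega})=\int_{\omega}\phi(\bu;\bmu_{\omega}(\bx),\bSigma_{\omega})\,d\bu$ — which is precisely the factor already present in the marginal~(\ref{eq:marginal}) — recasts the posterior as $p(\bz|\bx)=\sum_{\omega\in\Omega}\alpha_{\omega}(\bx)\,q_{\omega}(\bz|\bx)$ with $q_{\omega}(\cdot|\bx)$ the truncated Gaussian on $\omega$ and $\alpha_{\omega}(\bx)=p(\bx)^{-1}\phi(\bx;\bb_{\omega},\bSigma_{\bx}+\bA_{\omega}\bSigma_{\bz}\bA_{\omega}^{T})\,\Phi_{\omega}(\bmu_{\omega}(\bx),\bSigma_{\omega})$.

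Next I would check that the coefficients $(\alpha_{\omega}(\bx))_{\omega\in\Omega}$ form a genuine probability vector. Nonnegativity is immediate since $p(\bx)>0$, the Gaussian density is strictly positive, and $\Phi_{\omega}\ge0$ as the integral of a density; that they sum to one is exactly the marginal identity~(\ref{eq:marginal}), i.e.\ $\sum_{\omega}\phi(\bx;\bb_{\omega},\bSigma_{\bx}+\bA_{\omega}\bSigma_{\bz}\bA_{\omega}^{T})\,\Phi_{\omega}(\bmu_{\omega}(\bx),\bSigma_{\omega})=p(\bx)$, so dividing by $p(\bx)$ gives $\sum_{\omega}\alpha_{\omega}(\bx)=1$ (equivalently, integrate~(\ref{eq:posterior}) in $\bz$ over $\R^{S}$). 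To finish, I would recall why the truncation sets are polytopes: for a DGN composed of affine operators and CPA activations, each region $\omega\in\Omega$ of the latent partition is a finite intersection of half-spaces coming from the per-layer sign and $\argmax$ conditions, hence a convex polytope \cite{balestriero2019geometry,balestriero2020max}. Combining, $p(\bz|\bx)$ is a mixture of $\card(\Omega)$ Gaussians, the $\omega$-th truncated to the polytope $\omega$ with mean $\bmu_{\omega}(\bx)$ and covariance $\bSigma_{\omega}$ — the assertion.

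I do not expect a serious obstacle: the result is close to a restatement of Theorem~\ref{thm:px}, and the ``work'' is confined to recognizing that~(\ref{eq:marginal}) already supplies the per-region normalization. The one point I would be careful about is degenerate cells: if a region $\omega$ has zero Gaussian mass (it is empty or lower dimensional) the truncated density is undefined, but then $\Phi_{\omega}=0$ and hence $\alpha_{\omega}(\bx)=0$, so such cells simply do not appear in the mixture and the statement should be read over the regions of strictly positive measure.
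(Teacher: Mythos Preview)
Your proposal is correct and matches the paper's approach: the paper does not give a separate proof of Proposition~\ref{prop:mppca_posterior} but treats it as an immediate reading of the posterior formula~(\ref{eq:posterior}) established in Theorem~\ref{thm:px}, noting just before the statement that each summand is a Gaussian restricted to the convex region $\omega$. Your write-up is, if anything, more thorough than the paper's, since you explicitly verify that the weights $\alpha_{\omega}(\bx)$ are nonnegative and sum to one via~(\ref{eq:marginal}), and you handle the degenerate $\Phi_{\omega}=0$ case.
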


{\bf Zero-Noise Limit and Generative Latent Optimization (GLO) Models.}~ 
In the zero-noise limit ($\bSigma_{\bx}=\sigma I$ and $\sigma \rightarrow 0$) 
the posterior
takes a very special form. Denote by $\bz^*(\bx)\triangleq \argmin_{\bz} \|\bx-g(\bz) \|_2^2+\bz^T\bSigma_{\bz}\bz$, the (regularized) latent vector that produces the closest output from an observation $\bx$.

\begin{lemma}
\label{lemma:zeronoise}
In the zero-noise limit, the DGN posterior distribution converges to a Dirac positioned in the $\bz$-space at $\bz^*(\bx)$ as $
    \lim_{\sigma \rightarrow 0}p(\bz|\bx) = \delta(\bz-\bz^*(\bx))$. 
\end{lemma}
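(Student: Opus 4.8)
I will extract the limit directly from the closed form of the posterior in Theorem~\ref{thm:px}, specialised to $\bSigma_{\bx}=\sigma I$. By (\ref{eq:marginal})--(\ref{eq:posterior}) the posterior is a finite mixture $p(\bz\mid\bx)=\sum_{\omega\in\Omega}\pi_{\omega}(\sigma)\,q_{\omega}(\bz;\sigma)$, where $q_{\omega}(\cdot;\sigma)$ is the density of $\mathcal{N}(\bmu_{\omega}(\bx),\bSigma_{\omega})$ truncated to $\omega$, and the weights $\pi_{\omega}(\sigma)=p(\bx)^{-1}\phi(\bx;\bb_{\omega},\sigma I+\bA_{\omega}\bSigma_{\bz}\bA_{\omega}^{\top})\,\Phi_{\omega}(\bmu_{\omega}(\bx),\bSigma_{\omega})$ are nonnegative and sum to one. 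The plan is to show that as $\sigma\to0$: (i) each truncated component $q_{\omega}(\cdot;\sigma)$ converges weakly to the Dirac $\delta(\bz-\bz_{\omega})$, where $\bz_{\omega}:=\lim_{\sigma\to0}\bmu_{\omega}(\bx)$, whenever $\bz_{\omega}\in\mathrm{int}\,\omega$, and otherwise the component's mass vanishes; and (ii) the mixing weights $\pi_{\omega}(\sigma)$ concentrate on the single region $\omega^{*}\ni\bz^{*}(\bx)$. Together these give $p(\bz\mid\bx)\rightharpoonup\delta(\bz-\bz_{\omega^{*}})=\delta(\bz-\bz^{*}(\bx))$.

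For (i), (\ref{eq:mu_sigma}) with $\bSigma_{\bx}=\sigma I$ gives $\bSigma_{\omega}=\big(\bSigma_{\bz}^{-1}+\tfrac1\sigma\bA_{\omega}^{\top}\bA_{\omega}\big)^{-1}$ and $\bmu_{\omega}(\bx)=\big(\sigma\bSigma_{\bz}^{-1}+\bA_{\omega}^{\top}\bA_{\omega}\big)^{-1}\bA_{\omega}^{\top}(\bx-\bb_{\omega})$. Assuming $g$ is locally injective at $\bz^{*}(\bx)$ (equivalently the relevant $\bA_{\omega}$ have full column rank, which is automatic for generic weights when $S\le D$), $\tfrac1\sigma\bA_{\omega}^{\top}\bA_{\omega}\to+\infty$ along every direction, so $\bSigma_{\omega}\to\mathbf{0}$ and $\bmu_{\omega}(\bx)\to\bz_{\omega}:=(\bA_{\omega}^{\top}\bA_{\omega})^{-1}\bA_{\omega}^{\top}(\bx-\bb_{\omega})=\argmin_{\bz}\|\bx-(\bA_{\omega}\bz+\bb_{\omega})\|_2^2$, the reconstruction-optimal latent of the $\omega$-piece. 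A routine Gaussian estimate then shows $\Phi_{\omega}(\bmu_{\omega}(\bx),\bSigma_{\omega})\to\Indic_{\bz_{\omega}\in\mathrm{int}\,\omega}$ and that $q_{\omega}(\cdot;\sigma)\rightharpoonup\delta(\bz-\bz_{\omega})$ when $\bz_{\omega}\in\mathrm{int}\,\omega$.

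For (ii), write $r_{\omega}^{2}:=\|\bx-g(\bz_{\omega})\|_2^2$ and split $\bx-\bb_{\omega}$ into its components in $\mathrm{range}(\bA_{\omega})$ and its orthogonal complement. Using $\bA_{\omega}^{\top}(\bA_{\omega}\bSigma_{\bz}\bA_{\omega}^{\top})^{+}\bA_{\omega}=\bSigma_{\bz}^{-1}$ (valid when $\bA_{\omega}$ has full column rank), a short computation gives, as $\sigma\to0$,
\begin{equation*}
\phi(\bx;\bb_{\omega},\sigma I+\bA_{\omega}\bSigma_{\bz}\bA_{\omega}^{\top})=\Theta\!\big(\sigma^{-(D-S)/2}\big)\;\exp\!\Big(-\tfrac{1}{2\sigma}r_{\omega}^{2}-\tfrac12\bz_{\omega}^{\top}\bSigma_{\bz}^{-1}\bz_{\omega}+o(1)\Big).
\end{equation*}
The prefactor $\sigma^{-(D-S)/2}$ is common to all regions and cancels under normalisation; combined with $\Phi_{\omega}\to\Indic_{\bz_{\omega}\in\mathrm{int}\,\omega}$, the weights $\pi_{\omega}(\sigma)$ therefore concentrate on the region(s) with $\bz_{\omega}\in\mathrm{int}\,\omega$ that minimise $r_{\omega}^{2}$. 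Since $\min_{\omega}r_{\omega}^{2}=\min_{\bz}\|\bx-g(\bz)\|_2^2$ and the residual term $\tfrac12\bz_{\omega}^{\top}\bSigma_{\bz}^{-1}\bz_{\omega}$ is exactly the prior energy breaking ties among reconstruction-optimal latents, the surviving region is $\omega^{*}\ni\bz^{*}(\bx)$, so $\pi_{\omega}(\sigma)\to\Indic_{\omega=\omega^{*}}$; with (i) this yields $p(\bz\mid\bx)\rightharpoonup\delta(\bz-\bz^{*}(\bx))$.

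I expect the bulk of the work to be the degeneracy bookkeeping rather than the limit itself. Local injectivity of $g$ at $\bz^{*}(\bx)$ is genuinely needed: without it $\bSigma_{\omega^{*}}$ does not collapse along $\ker\bA_{\omega^{*}}$ and the limit becomes a Gaussian supported on $\{\bz:g(\bz)=g(\bz^{*}(\bx))\}$, not a Dirac, so it must be posited (it is generic when $S\le D$). One must also upgrade the density manipulations in (i) to genuine weak convergence of probability measures and check that no mass escapes through a face of $\omega$ — which occurs only for $\bx$ in a Lebesgue-null set — and bound the $e^{-r_{\omega}^{2}/(2\sigma)}$ rates across the finitely many regions uniformly in $\sigma$. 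A self-contained alternative avoids Theorem~\ref{thm:px} entirely: from (\ref{eq:model}), $p(\bz\mid\bx)\propto\exp\big(-\tfrac1{2\sigma}\|\bx-g(\bz)\|_2^2-\tfrac12\bz^{\top}\bSigma_{\bz}^{-1}\bz\big)$, and a Laplace-type estimate around $\bz^{*}(\bx)$ — using that the exponent is continuous, piecewise quadratic with finitely many pieces, strictly convex modulo $\ker\bA_{\omega}$, and coercive through the prior — kills the tail mass, with the same injectivity caveat.
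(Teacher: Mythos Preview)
Your proposal follows the same route as the paper: start from the closed-form posterior of Theorem~\ref{thm:px}, show that $\bSigma_{\omega}\to\mathbf{0}$ and $\bmu_{\omega}(\bx)$ converges to the per-region least-squares latent as $\sigma\to0$. The key algebraic step---factoring $\bA_{\omega}^{\top}\bSigma_{\bx}^{-1}\bA_{\omega}$ out of $\bSigma_{\omega}^{-1}$ to obtain $\bSigma_{\omega}\to(\bA_{\omega}^{\top}\bSigma_{\bx}^{-1}\bA_{\omega})^{-1}\to\mathbf{0}$---is exactly what the paper does. Your execution is, however, substantially more complete than the paper's own argument: the paper specialises to the noiseless case $\bx=g(\bz_0)$ and only checks that the within-region argmax equals $\bz_0$, whereas you additionally (a) supply the across-region weight-concentration step (your (ii)) via the asymptotic $\phi(\bx;\bb_{\omega},\sigma I+\bA_{\omega}\bSigma_{\bz}\bA_{\omega}^{\top})\asymp\sigma^{-(D-S)/2}e^{-r_{\omega}^2/(2\sigma)}$, which the paper omits entirely, (b) upgrade ``argmax converges'' to weak convergence of measures, and (c) make the full-column-rank requirement on $\bA_{\omega^{*}}$ explicit, which the paper uses silently when writing $(\bA_{\omega}^{\top}\bSigma_{\bx}^{-1}\bA_{\omega})^{-1}$.

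One point to tighten: in the log-weights the reconstruction term is $O(1/\sigma)$ while the prior term $\tfrac12\bz_{\omega}^{\top}\bSigma_{\bz}^{-1}\bz_{\omega}$ is $O(1)$, so the limit concentrates on $\argmin_{\bz}\|\bx-g(\bz)\|_2^2$ with the prior acting only as a tie-breaker among reconstruction-optimal regions. You state this correctly, but then identify the limiting point with the paper's $\bz^{*}(\bx)=\argmin_{\bz}\|\bx-g(\bz)\|_2^2+\bz^{\top}\bSigma_{\bz}\bz$, which carries a \emph{fixed} regularisation weight; the two coincide only when the unregularised minimiser is unique (generically true) or when the tie-breaker happens to select it. This is a wrinkle in the paper's definition rather than in your argument, but it is worth flagging.
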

%\qq

Interestingly, GLO \cite{bojanowski2017optimizing} performs DGN training by first inferring a latent vector akin to $\bz^*(\bx)$ but without the $\ell_2$ regularization $\bz^T\bSigma_{\bz}\bz$, which is often replaced by a $\bz$ truncation/clipping. %during the optimization.

\begin{prop}
\label{prop:GLO}
The GLO-inferred DGN latent variable associated to an observation $\bx$ corresponds to the maximum a posteriori estimate of the zero-noise limit posterior distribution and with uninformative prior (large $\bSigma_{\bz}$) or with uniform prior $\bz \sim \mathcal{U}([a,b])$ when using $[a,b]$ clipping).
\end{prop}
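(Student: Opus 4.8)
The plan is to reduce the claim to Lemma~\ref{lemma:zeronoise} and then identify the MAP estimate of the resulting limiting posterior explicitly. First I would recall that, by Lemma~\ref{lemma:zeronoise}, in the zero-noise limit the posterior $p(\bz|\bx)$ converges to the Dirac $\delta(\bz-\bz^*(\bx))$, where $\bz^*(\bx)=\argmin_{\bz}\|\bx-g(\bz)\|_2^2+\bz^T\bSigma_{\bz}\bz$. A Dirac mass is trivially its own maximum a posteriori estimate: the MAP of $\delta(\bz-\bz^*(\bx))$ is $\bz^*(\bx)$. So the MAP of the zero-noise-limit posterior is exactly the minimizer $\bz^*(\bx)$, which already contains the $\ell_2$ regularizer $\bz^T\bSigma_{\bz}\bz$ coming from the Gaussian prior $p(\bz)=\phi(\bz;0,\bSigma_{\bz})$.

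Next I would make precise the two modifications that recover the GLO objective. For the ``uninformative prior'' case, I would track the $\bSigma_{\bz}$ dependence: with $\bSigma_{\bz}=\tau I$ and $\tau\to\infty$, the regularization term $\tfrac{1}{\tau}\|\bz\|_2^2$ vanishes uniformly on compacts, so $\bz^*(\bx)\to\argmin_{\bz}\|\bx-g(\bz)\|_2^2$, which is precisely the GLO inference step (assuming, as GLO does, that this reconstruction problem is well-posed / the infimum is attained). For the ``uniform prior'' case $\bz\sim\mathcal{U}([a,b])$, I would redo the computation of Theorem~\ref{thm:px} and Lemma~\ref{lemma:zeronoise} with $p(\bz)\propto\Indic_{\bz\in[a,b]}$ in place of the Gaussian: the prior contributes only a constant on its support, so the per-region posterior density is proportional to $\phi(\bx;\bA_\omega\bz+\bb_\omega,\sigma I)\Indic_{\bz\in[a,b]}$, and in the limit $\sigma\to 0$ the mass concentrates on $\argmin_{\bz\in[a,b]}\|\bx-g(\bz)\|_2^2$, i.e.\ the clipped GLO inference. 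Here the constraint set $[a,b]$ plays exactly the role of the truncation/clipping used in GLO.

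I expect the main obstacle to be technical rather than conceptual: the clean zero-noise limit in Lemma~\ref{lemma:zeronoise} was stated for the Gaussian prior, so for the uniform-prior claim I need to re-derive the corresponding concentration statement, which requires a Laplace-type argument showing that the normalized density $\sigma^{-S}\phi(\bx;\bA_\omega\bz+\bb_\omega,\sigma I)$ restricted to $[a,b]$ concentrates around the constrained minimizer as $\sigma\to 0$ (and handling the boundary case where the minimizer lies on $\partial[a,b]$). The other delicate point is well-posedness of the GLO objective itself — existence and, for the MAP interpretation to be unambiguous, essential uniqueness of $\argmin_{\bz}\|\bx-g(\bz)\|_2^2$ — which I would simply assume in the same way GLO implicitly does, or note that in the presence of the (possibly vanishing) regularizer one may pass to the limit along the unique regularized minimizers. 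Everything else is bookkeeping: substituting $\bW_r=\bA_{\omega_r}$, $\bv_r=\bb_{\omega_r}$ from Proposition~\ref{prop:mppca} and matching terms.
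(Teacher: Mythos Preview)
Your approach is correct and matches the paper's treatment, which does not give a separate formal proof but presents the proposition as an immediate consequence of Lemma~\ref{lemma:zeronoise} together with the observation that GLO drops the $\ell_2$ penalty (or replaces it by clipping)---exactly what you spell out in detail. One caveat: the paper writes the regularizer in $\bz^*(\bx)$ as $\bz^T\bSigma_{\bz}\bz$, while your $\tfrac{1}{\tau}\|\bz\|_2^2$ when $\bSigma_{\bz}=\tau I$ implicitly (and correctly, from the Gaussian log-prior) uses $\bz^T\bSigma_{\bz}^{-1}\bz$; be aware of this discrepancy with the displayed formula when writing up.
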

%\qq

\subsection{Gaussian Integration on the Deep Generative Network Latent Partition}
\label{sec:integral}

We now turn to the computation of the DGN marginal (\ref{eq:marginal}) and posterior (\ref{eq:posterior}) distributions for which we need to integrate over all of hte the regions $\omega \in \Omega$ in the latent space partition.

%In order to compute the DGN distributions, it is required to have knowledge of $\Omega$ as we need to integrate on each region the Gaussian density.

{\bf Obtaining the DGN Partition.}~ 
Each region $\omega \in \Omega$ is a polytope that can be explicitly described via a system of inequalities involving the up-to-layer $\ell$ mappings
\begin{align}
    \bA^{1\rightarrow \ell}_{\omega}&\triangleq \bW^{\ell}\bQ^{\ell-1}_{\omega}\bW^{\ell-1}\dots \bQ^{1}_{\omega}\bW^{1}\;\text{ and }\;
    \bb^{1\rightarrow \ell}_{\omega}\triangleq \bv^{\ell}+\sum_{i=1}^{\ell-1}\bW^{\ell}\bQ^{\ell-1}_{\omega}\bW^{\ell-1}\dots \bQ^{i}_{\omega}\bv^{i} ,\label{eq:Abell}
\end{align}
producing the pre-activation feature maps  $\bh^{\ell}(\bz)\in \mathbb{R}^{D^{\ell}}$ by 
$
    \bh^{\ell}(\bz)= \bA^{1\rightarrow \ell}_{\omega}\bz+\bb^{1\rightarrow \ell}_{\omega}
$ and with $\bA^{1\rightarrow \ell}_{\omega}\in \mathbb{R}^{D^{\ell}\times S}$ and $\bb^{1\rightarrow \ell}_{\omega} \in \mathbb{R}^{D^\ell}$.
Note that we have, in particular, that $\bA^{L}_{\omega}=\bA_{\omega}$ and $\bb^{L}_{\omega}=\bb_{\omega}$ from (\ref{eq:region_parameters}). 
When using standard activation functions such as (leaky-)ReLU or absolute value, the sign of the pre-activation defines the activation state; denote this by $\bq^{\ell}=\sign(\bh^{\ell}(\bz))$ and collect all of the per-layer signs into $\bq$. Without degenerate weights, the sign patterns produced by $\bq(\bz),\forall \bz$ are tied to the regions $\omega \in \Omega$; we will thus use interchangeably $\bq(\bz)$ with $\bz \in \omega$ and $\bq(\omega)$.

\begin{lemma}
\label{lemma:bijection}
The operator $\bz \mapsto [\bq^1(\bz),\dots,\bq^{L-1}(\bz)]$ is piecewise constant with a bijection between its image and $\Omega$.
\end{lemma}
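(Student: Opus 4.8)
The plan is to make precise the informal statement by first noting that the map $\bz \mapsto [\bq^1(\bz),\dots,\bq^{L-1}(\bz)]$ takes values in the finite set $\{\eta,1\}^{D^1}\times\cdots\times\{\eta,1\}^{D^{L-1}}$ (using $\sign$ of pre-activations, with the appropriate value-set per nonlinearity), so its image is automatically finite. The first step is to establish \emph{piecewise constancy}: for a fixed sign pattern $\bq=(\bq^1,\dots,\bq^{L-1})$, the matrices $\bQ_\omega^\ell$ in \eqref{eq:region_parameters}--\eqref{eq:Abell} depend only on $\bq$ (not on $\bz$ directly), so the affine maps $\bh^\ell(\bz)=\bA^{1\to\ell}_\bq\bz+\bb^{1\to\ell}_\bq$ are globally affine functions of $\bz$ once $\bq$ is frozen. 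The set of $\bz$ realizing a given $\bq$ is then
\[
P_\bq=\bigcap_{\ell=1}^{L-1}\bigcap_{j=1}^{D^\ell}\Big\{\bz:\ q^\ell_j\cdot \big(\bA^{1\to\ell}_\bq\bz+\bb^{1\to\ell}_\bq\big)_j\ge 0\Big\},
\]
a (possibly empty) polytope, since each constraint is a half-space in $\bz$. On the interior of each nonempty $P_\bq$ the sign vector is constant and equals $\bq$, which is exactly piecewise constancy of the map; I would define $\Omega$ to be the collection of these nonempty (full-dimensional) cells, matching the partition notion used earlier.

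The second step is the \emph{bijection} between the image of $\bz\mapsto[\bq^1(\bz),\dots,\bq^{L-1}(\bz)]$ and $\Omega$. Surjectivity onto $\Omega$ is by construction: each region $\omega\in\Omega$ is $P_\bq$ for some realized pattern $\bq$, so it is hit. For injectivity I would argue that two distinct patterns $\bq\neq\bq'$ give $P_\bq\cap P_{\bq'}$ contained in a finite union of hyperplanes (where some pre-activation coordinate vanishes), hence of measure zero, so they cannot correspond to the same full-dimensional region; thus the region-to-pattern assignment is well-defined and inverse to the pattern-to-region assignment. This is where the ``without degenerate weights'' hypothesis enters — one needs the weight matrices $\bW^\ell$ to be such that no pre-activation coordinate is identically zero on a full-dimensional cell, and more delicately that the cells $P_\bq$ are genuinely full-dimensional and their boundaries are lower-dimensional; I would state this as a genericity assumption on the weights (the set of degenerate weight tuples has Lebesgue measure zero) and cite \cite{balestriero2019geometry} for the partition geometry.

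The main obstacle is handling the recursive coupling: the constraint defining layer $\ell$ involves $\bA^{1\to\ell}_\bq,\bb^{1\to\ell}_\bq$, which themselves depend on $\bq^1,\dots,\bq^{\ell-1}$, so the polytope description is not a flat list of independent half-spaces but a nested one. The clean way around this is induction on $\ell$: assume the up-to-$(\ell-1)$ map is piecewise constant with polytopal cells; on each such cell $\bh^\ell$ is affine, so thresholding its sign subdivides the cell into finitely many polytopal subcells, and composing these subdivisions over all layers yields $\Omega$. The bijection then follows because the composite refinement is, by the inductive genericity argument, exactly indexed by the realized sign sequences. I expect the induction bookkeeping and the precise formulation of the non-degeneracy condition (ensuring every realized pattern yields a full-dimensional cell, so that no pattern is "lost" when passing from the map's image to the full-dimensional partition $\Omega$) to be the only subtle points; everything else is routine once the affine-on-a-cell observation is in place.
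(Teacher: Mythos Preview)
Your proposal is correct and follows essentially the same approach as the paper: both argue that the sign pattern determines the per-region affine mapping (hence the region) and, under the non-degenerate-weights assumption, that distinct regions force distinct sign patterns, yielding the bijection. Your version is considerably more detailed---with the explicit polytope $P_\bq$, the layer-wise induction handling the recursive dependence of $\bA^{1\to\ell}_\bq$ on earlier signs, and the measure-zero boundary argument---whereas the paper gives only a brief informal sketch of the same reasoning.
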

%\qq

\begin{cor}
\label{cor:H_rep}
The polyhedral region $\omega$ is given by
\begin{align*}
    \omega =\bigcap_{\ell=1}^{L-1}\left\{\bz \in \mathbb{R}^S:\bA^{1\rightarrow \ell}_{\omega}\bz <- \bq^{\ell}(\omega) \odot \bb^{1\rightarrow \ell}_{\omega}\right\},
\end{align*}
with $\odot$ the Hadamard product.
\end{cor}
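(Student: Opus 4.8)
The plan is to derive the $H$-representation of $\omega$ directly from the characterization of $\omega$ as a fiber of the sign map established in Lemma~\ref{lemma:bijection}. By that lemma, a point $\bz$ lies in $\omega$ precisely when $\sign(\bh^\ell(\bz)) = \bq^\ell(\omega)$ for every $\ell = 1,\dots,L-1$, where $\bh^\ell(\bz)$ is the pre-activation feature map at layer $\ell$. So the whole proof reduces to rewriting, coordinate by coordinate, the equality of sign vectors as a system of strict linear inequalities in $\bz$.

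First I would fix a region $\omega$ and recall from (\ref{eq:Abell}) that on $\omega$ the pre-activation map is affine: $\bh^\ell(\bz) = \bA^{1\rightarrow\ell}_\omega \bz + \bb^{1\rightarrow\ell}_\omega$. The subtlety here is that this affine formula is itself only valid for $\bz \in \omega$ (the matrices $\bA^{1\rightarrow\ell}_\omega$, $\bb^{1\rightarrow\ell}_\omega$ depend on the sign pattern $\bq(\omega)$ of the lower layers), so I would note that the set $\{\bz : \sign(\bh^\ell(\bz)) = \bq^\ell(\omega)\ \forall \ell\}$ can be built up inductively: assuming the signs at layers $1,\dots,\ell-1$ are already pinned to $\bq^1(\omega),\dots,\bq^{\ell-1}(\omega)$, the map $\bh^\ell$ coincides with the affine map $\bz \mapsto \bA^{1\rightarrow\ell}_\omega \bz + \bb^{1\rightarrow\ell}_\omega$ on that locus, and pinning the sign at layer $\ell$ adds the constraint $\sign(\bA^{1\rightarrow\ell}_\omega \bz + \bb^{1\rightarrow\ell}_\omega) = \bq^\ell(\omega)$. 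Intersecting over all layers gives exactly the claimed set.

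Next I would do the elementary per-coordinate translation. For a fixed layer $\ell$ and coordinate $j$, the requirement $\sign\big((\bA^{1\rightarrow\ell}_\omega \bz + \bb^{1\rightarrow\ell}_\omega)_j\big) = \bq^\ell_j(\omega)$ means: if $\bq^\ell_j(\omega) = 1$ then $(\bA^{1\rightarrow\ell}_\omega \bz + \bb^{1\rightarrow\ell}_\omega)_j > 0$, and if $\bq^\ell_j(\omega) = -1$ then $(\bA^{1\rightarrow\ell}_\omega \bz + \bb^{1\rightarrow\ell}_\omega)_j < 0$. Both cases are captured uniformly by $\bq^\ell_j(\omega)\cdot(\bA^{1\rightarrow\ell}_\omega \bz + \bb^{1\rightarrow\ell}_\omega)_j > 0$, i.e.\ by $\bq^\ell_j(\omega)\cdot(\bA^{1\rightarrow\ell}_\omega \bz)_j > - \bq^\ell_j(\omega)\cdot \bb^{1\rightarrow\ell}_{\omega,j}$. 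Writing this for all $j$ simultaneously and using the Hadamard product $\odot$ to encode the per-coordinate multiplication by $\bq^\ell(\omega)$ — and observing that since $\bq^\ell(\omega)\in\{\pm1\}^{D^\ell}$, componentwise multiplication by $\bq^\ell(\omega)$ preserves the direction of the inequality only after moving it to both sides, which is precisely what the stated form $\bA^{1\rightarrow\ell}_\omega \bz < -\bq^\ell(\omega)\odot\bb^{1\rightarrow\ell}_\omega$ does once one divides back through (componentwise) by $\bq^\ell(\omega)$ — yields the claimed half-space description at layer $\ell$. Taking the intersection over $\ell = 1,\dots,L-1$ gives the formula in the corollary.

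The main obstacle, and the only place requiring care rather than bookkeeping, is the validity of the affine substitution: one must be sure that replacing $\bh^\ell$ by its affine-on-$\omega$ expression does not enlarge or shrink the set, i.e.\ that the inductive argument above is airtight and that degenerate-weight configurations (where a pre-activation vanishes identically on a positive-measure set, or where the sign map fails to be a bijection onto $\Omega$) are excluded — this is exactly the hypothesis already invoked for Lemma~\ref{lemma:bijection} ("without degenerate weights"), so I would simply carry that assumption forward. Everything else is the routine sign-to-inequality rewriting described above, so the corollary follows immediately from Lemma~\ref{lemma:bijection} together with the definitions (\ref{eq:Abell}).
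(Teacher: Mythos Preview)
Your proposal is correct and follows essentially the same route as the paper's own proof, which is a one-paragraph sketch invoking Lemma~\ref{lemma:bijection} to identify regions with fixed pre-activation sign patterns and then reexpressing those sign constraints as per-layer half-space conditions (deferring further details to \cite{balestriero2019geometry}). You supply considerably more detail than the paper does, including the inductive justification for using the affine-on-$\omega$ formula for $\bh^\ell$ and the per-coordinate translation; the only slightly murky passage is your ``divide back through by $\bq^\ell(\omega)$'' step in recovering the exact Hadamard form stated, but the paper's proof does not address that algebra either.
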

%\qq

\begin{figure}[t]
    \centering
    \includegraphics[width=1\linewidth]{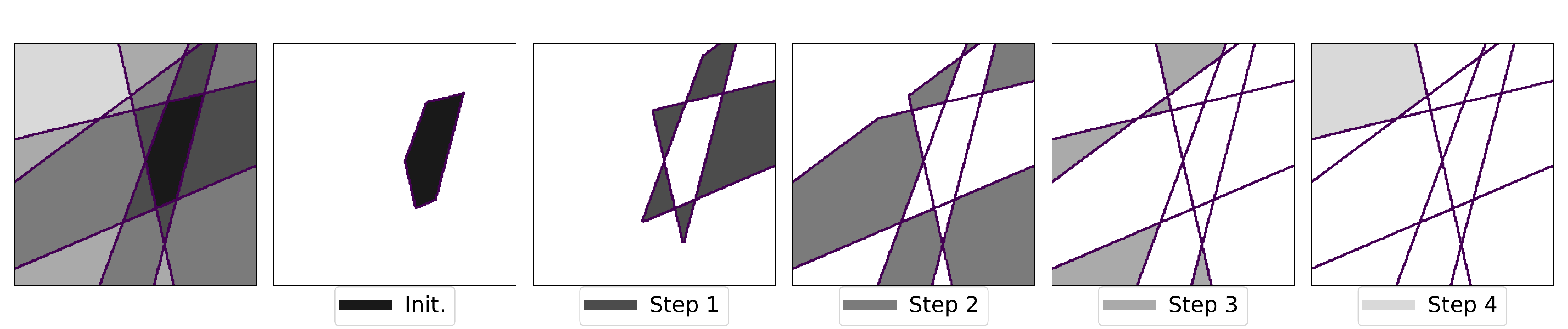}
    \caption{\small 
    Recursive partition discovery for a DGN with $S=2$ and $L=2$, starting with an initial region obtained from a sampled latent vector $\bz$ (init). By walking on the faces of this region, neighboring regions sharing a common face are discovered (Step 1). Recursively repeating this process until no new region is discovered (Steps 2--4) provides the DGN latent space partition at left .\normalsize }
    \vspace*{-3mm}
    \label{fig:partition}
\end{figure}

The above result tells us that the pre-activation signs locate on which side of each hyperplane the region $\omega$ is located, which provides a direct way to compute the $\mathcal{H}$-representation of $\omega$ from $\bq(\bz)$ with $\bz \in \omega$. 
To obtain the entire partition $\Omega$, we propose a recursive scheme that starts from an initial region (or sample $\bz$) and walks on its faces to discover the neighboring regions. This process is repeated on the newly discovered regions until no new region is discovered.
We detail this exploration procedure in Appendix~\ref{appendix:latent_partition} and illustrate it in Fig.~\ref{fig:partition}.

{\bf Gaussian Integration on $\omega$.}~ 
The Gaussian integral on a region $\omega$ (and its moments) cannot in general be obtained by direct integration unless $\omega$ is a rectangular region \cite{tallis1961moment,bg2009moments} or is polytopal with at most $S$ faces \cite{tallis1965plane}. %\footnote{\url{http://web.eecs.umich.edu/~cscott/pubs/tcem_tr.pdf}}
In general, the DGN regions $\omega \in \Omega$ will have at least $S+1$ faces, as they are closed polytopes in $\mathbb{R}^S$. To leverage the known integral forms, we propose to first decompose a DGN region $\omega$ into simplices ($S+1$-face polytopes in our case \cite{munkres2018elements}) and then further decompose each simplex into open polytopes with at most $S$ faces, which enables the use of \cite{tallis1965plane}. In our case, we perform the simplex decomposition with the Delaunay  triangulation \cite{delaunay1934sphere} denoted as $T(\omega)$ with
\begin{align}
    T(\omega) \triangleq \{ \Delta_1,\dots, \Delta_{\card(T(\omega))} \},\text{ with } \cup_{i=1}^{\card(T(\omega))}\Delta_i = \omega \text{ and }\Delta_i \cap \Delta_j = \emptyset, \forall i  \not = j,
    \label{eq:S}
\end{align}
where each $\Delta_i$ is a simplex defined by the half-spaces $\Delta_i=\cap_{s=1}^{S+1}H_{i,j}$. This process is illustrated in Fig.~\ref{fig:region_to_simplex}. The decomposition of each simplex into open polytopes with less than $S+1$ faces is performed by employing the standard inclusion-exclusion principle \cite{bjorklund2009set} leading to the following result.

\begin{figure}[t]
    \centering
    \begin{minipage}{0.6\linewidth}
    \centering
    \includegraphics[width=1\linewidth]{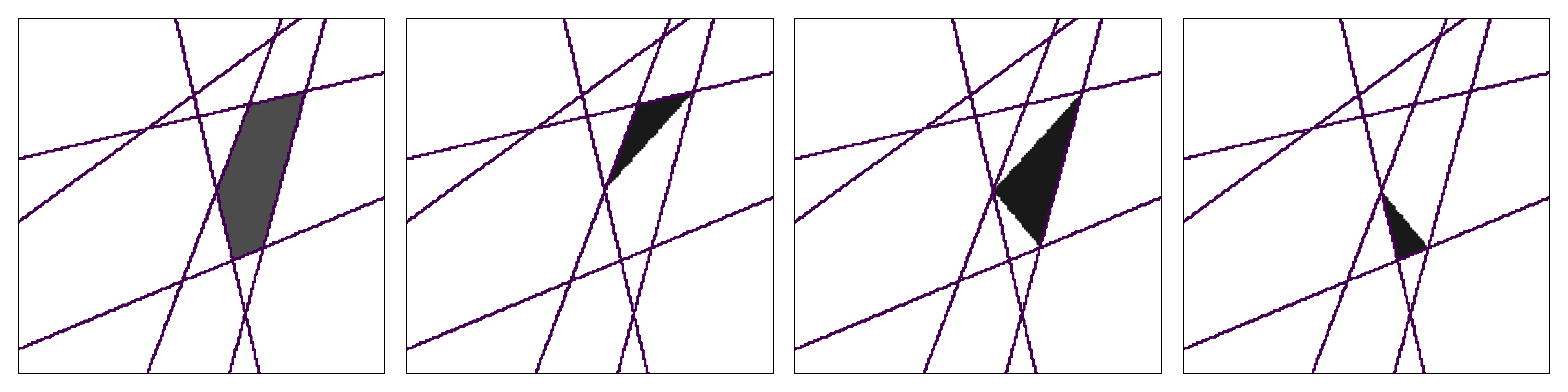}
    \end{minipage}
    \begin{minipage}{0.38\linewidth}
    \caption{\small Triangulation $T(\omega)$ as per (\ref{eq:S}) of a polytopal region $\omega$ (left plot) obtained from the Delaunay Triangulation of the region vertices leading to $3$ simplices (three right plots).\normalsize}
    \label{fig:region_to_simplex}
    \end{minipage}
    \vspace*{-5mm}
\end{figure}

\begin{lemma}
\label{lemma:integral_decomposition}
The integral of any integrable function $g$ on a polytopal region $\omega \in \Omega$ can be decomposed into integration over open polytopes of at most $S$ faces via
\begin{align*}
    \int_{\omega}  g(\bz) d\bz =  \sum_{\Delta \in T(\omega)}\sum_{(s,V) \in H(\Delta)} s\int_{V}g(\bz)d\bz.
\end{align*}
with $
    H(\Delta_i)\triangleq \left\{\left((-1)^{|J|+S}, \cap_{j \in J}H_{i,j}\right),J \subseteq \{1,\dots,S+1\}, |J|\leq S\right\}$.
\end{lemma}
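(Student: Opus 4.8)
The claim has two independent pieces: (i) a Delaunay-triangulation identity that rewrites $\int_\omega g\,d\bz$ as a sum of integrals over the simplices $\Delta\in T(\omega)$; and (ii) an inclusion–exclusion identity that rewrites $\int_\Delta g\,d\bz$ as a signed sum of integrals over the intersections $\cap_{j\in J}H_{i,j}$ for $J\subseteq\{1,\dots,S+1\}$ with $|J|\le S$. I would prove each separately and then compose them.

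For (i), the triangulation $T(\omega)$ from (\ref{eq:S}) partitions $\omega$ into simplices $\Delta_i$ that are pairwise disjoint (as stated, $\Delta_i\cap\Delta_j=\emptyset$ for $i\ne j$) with $\cup_i\Delta_i=\omega$; strictly one should say they overlap only on lower-dimensional faces, which are Lebesgue-null, so that for any integrable $g$ we have $\int_\omega g\,d\bz=\sum_{\Delta\in T(\omega)}\int_\Delta g\,d\bz$ by countable (here finite) additivity of the integral over an essentially disjoint decomposition. This is the easy step.

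For (ii), fix a simplex $\Delta=\Delta_i$, written as $\Delta=\bigcap_{j=1}^{S+1}H_{i,j}$ with each $H_{i,j}$ a closed half-space. Write $\Indic_{\Delta}=\prod_{j=1}^{S+1}\Indic_{H_{i,j}}$, and let $B_{i,j}=\mathbb{R}^S\setminus H_{i,j}$ be the complementary open half-space, so $\Indic_{H_{i,j}}=1-\Indic_{B_{i,j}}$. Expanding the product $\prod_{j=1}^{S+1}(1-\Indic_{B_{i,j}})$ gives $\sum_{K\subseteq\{1,\dots,S+1\}}(-1)^{|K|}\Indic_{\cap_{j\in K}B_{i,j}}$. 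The full-set term $K=\{1,\dots,S+1\}$ is $\Indic_{\cap_j B_{i,j}}$; since $\Delta$ is a nondegenerate simplex with $S+1$ facets, the intersection of all $S+1$ complementary open half-spaces is empty (a point strictly violating all $S+1$ defining inequalities would lie outside every facet simultaneously, which is geometrically impossible for a simplex — this needs a short argument, essentially that the $S+1$ outward normals positively span $\mathbb{R}^S$). Hence that term vanishes and we may restrict to $|K|\le S$. Now integrate $g$ against the remaining identity, and complement back: replacing $B_{i,j}$ by $H_{i,j}$ via another inclusion–exclusion on each $K$, or more directly re-indexing, one arrives at $\int_\Delta g = \sum_{J:\,|J|\le S}(-1)^{|J|+S}\int_{\cap_{j\in J}H_{i,j}}g\,d\bz$, which is exactly the sum over $(s,V)\in H(\Delta_i)$ in the statement. (The sign bookkeeping — going from $(-1)^{|K|}$ on complementary half-spaces to $(-1)^{|J|+S}$ on the original half-spaces — is the one place to be careful; it is cleanest to expand $\Indic_\Delta=\prod_j\bigl(\Indic_{\mathbb{R}^S}-\Indic_{B_{i,j}}\bigr)$, drop the empty top term, and observe that each surviving $\cap_{j\in K}B_{i,j}$ with $|K|\le S$ can itself be re-expanded over the original half-spaces; alternatively, swap the roles of $H$ and $B$ from the start, expanding $\Indic_\Delta$ directly in terms of intersections of the $H_{i,j}$ by noting $\Delta = \mathbb{R}^S \setminus \bigcup_j B_{i,j}$ and applying Bonferroni/inclusion–exclusion to the union, which directly produces the $(-1)^{|J|+S}$ signs once the empty $|J|=S+1$ term is discarded.)

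Composing (i) and (ii) yields $\int_\omega g\,d\bz=\sum_{\Delta\in T(\omega)}\sum_{(s,V)\in H(\Delta)}s\int_V g\,d\bz$, as claimed. The main obstacle is step (ii), specifically verifying that the $|J|=S+1$ term genuinely drops out — i.e., that $\bigcap_{j=1}^{S+1}B_{i,j}=\emptyset$ for a nondegenerate simplex — and then tracking the signs so that the surviving terms carry exactly the weight $(-1)^{|J|+S}$; everything else is routine measure theory and combinatorial expansion. One should also note in passing that each surviving $V=\cap_{j\in J}H_{i,j}$ with $|J|\le S$ is an (unbounded) polyhedron with at most $S$ facets, which is the property that makes \cite{tallis1965plane} applicable downstream, though that observation is not strictly needed for the identity itself.
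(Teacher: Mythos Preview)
Your plan is essentially the paper's: reduce to simplices via the triangulation, then apply inclusion--exclusion on the $S+1$ defining half-spaces of each simplex, using as the key geometric input that $\bigcap_{j} B_{i,j}=\emptyset$ (equivalently $\bigcup_j H_{i,j}=\mathbb{R}^S$) for a nondegenerate simplex --- a point you make explicit and the paper actually leaves implicit. The one difference is in the bookkeeping: your product expansion $\prod_j(1-\Indic_{B_{i,j}})$ lands on a signed sum over intersections of the \emph{complements} $B_{i,j}$, and the ``complement back'' step you flag is not mere re-indexing --- it genuinely requires a second layer of inclusion--exclusion (and a binomial identity) to reach the $H$-intersections with the $(-1)^{|J|+S}$ signs. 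The paper avoids this detour by running inclusion--exclusion on $\bigcup_j H_{i,j}$ directly: the full-intersection term in that expansion is $(-1)^{S+2}\Indic_{\Delta}=(-1)^{S}\Indic_{\Delta}$, so isolating it and then identifying $\bigcup_j H_{i,j}$ with $\mathbb{R}^S$ (which becomes the $J=\emptyset$ term) yields the stated formula in one stroke, already in terms of the $H_{i,j}$ and with the correct signs.
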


From the above result, we can apply the known form of the Gaussian integral on a polytopal region with fewer than $S$ faces and obtain the form of the integral and moments as provided in Appendix~\ref{appendix:analytical_form}, where detailed pseudo code is provided.

{\bf Remark:} The integral performed as per Lemma~\ref{lemma:integral_decomposition} is computationally expensive, particularly with respect to the latent space dimension $S$. This is the current main practical limitation of performing the analytical computation of the DGN posterior (and thus the E-step). 
A more elaborated discussion plus several solutions are provided in the next section; see also Appendix~\ref{appendix:complexity} for the asymptotic computational complexity details.

{\bf Visualization of the Marginal and Posterior Distributions.}~ 
To illustrate our theoretical development so far, we now visualize the posterior and marginal distributions of a randomly initialized DGN in a low-dimensional space $D=2$ and with latent dimension $S=1$. 
(See Appendix~\ref{appendix:more_XP} for the architectural details of the DGN.)
%That is, the DGN output is a $1$-dimensional continuous piecewise linear subspace \cite{balestriero2020max} and corresponds to a noiseless sample $g(\bz)$ while the noisy DGN output $g(\bz)+\bepsilon$ corresponds to the true data modeling. 
We depict the obtained distributions as well as the generated samples in Fig.~\ref{fig:toy_viz}. We also plot the posterior distribution b
ased on one observation obtained via $g(\bz_0)$ given a sampled $\bz_0$ from the $\bz$ distribution and one noisy observation $g(\bz_0)+\bepsilon_0$ given a noise realization $\bepsilon_0$.

\begin{figure}[t]
    \centering
    \begin{minipage}{0.55\linewidth}
    \includegraphics[width=0.32\linewidth]{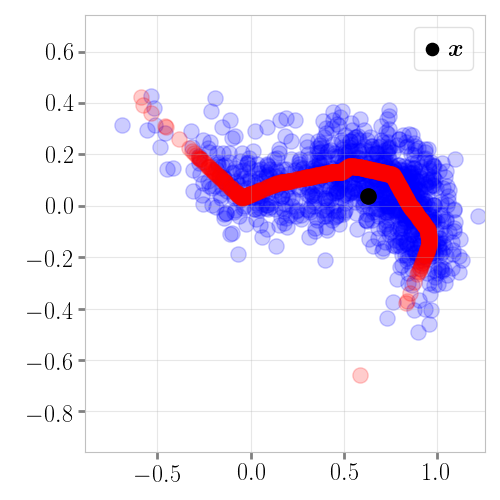}
    \includegraphics[width=0.32\linewidth]{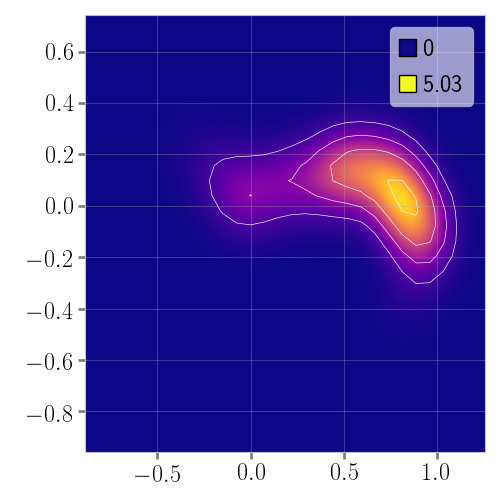}
    \includegraphics[width=0.32\linewidth]{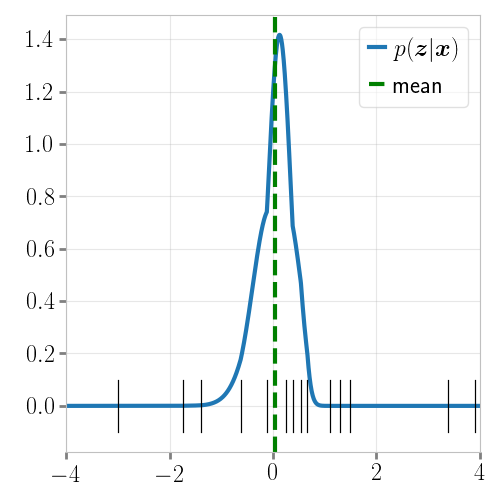}
    \end{minipage}
    \begin{minipage}{0.44\linewidth}
    \caption{{\small {\bf Left:}~Noiseless generated samples $g(\bz)$ in red and noisy samples $g(\bz)+\bepsilon$ in blue, with $\bSigma_{\bx}=0.1I$, $\bSigma_{\bz}=I$. {\bf Middle:}~  marginal distribution $p(\bx)$ from (\ref{eq:marginal}). {\bf Right:}~ the posterior distribution $p(\bz|\bx)$ from (\ref{eq:posterior}) (blue), its expectation (green) and the position of the region limits (black), with sample point $\bx$ depicted in black in the left figure.}}
    \label{fig:toy_viz}
    \end{minipage}
    \vspace*{-5mm}
\end{figure}

\section{Expectation-Maximization Learning of Deep Generative Networks}
\label{sec:EM_}

We now derive an analytical Expectation-Maximization training algorithm for CPA DGNs based on the results of the previous sections. 
We then compare DGN training via EM and AVI and leverage the exact complete likelihood to perform model selection and study the VAE approximation error.

\subsection{Expectation Step}
\label{sec:Estep}

\iffalse

and also define
\begin{align}
    \Phi^{[1]}_{\ba}(\bSigma) = \int_{\ba}^{\infty}\bz \phi(\mathbf{0},\bSigma)d\bz,
    \Phi^{[2]}_{\ba}(\bSigma) = \int_{\ba}^{\infty}\bz\bz^T \phi(\mathbf{0},\bSigma)d\bz,
\end{align}
which represent the first and second moment of the Gaussian but with constraint domain integration on the open rectangle with lower limits given by $\ba$.

\begin{lemma}
The first moments of Gaussian integration on an open rectangle defined by its lower limits $\ba$ is given by
\begin{align}
    \Phi^{[1]}_{\ba}(\bSigma)=&\bSigma F(\ba),\\
    \Phi^{[2]}_{\ba}( \bSigma)=&\Phi_{[\ba,\infty)}(\mathbf{0},\bSigma) \bSigma+\bSigma \left(\frac{\ba \odot F(\ba)+\big(\bSigma \odot G(\ba) \big)\mathbf{1}}{\diag(\bSigma)}\right)\bSigma^T.\label{eq:phi}
\end{align}
where the division is performed elementwise. 
\end{lemma}

\fi

The E-step infers the latent (unobserved) variables associated to the generation of each observation $\bx$ by taking the expectation of the log of the complete likelihood with respect to the posterior distribution (\ref{eq:posterior}). 
We denote the per-region moments of the DGN posterior (from Appendix~\ref{appendix:analytical_form}) by $\mathbb{E}_{\bz|\bx}[\Indic_{\bz\in\omega}]\triangleq e^0_{\omega}(\bx)$, $\mathbb{E}_{\bz|\bx}[\bz\Indic_{\bz\in\omega}]\triangleq \be^1_{\omega}(\bx)$ and $\mathbb{E}_{\bz|\bx}[\bz\bz^T\Indic_{\bz\in\omega}] \triangleq \bE^2_{\omega}(\bx)$; we also have $\be^1(\bx)\triangleq \mathbb{E}_{\bz|\bx}[\bz] = \sum_{\omega}\be^1_{\omega}(\bx)$ and likewise for the second moment.
We obtain the following E-step (the detailed derivations are in Appendix \ref{proof:Estep}) 
\begin{align*}
    E_{\bz|\bx}\left[\log \left( p(\bx|\bz)p(\bz) \right)\right]=&-\frac{1}{2}\log\Big( (2\pi)^{S+D}|\det(\bSigma_{\bx})||\det(\bSigma_{\bz})| \Big)-\frac{1}{2}\trace(\bSigma_{\bz}^{-1}\bE^2(\bx))\\
    &-\frac{1}{2}\Bigg(\bx^T\bSigma_{\bx}^{-1}\bx-2 \bx^T\bSigma_{\bx}^{-1}\bigg(\sum_{\omega}\bA_{\omega}\be^1_{\omega}(\bx)+\bb_{\omega}e^0_{\omega}(\bx)\bigg)\\
    &+\sum_{\omega}\bigg[\trace(\bA_{\omega}^T\bSigma_{\bx}^{-1}\bA_{\omega}\bE^2_{\omega}(\bx))+ (e^0_{\omega}\bb_{\omega}+2\bA_{\omega}\be_{\omega}^1(\bx))^T\bSigma_{\bx}^{-1} \bb_{\omega}\bigg]\Bigg).
\end{align*}
Note that the (per-region) moments involved in the E-step, such as $\be^1_{\omega}(\bx)$, are taken with respect to the current parameters ($\theta=\{\bSigma_{\bx},\bSigma_{\bz},(\bW^{\ell},\bv^{\ell})_{\ell=1}^L\})$.
That is, if gradient based optimization is leveraged to maximize the ELBO, then no gradient should be propagated through them. 
We can see from the above formula that the contributions of each region's affine parameters are weighted based on the posterior for each datum $\bx$. That is, for each input $\bx$, the posterior combines all of the per-region affine parameters as opposed to current forms of learning that only leverage the parameters involved on the specific region activated by the DGN input $\bz$.

\iffalse
The ``hard'' E-step \cite{ji2019probabilistic}, corresponding to the zero-noise limit case can be further characterized recalling Lemma~\ref{lemma:zeronoise} and Cor.~\ref{prop:GLO}.

\qq
\begin{lemma}
\label{lemma:hard_E}
The hard E-step of DGN is given by
\begin{align}
    \lim_{\sigma_{\bx}\rightarrow 0}E_{\bz|\bx}\left[\log \left( p(\bx|\bz)p(\bz) \right)\right]=\| \bx-g(\bz^*(\bx))\|_2^2+\bz^*(\bx)^T\bSigma_{\bz}^{-1}\bz^*(\bx)+cst
\end{align}
where we recall $\bz^*(\bx)=\argmin_{\bz}\|\bx-g(\bz)\|_2$. 
\end{lemma}
\qq

In methods like GLO not only the latent vector is inferred by $\bz^*(\bx)$ but then the generator can also be learned as $\min_{\theta}\|\bx-g_{\theta}(\bz^*(\bx)\|$ with some flavor of gradient descent. WE can now provide exact characterization of this techniques based on the above result.

\qq
\begin{prop}
GLO based learning of DGN corresponds to hard E-step and gradient based M-step with uninformative prior $\bSigma_{\bz} \rightarrow \infty$.
\end{prop}
\qq

The above result demonstrates that techniques employing GLO to train models actually perform an hard E-step followed by gradient based learning in the case of uninformative prior.
\fi

\subsection{Maximization Step}
\label{sec:Mstep}

Given the E-step, maximizing the ELBO can be done via some flavor of gradient based optimization. However, thanks to the analytical E-step and the Gaussian form of the involve distributions, there exists analytical form of this maximisation process (M-step) leading to the analytical M-step for DGNs. The formulas for all of the DGN parameters are provided in Appendix \ref{proof:EM}. We provide here the analytical form for the bias ${\bv^{\ell}}^*$, for which we introduce $\br_{\omega}^{\ell}(\bx)$ as the expected reconstruction error of the DGN as
\begin{gather*}
    \br_{\omega}^{\ell}(\bx) \triangleq \left(\bx-\sum_{i\not = \ell}\bA^{i + 1\rightarrow L}_{\omega}\bQ^{i}_{\omega}\bv^{i}\right) e^0_{\omega}(\bx)-\bA_{\omega}\be^1_{\omega}(\bx)\;\;(\text{expected residual without  $\bv^\ell$}),\\
    {\bv^{\ell}}^* =\left(\sum_{\bx}\sum_{\omega}\bQ^{\ell}_{\omega}\bA^{L\rightarrow \ell+1}_{\omega}\bSigma_{\bx}^{-1}\bA^{\ell+1\rightarrow L}_{\omega}\bQ^{\ell}_{\omega}\right)^{-1}\hspace{-0.13cm}\left(\sum_{\bx}\sum_{\omega \in \Omega} \underbrace{\bQ^{\ell}_{\omega}\bA^{L\rightarrow \ell+1}_{\omega}\bSigma_{\bx}^{-1}  \br^{\ell}_{\omega}(\bx)}_{\text{residual back-propagated to layer $\ell$}}\right).
\end{gather*}
Some interesting observations can be made based on the analytical form of these updates. First, the bias update is based on the residual of the reconstruction error with a DGN whose bias has been removed; this residual is then backpropagated to the $\ell^{\rm th}$ layer. 
The backpropagation is performed via the (transposed) backpropagation matrix as when performing gradient-based learning. 
Second, the updates of any parameter depend on each region parameter's contribution based on the posterior moments and integrals, similarly to any mixture model. 
Third, all of the updates are whitened based on the backpropagation (or forward propagation) correlation matrix $\bA_{\omega}^{\ell\rightarrow L},\forall \omega, \forall \ell$.
We study the impact of using a probabilistic priors on the layer weights
such as Gaussian, Laplacian, or uniform which are related to the $\ell_2$, $\ell_1$ regularization, and weight clipping techniques in Appendix~\ref{appendix:regularization}.

\begin{figure}[t]
    \centering
    \begin{minipage}{0.5\linewidth}
    \includegraphics[width=\linewidth]{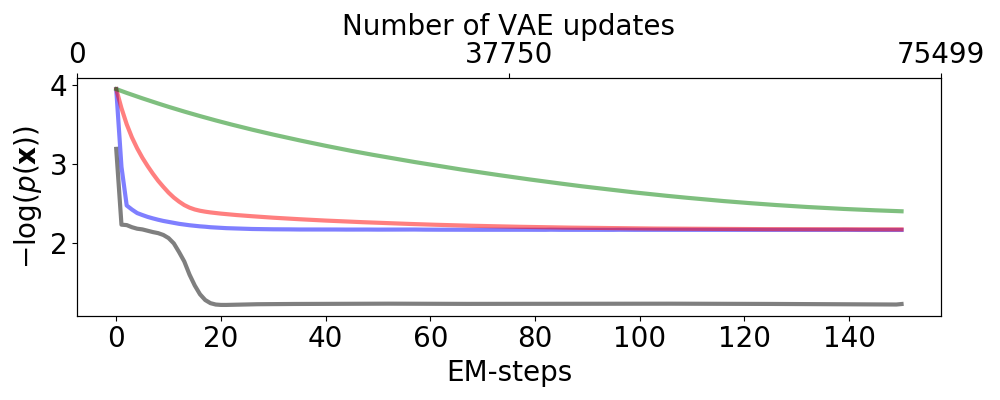}
    \end{minipage}
    \begin{minipage}{0.49\linewidth}
    \caption{{\small DGN training under EM (black) and VAE training with various learning rates for VAE (blue: 0.005, red: 0.001, green: 0.0001). In all cases, VAE converges to the maximum of its ELBO. The gap between the VAE and EM curves is due to the inability of the VAE's AVI to correctly estimate the true posterior, pushing the VAE's ELBO far from the true log-likelihood (recall (\ref{eq:max_KL})) and thus preventing it from precisely approximating the true data distribution.}}
    \label{fig:circle}
    \end{minipage}
\end{figure}

\begin{figure}[t]
    \centering
    \begin{minipage}{0.45\linewidth}
    \includegraphics[width=\linewidth]{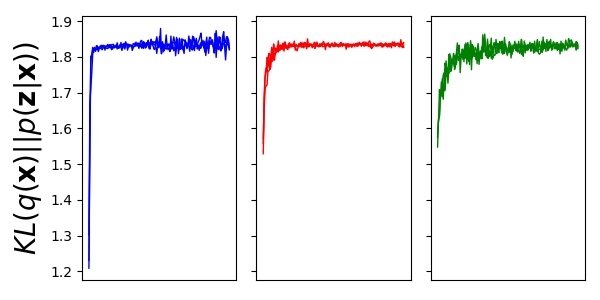}
    \end{minipage}
    \begin{minipage}{0.54\linewidth}
    \caption{{\small KL-divergence between a VAE variational distribution and the true DGN posterior when trained on a noisy circle dataset in $2D$ for 3 different learning rates. During learning, the DGN adapts such that $g(\bz)+\bepsilon$ models the data distribution based on the VAE's estimated ELBO. As learning progresses, the true DGN posterior becomes harder to approximate by the VAE's variational distribution in the AVI process. As such, even in this toy dataset the commonly employed Gaussian variational distribution is not rich enough to capture the multimodality of $p(\bz|\bx)$ from (\ref{eq:posterior}).}}
    \label{fig:KL}
    \vspace*{-3mm}
    \end{minipage}
\end{figure}

\subsection{Empirical Validation and VAE Comparison}
\label{sec:experiments}

We now numerically validate the above EM-steps on a simple problem involving data points on a radius $1$ circle in 2D augmented with a Gaussian noise of standard deviation $0.05$. 
We depict the EM-training of a 2-layer DGN with width of $8$ with the VAE training. In all cases the DGNs are the same architecture, with same weight initialization, and the dataset is also identical between models with the same noise realizations. Thanks to the analytical form of the marginals, we can compute the true ELBO (without variational estimation of the true posterior) for the VAE during its training to monitor its ability to fit the data distribution. We depict this in Fig.~\ref{fig:circle}. 
We observe how EM training converges faster to a lower negative log-likelihood. 
In addition, we see how all of the trained VAEs seem to converge to the same bound, which corresponds to the maximum of its ELBO, where the gap is induced by the use of a variational approximation of the true posterior. 
We further investigate this phenomenon in Fig.~\ref{fig:KL}. 
In fact, we obtained from (\ref{eq:posterior}) and Prop.~\ref{prop:mppca_posterior} that the posterior is a mixture of truncated Gaussian and the covariances are based on $\bA_{\omega}^T\bA_{\omega}$. Hence VAEs employing a single Gaussian as a variational distribution $q(\bx)$ will fail to correctly approximate the true posterior $p(\bz|\bx)$, especially as training proceeds making the posterior covariance less diagonal. The absence of precise posterior estimation directly translated into VAE optimizing an ELBO objective being far from the tight log-likelihood bound, in turn hurting VAE ability to correctly approximate the data distribution as opposed to EM learning producing much smaller negative log-likelihood. Providing better posterior estimates is thus key to improve VAE performances.
From this we conclude that multimodal variational distributions should be considered for VAEs regardless of the data at hand.
We provide additional figures and experiments with various architectures in Appendix~\ref{appendix:more_XP}.

\section{Conclusions}

We have derived the analytical form of the posterior, marginal, and conditional distributions of for CPA DGNs. This has enabled us to derive the EM-learning algorithm for DGNs that not only converges faster than state-of-the-art VAE training but also to a higher likelihood.
Our analytical forms can be leveraged to improve the variational distribution of VAEs, understand the form of analytical weight updates, study how a DGN infers the latent variable $\bz$ from $\bx$, and leverage standard statistical tools to perform model selection, anomaly detection and beyond. 

In addition of these insights, our new EM learning algorithm can be used to train DGNs. The main computational bottlenecks are the width/depth of the DGN and the dimension of the latent space. 
One interesting research direction is developing approximations to our various analytical forms that enable fast inference/learning {\`a} la VAE but with better approximation and quantified approximation error.
One promising direction would involve only computing and integrating the distributions on the regions $\omega \in \Omega$ where we know a priori that the posterior is nonzero.

\newpage
\section*{Broader Impacts}

We have derived the analytical form of the posterior, marginal and conditional distributions of Deep Generative Networks (DGNs) based on continuous piecewise affine architectures. 
Our approach provides an approximation-free alternative to VAEs to train DGNs.
In addition to improving DGN algorithms, our analytical forms will enable researchers to probe more deeply into the inner workings of DGNs and VAE, making them more interpretable and thus trustworthy.
Our calculations will also enable accurate anomaly detection and model selection, which should find wide application in  sensitive applications where accurately computing the probability of a data point is crucial.

\section*{Acknowledgments}
RB and RB were supported by 
NSF grants CCF-1911094, IIS-1838177, and IIS-1730574; 
ONR grants N00014-18-12571 and N00014-17-1-2551;
AFOSR grant FA9550-18-1-0478; 
DARPA grant G001534-7500; and a 
Vannevar Bush Faculty Fellowship, ONR grant N00014-18-1-2047.

\small
\bibliography{references}
\bibliographystyle{plain}

\normalsize

\clearpage
\appendix
\onecolumn

\newgeometry{left=1.5cm,right=1.5cm}

\Huge
\begin{center}
    Supplementary Material
\end{center}
\normalsize
\vspace{1cm}

\section{Computing the Latent Space Partition}
\label{appendix:latent_partition}

In this section we first introduce notations and demonstrate how to express a region $\omega$ of the partition $\Omega$ as a polytope defined by a system of inequalities, and then leverage this formulation to demonstrate how to obtain $\Omega$ by recursively exploring neighboring regions starting from a random point/region.

{\bf Regions as Polytopes}
To represent the regions $\omega \in \Omega$ as a polytope via a system of inequalities we need to recall from (\ref{eq:MASO}) that the input-output mapping is defined on each region by the affine parameters $A_{\omega},B_{\omega}$ themselves obtained by composition of MASOs. Each layer pre-activation (feature map prior application of the nonlinearity) is denoted by $\bh^{\ell}(\bz)\in \mathbb{R}^{D^{\ell}}, \ell=1,\dots,L-1$ and given by 
$
    \bh^{\ell}(\bx)= \bA^{1\rightarrow \ell}_{\omega}\bz+\bb^{1\rightarrow \ell}_{\omega},
$
with  up-to-layer $\ell$ affine parameters
\begin{align}
    \bA^{1\rightarrow \ell}_{\omega}&\triangleq \bW^{\ell}\bQ^{\ell-1}_{\omega}\bW^{\ell-1}\dots \bQ^{1}_{\omega}\bW^{1},&&\bA^{1\rightarrow \ell}_{\omega}\in \mathbb{R}^{D^{\ell}\times S},\label{eq:Aell}\\
    \bb^{1\rightarrow \ell-1}_{\omega}&\triangleq \bv^{\ell}+\sum_{i=1}^{\ell}\bW^{\ell}\bQ^{\ell-1}_{\omega}\bW^{\ell-1}\dots \bQ^{i}_{\omega}\bv^{i}, &&\bb^{1\rightarrow \ell}_{\omega} \in \mathbb{R}^{D^\ell},\label{eq:bell}
\end{align}
which depend on the region $\omega$ in the latent space \footnote{looser condition can be put as the up-to-layer $\ell$ mapping is a CPA on a coarser partition than $\Omega$ but this is sufficient for our goal.}.
Notice that we have in particular $\bA^{L}_{\omega}=\bA_{\omega}$ and $\bb^{L}_{\omega}=\bb_{\omega}$, the entire DGN affine parameters from (\ref{eq:region_parameters}) on region $\omega$. The regions depend on the signs of the pre-activations defined as  $\bq^{\ell}(\bz)=\sign(\bh^{\ell}(\bz))$ due to the used activation function behaving linearly as long as the feature maps preserve the same sign. This holds for (leaky-)ReLU or absolute value, for max-pooling we would need to look at the argmax position of each pooling window, as pooling is rare in DGN we focus here on DN without max-pooling; let $\bq^{\rm all}(\bz)\triangleq [(\bq^{L-1}(\bz))^T,\dots,(\bq^{1}(\bz))^T]^T$ collect all the per layer sign operators without the last layer as it does not apply any activation.

\qq
\begin{lemma}
The $\bq^{\rm all}$ operator is piecewise constant and there is a bijection between $\Omega$ and $\im(\bq)$. 
\end{lemma}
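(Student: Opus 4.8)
The plan is to strip the network one layer at a time, exploiting that each pre-activation is an affine function of $\bz$ once the earlier layers' sign patterns are held fixed, and then to write every level set of $\bq^{\rm all}$ as an explicit polyhedron.

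First I would prove the piecewise-constancy claim by induction on the layer index. The layer-$1$ pre-activation $\bh^1(\bz)=\bW^1\bz+\bv^1$ is globally affine, so $\bq^1=\sign(\bh^1)$ is constant on each cell of the hyperplane arrangement $\{\bz:[\bW^1\bz+\bv^1]_k=0\}_k$; away from this measure-zero union of hyperplanes (where some pre-activation vanishes, which the non-degeneracy assumption lets us ignore) this gives a finite partition $\Omega^{(1)}$. Assume inductively a finite partition $\Omega^{(\ell-1)}$ on each cell of which $\bq^1,\dots,\bq^{\ell-1}$ are constant; then the matrices $\bQ^1_{\omega},\dots,\bQ^{\ell-1}_{\omega}$, hence $\bA^{1\rightarrow\ell}_{\omega}$ and $\bb^{1\rightarrow\ell}_{\omega}$ from (\ref{eq:Aell})--(\ref{eq:bell}), are constant on each cell, so $\bh^{\ell}$ is affine there. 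Refining every cell of $\Omega^{(\ell-1)}$ by the hyperplanes $\{[\bh^{\ell}]_k=0\}_k$ produces a finite partition $\Omega^{(\ell)}$ on whose cells $\bq^1,\dots,\bq^{\ell}$ are constant. At $\ell=L-1$ this is the partition $\Omega$, so $\bq^{\rm all}$ is constant on each of its finitely many cells: it is piecewise constant, with image contained in the finite set $\prod_{\ell=1}^{L-1}\{-1,1\}^{D^{\ell}}$.

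Next I would make the level sets explicit in order to get the bijection. Fix a sign pattern $\bq=(\bq^1,\dots,\bq^{L-1})\in\im(\bq^{\rm all})$ and let $\bh^{\ell}(\cdot;\bq)$ denote the affine map obtained from (\ref{eq:Aell})--(\ref{eq:bell}) by inserting the $\bQ$-matrices selected by $\bq^1,\dots,\bq^{\ell-1}$. A second induction — now propagating a single point forward through the layers — shows that $\bq^{\rm all}(\bz)=\bq$ holds if and only if $[\bq^{\ell}]_k\,[\bh^{\ell}(\bz;\bq)]_k>0$ for all $\ell,k$: in the ``only if'' direction $\bh^{\ell}$ agrees with $\bh^{\ell}(\cdot;\bq)$ at $\bz$ because it depends on $\bz$ only through $\bq^1(\bz),\dots,\bq^{\ell-1}(\bz)=\bq^1,\dots,\bq^{\ell-1}$, and in the ``if'' direction the strict sign constraints force the lower sign patterns to equal $\bq^1,\dots,\bq^{\ell-1}$ layer by layer. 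Hence $(\bq^{\rm all})^{-1}(\bq)$ equals the polyhedron $P_{\bq}=\{\bz\in\R^S:[\bq^{\ell}]_k\,[\bh^{\ell}(\bz;\bq)]_k>0,\ \forall\,\ell,k\}$, an intersection of finitely many open half-spaces and therefore convex, in particular connected.

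Finally I would conclude. The map $\Omega\to\im(\bq^{\rm all})$ sending a region to its (constant) value of $\bq^{\rm all}$ is well defined by the first step and surjective by definition of the image; it is injective because the regions of $\Omega$ are the maximal connected sets on which $\bq^{\rm all}$ is constant, so two regions carrying the same value $\bq$ both lie in the connected set $P_{\bq}=(\bq^{\rm all})^{-1}(\bq)$ and hence coincide. Therefore $\omega\mapsto\bq^{\rm all}(\omega)$ is a bijection. I expect the only real obstacle to be the bookkeeping around the self-referential definition of $\bh^{\ell}$ — making precise why each $\bh^{\ell}$ depends on $\bz$ solely through the earlier sign patterns, which is exactly what makes both inductions close — together with the routine caveat that one must work with open regions (strict inequalities) so that $\sign$ stays well defined away from the defining hyperplanes.
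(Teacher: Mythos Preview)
Your proof is correct and considerably more detailed than the paper's, which is a short informal sketch. The paper argues the bijection directly from non-degeneracy: moving $\bz$ so that any coordinate of $\bq^{\rm all}$ flips means crossing a pre-activation zero set (a ``knot''), which under non-degenerate weights changes the affine parameters $(\bA_{\omega},\bb_{\omega})$ and hence the region; conversely a change of region, i.e., of affine map, can only come from a change in some $\bQ^{\ell}_{\omega}$, which requires a sign flip. Your route is different in that you never invoke the algebraic fact that distinct sign patterns yield distinct affine maps; instead you show each level set $(\bq^{\rm all})^{-1}(\bq)$ is an open convex polyhedron and extract injectivity from connectedness. This buys you the explicit $\mathcal{H}$-representation of every region as a by-product (essentially Corollary~\ref{cor:H_rep}), while the paper's argument buys brevity and the direct ``knot-crossing'' picture. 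One point to tighten: when you assert that the regions of $\Omega$ are the maximal connected sets on which $\bq^{\rm all}$ is constant, you are implicitly using that $g$ is affine on each $P_{\bq}$ (since $(\bA_{\omega},\bb_{\omega})$ depend on $\bz$ only through the signs), so that $P_{\bq}$ lies inside a single CPA region; together with the inclusion $\omega\subseteq P_{\bq^{\rm all}(\omega)}$ you already have, this is what identifies your $\Omega^{(L-1)}$ with $\Omega$ and closes the bijection cleanly.
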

\qq

The above demonstrates the equivalence of knowing $\omega$ in which an input $\bz$ belongs to and knowing the sign pattern of the feature maps associated to $\bz$; we will thus use interchangeably $\bq^{\rm all}(\bz), \bz \in \omega$ and $\bq^{\rm all}(\omega)$.
From this, we see that the pre-activation signs and the regions are tied together. We can now leverage this result and provide the explicit region $\omega$ as a polytope via its sytem of inequality, to do so we need to collect the per-layer slopes and biases into
\begin{align}
\bA^{\rm all}_{\omega}=\begin{bmatrix}\bA^{1\rightarrow L-1}_{\omega}\\
\dots\\
\bA^{1\rightarrow 1}_{\omega}\end{bmatrix},\;
\bb^{\rm all}_{\omega}=\begin{bmatrix}\bb^{1\rightarrow L-1}_{\omega}\\
\dots\\
\bb^{1\rightarrow 1}_{\omega}\end{bmatrix}, \text{ $\bA^{\rm all}_{\omega} \in \mathbb{R}^{(\prod_{\ell=1}^{L-1}D^{\ell})\times S}$, $\bb^{\rm all}_{\omega}\in\mathbb{R}^{\prod_{\ell=1}^{L-1}D^{\ell}}$}.\label{eq:Aall}
\end{align}

\qq
\begin{cor}
\label{cor:H_rep}
The $\mathcal{H}$-representation of the polyhedral region $\omega$ is given by
\begin{align}
    \omega = \{\bz \in \mathbb{R}^S:\bA^{\rm all}_{\omega}\bz < -\bq^{\rm all}(\omega) \odot \bb^{\rm all}_{\omega}\}=\bigcap_{\ell=1}^{L-1}\{\bz \in \mathbb{R}^S:\bA^{1\rightarrow \ell}_{\omega}\bz <- \bq^{\ell}(\omega) \odot \bb^{1\rightarrow \ell}_{\omega}\},\label{eq:ineq}
\end{align}
with $\odot$ the Hadamard product.
\end{cor}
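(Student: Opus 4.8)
The plan is to read off the $\mathcal{H}$-representation of $\omega$ from Lemma~\ref{lemma:bijection} together with the explicit affine forms~(\ref{eq:Aell})--(\ref{eq:bell}) of the intermediate feature maps, arguing by induction on the layer index. By Lemma~\ref{lemma:bijection} the map $\bz\mapsto[\bq^1(\bz),\dots,\bq^{L-1}(\bz)]$ is constant on each cell of $\Omega$ and its level sets are exactly those cells, so $\bz\in\omega$ is equivalent to $\bq^{k}(\bz)=\bq^{k}(\omega)$ for all $k\in\{1,\dots,L-1\}$. Moreover, since $\bA^{\rm all}_\omega$ and $\bb^{\rm all}_\omega$ in~(\ref{eq:Aall}) are just the vertical stacks of the per-layer blocks $\bA^{1\rightarrow\ell}_\omega$ and $\bb^{1\rightarrow\ell}_\omega$, the single vector inequality $\bA^{\rm all}_\omega\bz < -\bq^{\rm all}(\omega)\odot\bb^{\rm all}_\omega$ is nothing but the conjunction over $\ell$ of the per-layer inequalities $\bA^{1\rightarrow\ell}_\omega\bz < -\bq^{\ell}(\omega)\odot\bb^{1\rightarrow\ell}_\omega$, so the two displayed descriptions of $\omega$ agree and it remains only to match the per-layer inequality system with the sign-code condition.

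The one point that needs care is that the identity $\bh^{\ell}(\bz)=\bA^{1\rightarrow\ell}_\omega\bz+\bb^{1\rightarrow\ell}_\omega$ does \emph{not} hold for arbitrary $\bz$: it is valid only where the Jacobian factors $\bQ^1,\dots,\bQ^{\ell-1}$ picked up along the forward pass equal $\bQ^1_\omega,\dots,\bQ^{\ell-1}_\omega$, i.e. where the sign code already coincides with $\bq^{\rm all}(\omega)$ through layer $\ell-1$. So one cannot substitute the affine form everywhere and must bootstrap. For $\ell=1$, $\bh^1(\bz)=\bW^1\bz+\bv^1=\bA^{1\rightarrow1}_\omega\bz+\bb^{1\rightarrow1}_\omega$ holds unconditionally, and a coordinatewise reading of the first block (unwinding the $\odot$ and the sign convention for $\bq^1(\omega)$) shows it is equivalent to requiring that each entry of $\bh^1(\bz)$ lie on the side of the hyperplane $\{[\bh^1]_j=0\}$ prescribed by $[\bq^1(\omega)]_j$, i.e. $\bq^1(\bz)=\bq^1(\omega)$. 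For the inductive step, assume $\bz$ satisfies the first $\ell-1$ blocks; by the induction hypothesis this gives $\bq^k(\bz)=\bq^k(\omega)$ for $k\le\ell-1$, hence the partial forward pass uses exactly $\bQ^1_\omega,\dots,\bQ^{\ell-1}_\omega$, and therefore $\bh^{\ell}(\bz)=\bA^{1\rightarrow\ell}_\omega\bz+\bb^{1\rightarrow\ell}_\omega$ now does hold; the same coordinatewise reading then shows the $\ell$-th block is equivalent to $\bq^{\ell}(\bz)=\bq^{\ell}(\omega)$. The strictness of the inequalities is consistent with $\omega$ being an open polytope, the excluded faces being precisely the zero sets of individual pre-activation coordinates, where the sign is undefined and Lemma~\ref{lemma:bijection} does not apply.

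Iterating from $\ell=1$ to $\ell=L-1$, a point $\bz$ satisfies all $L-1$ inequality blocks in~(\ref{eq:ineq}) if and only if $\bq^{k}(\bz)=\bq^{k}(\omega)$ for every $k$, which by Lemma~\ref{lemma:bijection} is equivalent to $\bz\in\omega$; combined with the stacking remark this yields both equalities in~(\ref{eq:ineq}). The main obstacle I expect is exactly this inductive bootstrap — making precise that satisfying the earlier inequalities is what licenses the affine expression for $\bh^{\ell}$ needed to interpret the next one — rather than the coordinatewise algebra, which is routine.
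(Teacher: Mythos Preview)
Your proposal is correct and follows the same approach as the paper: both argue that the bijection of Lemma~\ref{lemma:bijection} identifies $\omega$ with a fixed sign pattern of the pre-activations, and then read off the half-space constraints from those signs using the affine forms of $\bh^{\ell}$. The paper's own proof is extremely terse (essentially ``re-express the pre-activation sign conditions as hyperplanes'' with a pointer to \cite{balestriero2019geometry}), whereas you make explicit the one genuinely delicate point the paper glosses over: the affine expression $\bh^{\ell}(\bz)=\bA^{1\rightarrow\ell}_\omega\bz+\bb^{1\rightarrow\ell}_\omega$ is only valid once the earlier layers' sign patterns already match $\bq^{1}(\omega),\dots,\bq^{\ell-1}(\omega)$, which is exactly what your inductive bootstrap handles.
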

\qq

From the above, it is clear that the sign locates in which side of each hyperplane the region is located. We now have a direct way to obtain the polytope $\omega$ from its sign pattern $\bq^{\rm all}(\omega)$ or equivalently from an input $\bz \in \omega$; the only task left is to obtain the entire partition $\Omega$ collecting all the DN regions, which we now propose to do via a simple scheme.

{\bf Partition Cells Enumeration.}
The search for all cells in a partition is known as the cell enumeration problem and has been extensively studied in the context of speicific partitions such as hypreplane arrangements \cite{avis1996reverse,sleumer1999output,gerstner2006algorithms}. In our case however, the set of inequalitites of different regions changes. In fact, for any neighbour region, not only the sign pattern $\bq^{\rm all}$ will change but also $\bA^{\rm all}_{\omega}$ and $\bb^{\rm all}_{\omega}$ due to the composition of layers. In fact, changing one activation state say $-1$ to $1$ for a specific unit at layer $\ell$ will alter the affine parameters from (\ref{eq:Aell}) and (\ref{eq:bell}) due to the layer composition.
As such, we propose to enumerate all the cells $\omega \in \Omega$ with a deterministic algorithm that starts from an intial region and recursively explores its neighbouring cells untill all have been visited while recomputing the inequality system at each step. 
To do so, consider the initial region $\omega_{0}$. First, one finds all the non-redundant inequalities of the inequality system (\ref{eq:ineq}), the remaining inequalities define the faces of the polytope $\omega$. Second, one obtains any of the neighbouring regions sharing a face with $\omega_0$ by switching the sign in the entry of $\bq(\omega_0)$ corresponding to the considered face. Repeat this for all non-redundant inequalities to obtain all the adjacent regions to $\omega_0$ sharing a face with it. Each altered code defines an adjacent region and its sytem of inequality can be obtain as per Lemma~\ref{cor:H_rep}. Doing so for all the faces of the initial region and then iterating this process on all the newly discovered regions will enumerate the entire partition $\Omega$. We summarize this in Algo~\ref{algo:region} in the appendix and illustrate this recursive procedure in Fig.~\ref{fig:partition}.

We now have each cell as a polytope and enumerated the partition $\Omega$, we can now turn into the computation of the marginal and posterior DGN distributions.

\section{Analytical Moments for truncated Gaussian}
\label{appendix:analytical_form}

To lighten the derivation, we introduce extend the $[.]$ indexing operator such that for example for a matrix, $[.]_{-k,.}$ means that all the rows but the $k^{\rm th}$ are taken, and all columns are taken. Also, $[.]_{(k,l),.}$ means that only the $k^{\rm th}$ and $l^{\rm th}$ rows are taken and all the columns.
Let also introduce the following quantities
\begin{align*}
    [F(\ba,\bSigma)]_k=&\phi\left([\ba]_k;0,[\bSigma]_{k,k}\right)\Phi_{[[\ba]_{-k},\infty)}\big(\bmu(k), \bSigma(k)\big)\\
    [G(\ba,\bSigma)]_{k,l}=&\phi\left([\ba]_{(k,l)};0,[\bSigma]_{(k,j),(k,j)}\right)\Phi_{[[\ba]_{-(k,l)},\infty)}\Big(\bmu\big((k,l)\big),\bSigma\big((k,l)\big)\Big)
    \\
    H(\ba,\bSigma)=&G(\ba,\Sigma)+ \diag\left(\frac{\bl \odot F(\bl,\bSigma)-\big(\bSigma\odot G(\bl,\Sigma) \big)\mathbf{1}}{\diag(\bSigma)}\right)
\end{align*}
with $\bmu(u) = [\bSigma]_{-u,u}[\bSigma]_{u,u}^{-1}[\ba]_u$, and  $\bSigma(u)=[\bSigma]_{-u, -u}-[\bSigma]_{-u, u}[\bSigma]_{u,u}^{-1}[\bSigma]_{-u, u}^T$. Thanks to the above form, we can now obtain the integral $e^0_{\omega}(\bSigma)\triangleq \Phi_{\omega}(\mathbf{0},\bSigma)$ and the first two moments of a centered truncated gaussian $\be^1_{\omega}(\bSigma)\triangleq\int_{\omega}\bz \phi(\bz;\mathbf{0},\Sigma)$ and $\bE^2_{\omega}(\bSigma)\triangleq\int_{\omega}\bz\bz^T \phi(\bz;\mathbf{0},\Sigma)$

\begin{cor}
The integral and first two moments of a centered truncated gaussian are given by 
\begin{align}
e^0_{\omega}(\bSigma)=&\sum_{\Delta \in T(\omega)}\sum_{(s,C) \in T(\Delta)} s \Phi_{[\bl(C),\infty)}\left(0,R_c\bSigma R_{c}^T\right)d\bz,\\
\be^1_{\omega}(\bSigma)=&\bSigma\sum_{\Delta \in T(\omega)}\sum_{(s,C) \in T(\Delta)} s R_{C}^TF(\bl_{\omega, c},R_c\bSigma R_{c}^T),\\
\bE^2_{\omega}(\bSigma)=&\bSigma\left(\sum_{\Delta \in T(\omega)}\sum_{(s,C) \in T(\Delta)} s R_{C}^T(H(\bl_{\omega,C},R_c\bSigma R_{c}^T))R_{C}\right)\bSigma +e^0_{\omega}(\bSigma)\bSigma
\end{align}
\end{cor}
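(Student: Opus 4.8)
The plan is to combine the region decomposition of Lemma~\ref{lemma:integral_decomposition} with the classical closed forms for Gaussian integrals over an orthant. First, I would apply Lemma~\ref{lemma:integral_decomposition}, extended entrywise by linearity to the vector- and matrix-valued integrands $\bz\mapsto\bz\,\phi(\bz;\mathbf{0},\bSigma)$ and $\bz\mapsto\bz\bz^T\phi(\bz;\mathbf{0},\bSigma)$, to rewrite each of $e^0_\omega(\bSigma)$, $\be^1_\omega(\bSigma)$, $\bE^2_\omega(\bSigma)$ as a signed sum over $\Delta\in T(\omega)$ and over the pairs $(s,C)$ in the inclusion--exclusion decomposition of $\Delta$, of the corresponding integral over an open polytope $C$ having at most $S$ faces. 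It then suffices to evaluate the Gaussian integral and its first two moments over one such $C$.

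Second, write $C=\{\bz:R_C\bz>\bl_{\omega,C}\}$, where the (at most $S$) rows of $R_C$ are the outward normals of the faces of $C$ and $\bl_{\omega,C}\equiv\bl(C)$ the matching offsets. I would argue probabilistically rather than through Jacobians, since $R_C$ may be rectangular: for $Z\sim\mathcal{N}(\mathbf{0},\bSigma)$ one has $R_C Z\sim\mathcal{N}(\mathbf{0},R_C\bSigma R_C^T)$, hence
\[
\int_C\phi(\bz;\mathbf{0},\bSigma)\,d\bz=\Pr\big(R_C Z>\bl_{\omega,C}\big)=\Phi_{[\bl(C),\infty)}\big(\mathbf{0},R_C\bSigma R_C^T\big),
\]
which is the first claimed identity once the signs $s$ are reinstated. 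For the moments I would condition on $R_CZ$: as $(Z,R_CZ)$ is jointly Gaussian, $\mathbb{E}[Z\mid R_CZ=\by]=\bSigma R_C^T(R_C\bSigma R_C^T)^{-1}\by$ and $\mathrm{Cov}(Z\mid R_CZ)=\bSigma-\bSigma R_C^T(R_C\bSigma R_C^T)^{-1}R_C\bSigma$ is constant, so the tower rule yields, writing $\bSigma'\triangleq R_C\bSigma R_C^T$,
\[
\int_C\bz\,\phi(\bz;\mathbf{0},\bSigma)\,d\bz=\bSigma R_C^T\bSigma'^{-1}\,\Phi^{[1]}_{\bl(C)}\big(\bSigma'\big),
\]
\[
\int_C\bz\bz^T\phi(\bz;\mathbf{0},\bSigma)\,d\bz=\big(\bSigma-\bSigma R_C^T\bSigma'^{-1}R_C\bSigma\big)\Phi_{[\bl(C),\infty)}\big(\mathbf{0},\bSigma'\big)+\bSigma R_C^T\bSigma'^{-1}\,\Phi^{[2]}_{\bl(C)}\big(\bSigma'\big)\,\bSigma'^{-1}R_C\bSigma,
\]
where $\Phi^{[1]}_\ba,\Phi^{[2]}_\ba$ denote the first two moments of a centered Gaussian integrated over the orthant $[\ba,\infty)$.

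Third, I would substitute the Tallis-type closed forms $\Phi^{[1]}_\ba(\bSigma')=\bSigma'F(\ba,\bSigma')$ and $\Phi^{[2]}_\ba(\bSigma')=\Phi_{[\ba,\infty)}(\mathbf{0},\bSigma')\,\bSigma'+\bSigma'H(\ba,\bSigma')\bSigma'$ \cite{tallis1965plane,tallis1961moment,bg2009moments}, with $F,G,H$ as defined above. The factors $\bSigma'$ carried by these formulas cancel the $\bSigma'^{-1}$ terms: the first moment collapses to $\bSigma R_C^T F(\bl_{\omega,C},\bSigma')$, while in the second moment the term $-\bSigma R_C^T\bSigma'^{-1}R_C\bSigma\,\Phi_{[\bl(C),\infty)}$ cancels the corresponding cross contribution, leaving $\Phi_{[\bl(C),\infty)}(\mathbf{0},\bSigma')\,\bSigma+\bSigma R_C^T H(\bl_{\omega,C},\bSigma')R_C\bSigma$. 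Summing over $\Delta$ and $(s,C)$ with the signs, factoring the constant $\bSigma$ out of the sum, and using $\sum s\,\Phi_{[\bl(C),\infty)}(\mathbf{0},\bSigma')=e^0_\omega(\bSigma)$ from the first identity gives exactly the three stated formulas.

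The main obstacle I anticipate is the bookkeeping around polytopes $C$ with strictly fewer than $S$ faces, for which $R_C$ is rectangular and $\bz\mapsto R_C\bz$ is a projection rather than a change of variables; the conditional-expectation identities above sidestep this, since they only require $R_C\bSigma R_C^T$ to be invertible, i.e. $R_C$ to have full row rank, which holds once redundant faces have been removed in forming the triangulation $T(\omega)$ and its inclusion--exclusion refinement. The remaining work is purely algebraic: tracking the $\bSigma'^{-1}$ cancellations and carrying the sign $(-1)^{|J|+S}$ through Lemma~\ref{lemma:integral_decomposition}.
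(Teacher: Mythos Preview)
Your argument is correct and arrives at the stated formulas, but the route differs from the paper's. The paper treats $R_C$ as a square invertible matrix (augmenting it with extra rows orthogonal to the face normals when $C$ has fewer than $S$ faces, cf.\ the definition $R_C=\begin{pmatrix}C^T\\H^T\bSigma_\omega^{-1}\end{pmatrix}$ in the proof of Theorem~\ref{thm:all_analytical}) and performs the literal change of variables $\bu=R_C\bz$, so that the region $C$ becomes the orthant $[\bl(C),\infty)$ and the Jacobian factor $|\det R_C|^{-1}$ is absorbed into $\phi(\bu;\mathbf{0},R_C\bSigma R_C^T)$; the simplification $R_C^{-1}(R_C\bSigma R_C^T)=\bSigma R_C^T$ then yields the final expressions directly. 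Your conditioning/tower-rule argument replaces this Jacobian bookkeeping with the identity $\mathbb{E}[Z\mid R_CZ]=\bSigma R_C^T(R_C\bSigma R_C^T)^{-1}R_CZ$ and the constant conditional covariance, which has the advantage you note: it needs only full row rank of $R_C$, so the fewer-than-$S$-faces case requires no artificial augmentation. The paper's approach is slightly more mechanical (one substitution, one cancellation), while yours is cleaner conceptually and makes the cancellation of the $\bSigma'^{-1}$ factors in the second moment transparent; either way the algebra after invoking the Tallis formulas $\Phi^{[1]}_{\ba}(\bSigma')=\bSigma'F$ and $\Phi^{[2]}_{\ba}(\bSigma')=\Phi\,\bSigma'+\bSigma'H\bSigma'$ is identical.
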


To simplify notations let consider the following notation of the posterior (\ref{eq:posterior}) where are incorporate the terms independent of $\bz$ into
\begin{align}
    \alpha_{\omega}(\bx)=\frac{
\phi(\bx;B_{\omega},\bSigma_{\bx}+A_{\omega}\bSigma_{\bz}A_{\omega}^T)}{\sum_{\omega}\phi(\bx;B_{\omega},\bSigma_{\bx}+A_{\omega}\bSigma_{\bz}A_{\omega}^T)\Phi_{\omega}(\bmu_{\omega}(\bx),\bSigma_{\omega})},
\end{align}
leading to $p(\bz|\bx)=\sum_{\omega \in \Omega}\delta_{\omega}(\bz)\alpha_{\omega}(\bx)\phi(\bz;\bmu_{\omega}(\bx),\bSigma_{\omega})$.

\begin{thm}
\label{thm:all_analytical}
The first (per region) moments of the DGN posterior are given by
\begin{align*}
\mathbb{E}_{\bz|\bx}[\Indic_{\bz\in\omega}]&=\alpha_{\omega}(\bx)e^0_{\omega}(\bSigma_{\omega}),\\
\mathbb{E}_{\bz|\bx}[\bz\Indic_{\bz\in\omega}]&=\alpha_{\omega}(\bx)\big(\be^1_{\omega-\bmu_{\omega}(\bx)}(\bSigma_{\omega})+e^0_{\omega-\bmu_{\omega}(\bx)}(\bSigma_{\omega})\bmu_{\omega}(\bx)\big)\\
%\sum_{\Delta \in T(\omega)}\sum_{(s,C) \in T(\Delta)} s\Big(R_{C}^{-1}\Phi^{[1]}_{\bl_{\omega, c}(\bx)}(\bSigma_{\omega,C})+\bmu(\omega) \Phi_{\omega}(R_{C}\bmu_{\omega}(\bx),\bSigma_{\omega,C}) \Big),\\
%
\mathbb{E}_{\bz|\bx}[\bz\bz^T\Indic_{\bz\in\omega}] &= \alpha_{\omega}(\bx)\big(\bE^2_{\omega-\bmu_{\omega}(\bx)}(\bSigma_{\omega})+\be^1_{\omega-\bmu_{\omega}(\bx)}(\bSigma_{\omega})\bmu_{\omega}(\bx)^T\\
&\hspace{2cm}+\bmu_{\omega-\bmu_{\omega}(\bx)}(\bx)\be^1_{\omega-\bmu_{\omega}(\bx)}(\bx)^T+\bmu_{\omega}(\bx)\bmu_{\omega}(\bx)^Te^0_{\omega}(\bx)\big)
%\sum_{\Delta \in T(\omega)}\sum_{(s,C) \in T(\Delta)} s\Big(R^{-1}\Phi^{[2]}_{\bl_{\omega, c}(\bx)}(\bSigma_{\omega,C}) (R^T)^{-1}+  \Phi^{[1]}_{\bl_{\omega, c}(\bx)}(\bSigma_{\omega,C})  \bmu_{\omega}(\bx)^T\nonumber \\
%&+\bmu_{\omega}(\bx) \Phi^{[1]}_{\bl_{\omega, c}(\bx)}(\bSigma_{\omega,C})^T  +\bmu_{\omega}(\bx)\bmu_{\omega}(\bx)^T\Phi_{[\bl_{\omega, c}(\bx),\infty)}(\mathbf{0}, \bSigma_{\omega,C})\Big)
\end{align*}
which we denote  $\mathbb{E}_{\bz|\bx}[\Indic_{\bz\in\omega}]\triangleq e^0_{\omega}(\bx)$, $\mathbb{E}_{\bz|\bx}[\bz\Indic_{\bz\in\omega}]\triangleq \be^1_{\omega}(\bx)$ and $\mathbb{E}_{\bz|\bx}[\bz\bz^T\Indic_{\bz\in\omega}] \triangleq \bE^2_{\omega}(\bx)$. (Proof in \ref{proof:all_analytical}.)
\end{thm}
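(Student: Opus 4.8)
The plan is to compute the three posterior moments directly from the explicit form of the posterior density in (\ref{eq:posterior}), using the affine change of variables $\bz \mapsto \bz - \bmu_{\omega}(\bx)$ to reduce everything to \emph{centered} truncated-Gaussian integrals, which are exactly the quantities $e^0,\be^1,\bE^2$ computed in the preceding Corollary. First I would rewrite the posterior in the compact form $p(\bz|\bx)=\sum_{\omega\in\Omega}\delta_{\omega}(\bz)\alpha_{\omega}(\bx)\phi(\bz;\bmu_{\omega}(\bx),\bSigma_{\omega})$, pulling all $\bz$-independent prefactors into $\alpha_{\omega}(\bx)$ so that the only $\bz$-dependence inside each summand is the Gaussian $\phi(\bz;\bmu_{\omega}(\bx),\bSigma_{\omega})$ restricted (via $\Indic_{\bz\in\omega}$) to the polytope $\omega$.

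The zeroth moment is immediate: $\mathbb{E}_{\bz|\bx}[\Indic_{\bz\in\omega}] = \alpha_{\omega}(\bx)\int_{\omega}\phi(\bz;\bmu_{\omega}(\bx),\bSigma_{\omega})\,d\bz$, and substituting $\by = \bz-\bmu_{\omega}(\bx)$ turns the integral into $\Phi_{\omega-\bmu_{\omega}(\bx)}(\mathbf{0},\bSigma_{\omega}) = e^0_{\omega-\bmu_{\omega}(\bx)}(\bSigma_{\omega})$; I would note that in the statement this is abbreviated $e^0_{\omega}(\bSigma_{\omega})$ with the shift left implicit. For the first moment, the same substitution gives $\int_{\omega}\bz\,\phi(\bz;\bmu_{\omega}(\bx),\bSigma_{\omega})\,d\bz = \int_{\omega-\bmu_{\omega}(\bx)}(\by+\bmu_{\omega}(\bx))\phi(\by;\mathbf{0},\bSigma_{\omega})\,d\by$, which splits linearly into $\be^1_{\omega-\bmu_{\omega}(\bx)}(\bSigma_{\omega}) + \bmu_{\omega}(\bx)\,e^0_{\omega-\bmu_{\omega}(\bx)}(\bSigma_{\omega})$; multiplying by $\alpha_{\omega}(\bx)$ yields the claimed expression. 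For the second moment, expanding $\bz\bz^T = (\by+\bmu_{\omega}(\bx))(\by+\bmu_{\omega}(\bx))^T = \by\by^T + \by\,\bmu_{\omega}(\bx)^T + \bmu_{\omega}(\bx)\,\by^T + \bmu_{\omega}(\bx)\bmu_{\omega}(\bx)^T$ and integrating term by term against $\phi(\by;\mathbf{0},\bSigma_{\omega})$ on $\omega-\bmu_{\omega}(\bx)$ gives the four-term sum $\bE^2_{\omega-\bmu_{\omega}(\bx)}(\bSigma_{\omega}) + \be^1_{\omega-\bmu_{\omega}(\bx)}(\bSigma_{\omega})\bmu_{\omega}(\bx)^T + \bmu_{\omega}(\bx)\be^1_{\omega-\bmu_{\omega}(\bx)}(\bSigma_{\omega})^T + \bmu_{\omega}(\bx)\bmu_{\omega}(\bx)^T e^0_{\omega-\bmu_{\omega}(\bx)}(\bSigma_{\omega})$, again scaled by $\alpha_{\omega}(\bx)$; this matches the stated formula (modulo the paper's slightly compressed notation for the shifted quantities).

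The only genuinely non-routine ingredient is that the centered truncated-Gaussian integral and its two moments on $\omega$ are available in closed form, and this is exactly what the Gaussian-integration machinery of Section~\ref{sec:integral} supplies: Lemma~\ref{lemma:integral_decomposition} decomposes $\int_{\omega}$ into signed sums of integrals over open polytopes with at most $S$ faces, and the preceding Corollary then evaluates $e^0_{\omega}(\bSigma)$, $\be^1_{\omega}(\bSigma)$, $\bE^2_{\omega}(\bSigma)$ via Tallis-type formulas in terms of $F$, $G$, $H$. So the main obstacle — the actual evaluation of truncated-Gaussian integrals over general polytopes — has been isolated and dispatched in earlier results, and the present theorem reduces to the affine-shift bookkeeping above. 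I would close by remarking that summing the per-region first moments over $\omega$ recovers $\be^1(\bx)=\sum_{\omega}\be^1_{\omega}(\bx)$ and likewise for $\bE^2(\bx)$, consistent with $\{\omega\}_{\omega\in\Omega}$ partitioning $\mathbb{R}^S$ and $p(\bz|\bx)$ being a genuine density.
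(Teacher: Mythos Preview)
Your proposal is correct and follows exactly the paper's own proof: rewrite the posterior as $\sum_{\omega}\Indic_{\bz\in\omega}\alpha_{\omega}(\bx)\phi(\bz;\bmu_{\omega}(\bx),\bSigma_{\omega})$, then apply the affine substitution $\by=\bz-\bmu_{\omega}(\bx)$ to reduce each moment to the centered truncated-Gaussian quantities $e^0,\be^1,\bE^2$ from the preceding Corollary. Your observation about the slightly compressed/inconsistent notation for the shifted region is also accurate.
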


\section{Implementation Details}
\label{sec:implementation}

The Delaunay triangulation needs the $\mathcal{V}$-representation of $\omega$, the vertices which convex hull form the region \cite{grunbaum2013convex}. Given that we have the $\mathcal{H}$-representation, finding the vertices is known as the vertex enumeration problem \cite{ dyer1983complexity}.
To compute the triangulation we use the Python scipy \cite{virtanen2020scipy} implementation which interfaces the C/C++ Qhull implementation \cite{barber1996quickhull}. 
To compute the $\mathcal{H}\mapsto \mathcal{V}$ representation and vice-versa we leverage pycddlib \footnote{\url{https://pypi.org/project/pycddlib/}} which interfaces the C/C++ cddlib library \footnote{\url{https://inf.ethz.ch/personal/fukudak/cdd_home/index.html}} employing the double description method \cite{motzkin1953double}.

\section{Figures}

We demonstrate here additional figures for the posterior and marginal distribution of a DGN.

\begin{figure}[H]
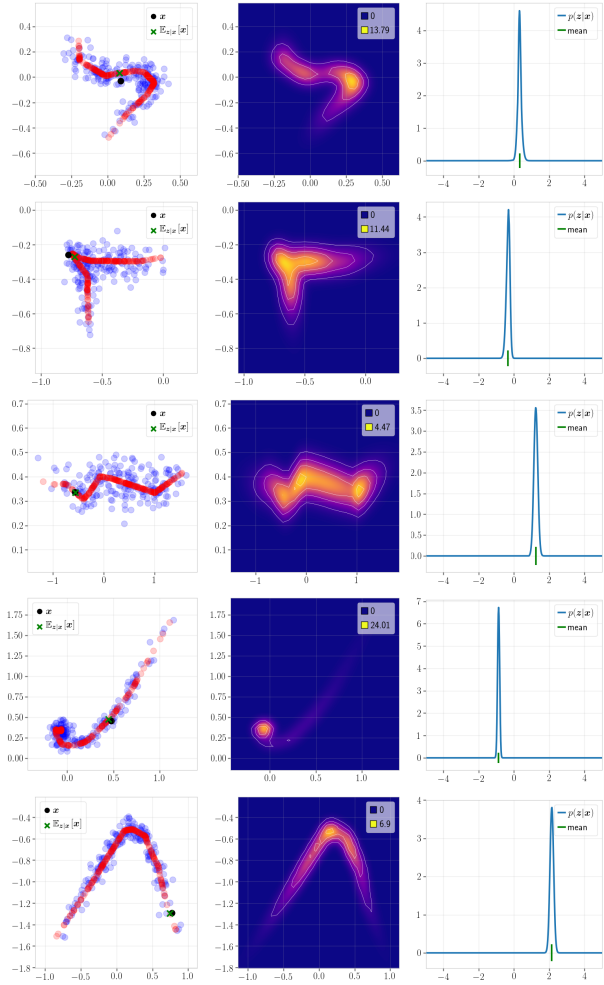

    \centering
    \foreach \n in {146,187,37,79,53}{
    \includegraphics[width=0.14\linewidth]{images/toy/samples_\n_0.png}
    \includegraphics[width=0.14\linewidth]{images/toy/proba_\n_0.png}
    \includegraphics[width=0.14\linewidth]{images/toy/prior_\n_0.png}
    \\}
    \caption{Additional random DGNs with their samples, the posterior and the marginal distributions.}
    \label{fig:my_label}
\end{figure}

\iffalse
\begin{figure}[H]
    \centering
    \foreach \n in {146,187,37,79,53}{
    \includegraphics[width=0.2\linewidth]{images/toy/samples_\n_1.png}
    \includegraphics[width=0.2\linewidth]{images/toy/proba_\n_1.png}
    \includegraphics[width=0.2\linewidth]{images/toy/prior_\n_1.png}
    \\}
    \caption{Caption}
    \label{fig:my_label}
\end{figure}

\fi

\section{Algorithms}

\begin{algorithm}[H]
\SetAlgoLined
\KwData{Starting region $\omega$ and $\bq(\omega)$, initial set ($\Omega$)}
\KwResult{Updated $\Omega$}

\eIf{$\omega \not \in \Omega$}{
$\Omega\gets \Omega\cup \{\omega\}$\;
}{Quit}
$ I=reduce(A^{\rm all}_{\omega},B^{\rm all}_{\omega})$\;
 \For{i $\in$ I}{
  SearchRegion(flip($\bq(\omega),i$), $\Omega$)\;
  }
 \caption{SearchRegion}
 \label{algo:region}
\end{algorithm}

\section{Proofs}
\label{appendix:proofs}

In this section we provide all the proofs for the main paper theoretical claims. In particular we will go through the derivations of the per region posterior first moments and then the derivation of the expectation and maximization steps.

\subsection{Proof of Lemma~\ref{lemma:conditional}}
\label{proof:conditional}
\begin{proof}
The proof consists of expressing the conditional distribution and using the properties of DGN with piecewise affine nonlinearities. We are able to split the distribution into a mixture model as follows:
\begin{align*}
    p(\bx|\bz)=&\frac{1}{(2\pi)^{D/2}\sqrt{|\det{\Sigma_{\bx}}|}}e^{-\frac{1}{2}(\bx - g(\bz))^T\Sigma_{\bx}^{-1}(\bx-g(\bz))}\\
    =&\frac{1}{(2\pi)^{D/2}\sqrt{|\det{\Sigma_{\bx}}|}}e^{-\frac{1}{2}(\bx - \sum_{\omega \in \Omega}\Indic_{\bz \in \omega}(A_{\omega}\bz + B_{\omega}))^T\Sigma_{\bx}^{-1}(\bx-\sum_{\omega \in \Omega}\Indic_{\bz \in \omega}(A_{\omega}\bz + B_{\omega}))}\\
    =&\frac{1}{(2\pi)^{D/2}\sqrt{|\det{\Sigma_{\bx}}|}}e^{-\frac{1}{2}\sum_{\omega \in \Omega}\Indic_{\bz \in \omega}(\bx - (A_{\omega}\bz + B_{\omega}))^T\Sigma_{\bx}^{-1}(\bx-(A_{\omega}\bz + B_{\omega}))}\\
    =&\sum_{\omega \in \Omega}\Indic_{\bz \in \omega}\frac{1}{(2\pi)^{D/2}\sqrt{|\det{\Sigma_{\bx}}|}}e^{-\frac{1}{2}(\bx - (A_{\omega}\bz + B_{\omega}))^T\Sigma_{\bx}^{-1}(\bx-(A_{\omega}\bz + B_{\omega}))}\\
    =&\sum_{\omega \in \Omega}\Indic_{\bz \in \omega}\phi(\bx|A_{\omega}\bz+B_{\omega},\Sigma_{\bx})
\end{align*}
\end{proof}

\subsection{Proof of Proposition~\ref{prop:mppca}}
\label{proof:mppca}

\begin{proof}
This result is direct by noticing that the probability to obtain a specific region slope and bias is the probability that the sampled latent vector lies in the corresponding region. This probability is obtained simply by integrating the latent gaussian distribution on the region. We obtain the result of the proposition.
\end{proof}

\subsection{Proof of Theorem~\ref{thm:px}}
\label{proof:px}

\begin{proof}
For the first part, we simply leverage the known result from linear Gaussian models \cite{roweis1999unifying} stating that

\begin{align*}
    p(\bz|\bx)=&\frac{p(\bx|\bz)p(\bz)}{p(\bx)}\\
    =&\frac{1}{p(\bx)} \frac{e^{-\frac{1}{2}(\bx - g(\bz))^T\Sigma^{-1}_{\bx}(\bx - g(\bz))}}{(2\pi)^{D/2}\sqrt{|\det(\Sigma_{\bx})|}}\frac{e^{-\frac{1}{2}(\bz - \bmu)^T\Sigma^{-1}_{\bz}(\bz - \bmu)}}{(2\pi)^{S/2}\sqrt{|\det(\Sigma_{\bz})|}}\\
    =&\frac{1}{p(\bx)}\Big(\sum_{\omega \in \Omega} \Indic_{\bz \in \omega} \frac{e^{-\frac{1}{2}(\bx - A_{\omega}\bz-B_{\omega})^T\Sigma^{-1}_{\bx}(\bx - A_{\omega}\bz-B_{\omega})}}{(2\pi)^{D/2}\sqrt{|\det(\Sigma_{\bx})|}}\Big)\frac{e^{-\frac{1}{2}(\bz - \bmu)^T\Sigma^{-1}_{\bz}(\bz - \bmu)}}{(2\pi)^{S/2}\sqrt{|\det(\Sigma_{\bz})|}}\\
    =&\frac{1}{p(\bx)}\sum_{\omega \in \Omega} \Indic_{\bz \in \omega} \frac{e^{-\frac{1}{2}(\bx - A_{\omega}\bz-B_{\omega})^T\Sigma^{-1}_{\bx}(\bx - A_{\omega}\bz-B_{\omega})-\frac{1}{2}\bz ^T\Sigma^{-1}_{\bz}\bz }}{(2\pi)^{(S+D)/2}\sqrt{|\det(\Sigma_{\bx})||\det(\Sigma_{\bz})|}}\\
    =&\frac{1}{p(\bx)}\sum_{\omega \in \Omega} \Indic_{\bz \in \omega} \frac{e^{-\frac{1}{2}((\bx-B_{\omega}) - A_{\omega}\bz)^T\Sigma^{-1}_{\bx}((\bx -B_{\omega})- A_{\omega}\bz)-\frac{1}{2}\bz^T\Sigma^{-1}_{\bz}\bz }}{(2\pi)^{(S+D)/2}\sqrt{|\det(\Sigma_{\bx})||\det(\Sigma_{\bz})|}}\\
    =&\frac{1}{p(\bx)}\sum_{\omega \in \Omega} \Indic_{\bz \in \omega} \frac{e^{
    -\frac{1}{2}((A_{\omega}^T\Sigma^{-1}_{\bx}A_{\omega}+\Sigma_{\bz}^{-1})^{-1}A_{\omega}^T\Sigma_{\bx}^{-1}(\bx-B_{\omega}) - \bz)^T(A_{\omega}^T\Sigma^{-1}_{\bx}A_{\omega}+\Sigma_{\bz}^{-1})((A_{\omega}^T\Sigma^{-1}_{\bx}A_{\omega}+\Sigma_{\bz}^{-1})^{-1}A_{\omega}^T\Sigma_{\bx}^{-1}(\bx -B_{\omega})- \bz) }}{(2\pi)^{(S+D)/2}\sqrt{|\det(\Sigma_{\bx})||\det(\Sigma_{\bz})|}}\\
    &\times e^{-\frac{1}{2}((\bx-B_{\omega})^T\Sigma_{\bx}^{-1}(\bx-B_{\omega}))+\frac{1}{2}((\bx-B_{\omega})^T\Sigma_{\bx}^{-1}A_{\omega}(A_{\omega}^T\Sigma^{-1}_{\bx}A_{\omega}+\Sigma_{\bz}^{-1})^{-1}A_{\omega}^T\Sigma_{\bx}^{-1}(\bx -B_{\omega}))}\\
    =&\frac{1}{p(\bx)}\sum_{\omega \in \Omega} \Indic_{\bz \in \omega} \frac{e^{-\frac{1}{2}((A_{\omega}^T\Sigma^{-1}_{\bx}A_{\omega}+\Sigma_{\bz}^{-1})^{-1}A_{\omega}^T\Sigma_{\bx}^{-1}(\bx-B_{\omega}) - \bz)^T(A_{\omega}^T\Sigma^{-1}_{\bx}A_{\omega}+\Sigma_{\bz}^{-1})((A_{\omega}^T\Sigma^{-1}_{\bx}A_{\omega}+\Sigma_{\bz}^{-1})^{-1}A_{\omega}^T\Sigma_{\bx}^{-1}(\bx -B_{\omega})- \bz) }}{(2\pi)^{(S+D)/2}\sqrt{|\det(\Sigma_{\bx})||\det(\Sigma_{\bz})|}}\\
    &\times e^{-\frac{1}{2}((\bx-B_{\omega})^T(\Sigma_{\bx}^{-1}-\Sigma_{\bx}^{-1}A_{\omega}(A_{\omega}^T\Sigma^{-1}_{\bx}A_{\omega}+\Sigma_{\bz}^{-1})^{-1}A_{\omega}^T\Sigma_{\bx}^{-1})(\bx-B_{\omega}))}\\
    =&\frac{1}{p(\bx)}\sum_{\omega \in \Omega} \Indic_{\bz \in \omega} \frac{e^{-\frac{1}{2}((A_{\omega}^T\Sigma^{-1}_{\bx}A_{\omega}+\Sigma_{\bz}^{-1})^{-1}A_{\omega}^T\Sigma_{\bx}^{-1}(\bx-B_{\omega}) - \bz)^T(A_{\omega}^T\Sigma^{-1}_{\bx}A_{\omega}+\Sigma_{\bz}^{-1})((A_{\omega}^T\Sigma^{-1}_{\bx}A_{\omega}+\Sigma_{\bz}^{-1})^{-1}A_{\omega}^T\Sigma_{\bx}^{-1}(\bx -B_{\omega})- \bz) }}{(2\pi)^{(S+D)/2}\sqrt{|\det(\Sigma_{\bx})||\det(\Sigma_{\bz})|}}\\
    &\times e^{-\frac{1}{2}((\bx-B_{\omega})^T(\Sigma_{\bx}+A_{\omega}\Sigma_{\bz}A_{\omega}^T)^{-1}(\bx-B_{\omega}))}\\
    =&\frac{1}{p(\bx)}\sum_{\omega \in \Omega} \Indic_{\bz \in \omega} \frac{e^{-\frac{1}{2}(\bmu_{\omega}(\bx)- \bz)^T\Sigma_{\omega}^{-1}(\bmu_{\omega}(\bx)- \bz) }}{(2\pi)^{(S+D)/2}\sqrt{|\det(\Sigma_{\bx})||\det(\Sigma_{\bz})|}} e^{-\frac{1}{2}((\bx-B_{\omega})^T(\Sigma_{\bx}+A_{\omega}\Sigma_{\bz}A_{\omega}^T)^{-1}(\bx-B_{\omega}))}
\end{align*}
with $\bmu_{\omega}(\bx)=\Sigma_{\omega}A_{\omega}^T\Sigma_{\bx}^{-1}(\bx-B_{\omega}) $
and $\Sigma_{\omega}=(A_{\omega}^T\Sigma^{-1}_{\bx}A_{\omega}+\Sigma_{\bz}^{-1})^{-1}$
as a result it corresponds to a mixture of truncated gaussian, each living on $\omega$. Now we determine the renormalization constant:
\begin{align*}
    p(\bx)=&\int p(\bx|\bz)p(\bz) d\bz\\
    =&\sum_{\omega\in\Omega}\int_{\omega} \Indic_{\bz \in \omega} \frac{e^{-\frac{1}{2}(\bmu_{\omega}(\bx)- \bz)^T\Sigma_{\omega}^{-1}(\bmu_{\omega}(\bx)- \bz) }}{(2\pi)^{(S+D)/2}\sqrt{|\det(\Sigma_{\bx})||\det(\Sigma_{\bz})|}} e^{-\frac{1}{2}((\bx-B_{\omega})^T(\Sigma_{\bx}+A_{\omega}\Sigma_{\bz}A_{\omega}^T)^{-1}(\bx-B_{\omega}))}d\bz\\
    =&\sum_{\omega\in\Omega}\Indic_{\bz \in \omega}\frac{e^{-\frac{1}{2}((\bx-B_{\omega})^T(\Sigma_{\bx}+A_{\omega}\Sigma_{\bz}A_{\omega}^T)^{-1}(\bx-B_{\omega}))}}{(2\pi)^{D/2}\sqrt{|\det(\Sigma_{\bx})||\det(\Sigma_{\bz})|}}\sqrt{\det(\Sigma_{\omega})}\int_{\omega}  \phi(\bz;\bmu_{\omega}(\bx),\Sigma_{\omega})d\bz\\
    =&\sum_{\omega\in\Omega}\Indic_{\bz \in \omega}\frac{e^{-\frac{1}{2}((\bx-B_{\omega})^T(\Sigma_{\bx}+A_{\omega}\Sigma_{\bz}A_{\omega}^T)^{-1}(\bx-B_{\omega}))}}{(2\pi)^{D/2}\sqrt{|\det(\Sigma_{\bx})||\det(\Sigma_{\bz})|}}\sqrt{\det(\Sigma_{\omega})}\Phi_{\omega}(\bmu_{\omega}(\bx),\Sigma_{\omega})\\
    =&\sum_{\omega\in\Omega}\Indic_{\bz \in \omega}\frac{\sqrt{\det(\Sigma_{\bx}+A_{\omega}\Sigma_{\bz}A_{\omega}^T)\det(\Sigma_{\omega})}}{\sqrt{|\det(\Sigma_{\bx})||\det(\Sigma_{\bz})|}}\phi(\bx;B_{\omega},\Sigma_{\bx}+A_{\omega}\Sigma_{\bz}A_{\omega}^T)\Phi_{\omega}(\bmu_{\omega}(\bx),\Sigma_{\omega}),
\end{align*}
now using the Matrix determinant lemma \cite{harville1998matrix} we have that $\det(\Sigma_{\bx}+A_{\omega}\Sigma_{\bz}A_{\omega}^T)=\det(\Sigma_{\bz}^{-1}+A_{\omega}^T\Sigma_{\bx}^{-1}A_{\omega})\det(\Sigma_{\bx})\det(\Sigma_{\bz})$ leading to 
\begin{align*}
p(\bx)=&\sum_{\omega}\phi(\bx;B_{\omega},\Sigma_{\bx}+A_{\omega}\Sigma_{\bz}A_{\omega}^T)\Phi_{\omega}(\bmu_{\omega}(\bx),\Sigma_{\omega}),\\
p(\bz|\bx)=&\sum_{\omega}\delta_{\omega}(\bz)\frac{
\phi(\bx;B_{\omega},\Sigma_{\bx}+A_{\omega}\Sigma_{\bz}A_{\omega}^T)\phi(\bz;\bmu_{\omega}(\bx),\Sigma_{\omega})}{\sum_{\omega}\phi(\bx;B_{\omega},\Sigma_{\bx}+A_{\omega}\Sigma_{\bz}A_{\omega}^T)\Phi_{\omega}(\bmu_{\omega}(\bx),\Sigma_{\omega})}.
\end{align*}
\end{proof}

\subsection{Proof of Lemma~\ref{lemma:zeronoise}}
\label{proof:zeronoise}

The proof will consist of observing that the posterior (prior rewriting) can be expressed as a softmax of a quantity rescaled by the standard deviation.

\begin{proof}
\begin{align*}
    \log(\phi(\bz;\bmu_{\omega}(\bx),\Sigma_{\omega})) = & -\frac{1}{2}(\bx-\bmu_{\omega})^T\Sigma_{\omega}^{-1}(\bx -\bmu_{\omega}(\bx))) -\frac{1}{2}\log(\det(\Sigma_{\omega}))+ cst\\
    = &-\frac{1}{2}(\bz-\Sigma_{\omega}A^T_{\omega}\Sigma^{-1}_{\bx}(A_0\bz_0+B_0-B_{\omega}))^T\Sigma_{\omega}^{-1}(\bz-\Sigma_{\omega}A^T_{\omega}\Sigma^{-1}_{\bx}(A_0\bz_0+B_0-B_{\omega}))^T)\\
    &\hspace{5cm}-\frac{1}{2}\log(\det(\Sigma_{\omega})) + cst\\
    = -\frac{1}{2}(\bz&-(A^T_{\omega}\Sigma_{\bx}^{-1}A_{\omega})^{-1}A^T_{\omega}\Sigma^{-1}_{\bx}(A_0\bz_0+B_0-B_{\omega}))^T\Sigma_{\omega}^{-1}(\bz-(A^T_{\omega}\Sigma_{\bx}^{-1}A_{\omega})^{-1}A^T_{\omega}\Sigma^{-1}_{\bx}(A_0\bz_0+B_0-B_{\omega}))^T)\\
    &\hspace{5cm}-\frac{1}{2}\log(\det(\Sigma_{\omega})) + cst
\end{align*}
where we used the following result to develop $\bmu_{\omega}(\bx)$
\begin{align*}
    \Sigma_{\omega}=(A^T_{\omega}\Sigma^{-1}_{\bx}A_{\omega}+\Sigma_{\bz}^{-1})^{-1}=&(A^T_{\omega}\Sigma^{-1}_{\bx}A_{\omega}+(A^T_{\omega}\Sigma^{-1}_{\bx}A_{\omega})(A^T_{\omega}\Sigma^{-1}_{\bx}A_{\omega})^{-1}\Sigma_{\bz}^{-1})^{-1}\\
    =&(A^T_{\omega}\Sigma^{-1}_{\bx}A_{\omega})^{-1}(I+(A^T_{\omega}\Sigma^{-1}_{\bx}A_{\omega})^{-1}\Sigma_{\bz}^{-1})^{-1}\\
    =&(A^T_{\omega}\Sigma^{-1}_{\bx}A_{\omega})^{-1}\text{ as }(\Sigma_{\bz}A^T_{\omega}\Sigma^{-1}_{\bx}A_{\omega})^{-1} \rightarrow \mathbf{0}.
\end{align*}
if we are in the same region $\omega$ than $\bz_0$ then the above becomes
\begin{align*}
    \argmax_{\bz \in \omega_0}\log(\phi(\bz;\bmu_{\omega}(\bx),\Sigma_{\omega})) 
    = \argmax_{\bz \in \omega_0}-\frac{1}{2}(\bz-\bz_0)^T\Sigma_{\omega}^{-1}(\bz-\bz_0)=\bz_0,
\end{align*}
and since we know that we are in the same region, the argmax $\bz=\bz_0$ lies in this region and thus is the maximum of the posterior.

\end{proof}

\subsection{Proof of Lemma~\ref{lemma:bijection}}
\label{proof:bijection}
\begin{proof}
The sign vectors represent the sign of each pre-activation feature maps. The key here is that when changing the sign, the input passes through the knot of the corresponding activation function of that layer. This implies a change in the region in the DGN input space. In fact, without degenerate weights and with nonzero activation functions, a change in any dimension of the sign vector (used to form the per region slope and bias) impact a change in the affine mapping used to map inputs $\bz$ to outputs $\bx$. As such, whenever a sign changes, the affine mapping changes, leading to a change of region in the DGN input space. As the sign vector is formed from the DGN input space, and we restrict ourselves to the image of this mapping, there does not exist a sign pattern/configuration not reachable by the DGN (otherwise it would not be in the image of this mapping). Now for the other inclusion, recall that a change in region and thus in per region affine mapping can only occur with a change of pre-activation sign pattern.
\end{proof}

\subsection{Proof of Corollary\ref{cor:H_rep}}
\label{proof:H_rep}

\begin{proof}
From the above result, it is clear that the preactivation roots define the boundaries of the regions. Obtaining the hyperplane representation of the region thus simply consists of reexpressing this statement with the explicit pre-activation hyperplanes for all the layers and units, the intersection between layers coming from the subdivision. For additional details please see \cite{balestriero2019geometry}.
\end{proof}

\subsection{Proof of Lemma~\ref{lemma:integral_decomposition}}
\label{proof:intergral_decomposition}
\begin{proof}
The proof consists of rearranging the terms from the inclusion-exclusion formula as in
\begin{align*}
    \sum_{J \subseteq \{1,\dots,F\}, J\not = \emptyset}&(-1)^{|J|+1}\left(\cap_{j\in J}A_j\right)=\cup_{i}A_i\\
    (-1)^{F+1}S+\sum_{J \subseteq \{1,\dots,F\}, J\not = \emptyset,|J|<F}&(-1)^{|J|+1}\left(\cap_{j\in J}A_j\right)=\cup_{i}A_i\\
    (-1)^{F+1}S&=\cup_{i}A_i-\sum_{J \subseteq \{1,\dots,F\}, J\not = \emptyset,|J|<F}(-1)^{|J|+1}\left(\cap_{j\in J}A_j\right)\\
    S&=(-1)^{F+1}\cup_{i}A_i-(-1)^{F+1}\sum_{J \subseteq \{1,\dots,F\}, J\not = \emptyset,|J|<F}(-1)^{|J|+1}\left(\cap_{j\in J}A_j\right)\\
    S&=(-1)^{F+1}\cup_{i}A_i+\sum_{J \subseteq \{1,\dots,F\}, J\not = \emptyset,|J|<F}(-1)^{|J|+1+F}\left(\cap_{j\in J}A_j\right)
\end{align*}
then by application of Chasles rule \cite{hirsch2012elements}, the integral domain can be decomposed into the signed sum of per cone integration. Finally, a simplex in dimension $S$ has $S+1$ faces, making $F=S+1$ and leading to the desired result.
\end{proof}

\subsection{Proof of Moments}

\begin{lemma}
The first moments of Gaussian integration on an open rectangle defined by its lower limits $\ba$ is given by
\begin{align}
    \int_{\ba}^{\infty}\bz \phi(\mathbf{0},\bSigma)d\bz=&\bSigma F(\ba),\\
    \int_{\ba}^{\infty}\bz\bz^T \phi(\mathbf{0},\bSigma)d\bz=&\Phi_{[\ba,\infty)}(\mathbf{0},\bSigma) \bSigma+\bSigma \left(G(\ba)+\frac{\ba \odot F(\ba)-\big(\bSigma \odot G(\ba) \big)\mathbf{1}}{\diag(\bSigma)}\right)\bSigma.\label{eq:phi}
\end{align}
where the division is performed elementwise. 
\end{lemma}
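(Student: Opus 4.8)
The plan is to obtain both identities from a single algebraic fact — for a centered Gaussian, $\bz\,\phi(\bz;\mathbf{0},\bSigma)=-\bSigma\nabla_{\bz}\phi(\bz;\mathbf{0},\bSigma)$ — combined with integration by parts over the box $[\ba,\infty)=\prod_k[a_k,\infty)$. Since the Gaussian decays at $+\infty$, only the lower faces $\{z_m=a_m\}$ contribute boundary terms, and on such a face the density factorizes as $\phi(\bz;\mathbf{0},\bSigma)\big|_{z_m=a_m}=\phi(a_m;0,[\bSigma]_{m,m})\,\phi(\bz_{-m};\bmu(m),\bSigma(m))$ with $\bmu(m),\bSigma(m)$ the conditional mean/covariance appearing in the definitions of $F$ and $G$. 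Iterating the factorization — conditioning on two coordinates at once via the tower property — turns the relevant products of marginal and conditional densities into the bivariate density $\phi([\ba]_{(k,l)};\mathbf{0},[\bSigma]_{(k,l),(k,l)})$ times $\Phi_{[[\ba]_{-(k,l)},\infty)}(\bmu((k,l)),\bSigma((k,l)))$, which is exactly the content of $[G(\ba)]_{k,l}$.

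For the first moment, integrating $\partial_m\phi$ in the $z_m$-direction over $[a_m,\infty)$ produces $-\phi|_{z_m=a_m}$; integrating the remaining coordinates over their half-lines and using the factorization gives $\int_{\ba}^{\infty}\nabla_{\bz}\phi\,d\bz=-F(\ba)$, hence $\int_{\ba}^{\infty}\bz\,\phi\,d\bz=-\bSigma\,(-F(\ba))=\bSigma F(\ba)$.

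For the second moment, write $\int_{\ba}^{\infty}\bz\bz^{T}\phi\,d\bz=-\int_{\ba}^{\infty}\bz\,(\bSigma\nabla_{\bz}\phi)^{T}d\bz$ and integrate $\int_{\ba}^{\infty}z_k\,\partial_m\phi\,d\bz$ by parts in $z_m$. This yields (i) a Kronecker term $-\delta_{km}\,\Phi_{[\ba,\infty)}(\mathbf{0},\bSigma)$, which after multiplication by the outer $\bSigma$ reassembles into the summand $\Phi_{[\ba,\infty)}(\mathbf{0},\bSigma)\bSigma$; and (ii) a face term $-\int z_k\,\phi|_{z_m=a_m}\,d\bz_{-m}$, equal to $-a_m[F(\ba)]_m$ when $k=m$ and, when $k\neq m$, equal to $\phi(a_m;0,[\bSigma]_{m,m})$ times the (unnormalized) $k$-th component of the first moment of the conditional truncated Gaussian $\mathcal{N}(\bmu(m),\bSigma(m))$ on $[[\ba]_{-m},\infty)$. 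Applying the already-proven first-moment identity in dimension $S-1$ (shifted to the non-centered case), this splits as $\bmu(m)_k[F(\ba)]_m+\sum_{j\neq m}[\bSigma(m)]_{k,j}[G(\ba)]_{j,m}$, where the two-coordinate factorization is what converts each $\phi(a_m;\cdot)[F(\cdot)]_j$ into $[G(\ba)]_{j,m}$. Substituting the Schur-complement form $[\bSigma(m)]_{k,j}=[\bSigma]_{k,j}-[\bSigma]_{k,m}[\bSigma]_{m,m}^{-1}[\bSigma]_{j,m}$ and $\bmu(m)_k=[\bSigma]_{k,m}[\bSigma]_{m,m}^{-1}a_m$, then regrouping the nested sums, reproduces $\bSigma\,G(\ba)\,\bSigma$ together with a diagonal remainder whose $m$-th entry is $\big(a_m[F(\ba)]_m-\sum_j[\bSigma]_{m,j}[G(\ba)]_{m,j}\big)/[\bSigma]_{m,m}$, that is, the claimed correction $\big(\ba\odot F(\ba)-(\bSigma\odot G(\ba))\mathbf{1}\big)/\diag(\bSigma)$, to be read (as in the appendix definition of $H$) as placed on the diagonal and sandwiched between the outer $\bSigma$ factors.

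I expect the main obstacle to be exactly this last regrouping: the bookkeeping needed to reorganize $\sum_m[\bSigma]_{\bigcdot,m}\big(\tfrac{[\bSigma]_{\bigcdot,m}a_m}{[\bSigma]_{m,m}}[F(\ba)]_m+\sum_{j\neq m}[\bSigma(m)]_{\bigcdot,j}[G(\ba)]_{j,m}\big)$ into the compact form $\bSigma\big(G(\ba)+\diag(\cdots)\big)\bSigma$, in particular isolating the $m=k$ case correctly (with the convention that $G(\ba)$ has zero diagonal) and checking that every Schur-complement cross term cancels or recombines exactly; everything else is routine Gaussian calculus. I would also note that this lemma treats only axis-aligned boxes with lower limits, and that the general polytopal and conic regions entering Theorem~\ref{thm:all_analytical} are reduced to this case by the change of variables $R_c$.
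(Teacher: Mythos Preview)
Your proposal is correct and is the standard Tallis-style derivation: use $\bz\,\phi=-\bSigma\nabla_{\bz}\phi$, integrate by parts over the box, and exploit the conditional factorization of the Gaussian on each lower face. The first-moment step is clean; for the second moment your outline of the recursion (apply the first-moment identity one dimension down, then unpack the Schur complement) is exactly right, and your flagged concern about the diagonal bookkeeping and the convention that $G$ has zero diagonal is the only place where care is needed.

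The paper, however, does not actually prove this lemma. It treats the rectangular/box case as a known result (citing Tallis and related work in Sec.~\ref{sec:integral}), and the \texttt{proof} environment that follows the lemma in the appendix is in fact a derivation of the \emph{corollary} for a general polytopal region $\omega$: it decomposes $\omega$ into simplices and then into cones via $T(\Delta)$, applies the change of variables $R_C$ to turn each cone into an axis-aligned box $[\bl(C),\infty)$, and then \emph{invokes} the lemma's formulas $\bSigma F(\cdot)$ and $\Phi\bSigma+\bSigma(\cdots)\bSigma$ as black boxes. So your approach is genuinely different in scope: you supply the missing argument for the base case on boxes, while the paper only shows the reduction from polytopes to boxes. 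Together they make a complete proof; separately, the paper's ``proof'' of this lemma is really a proof of the downstream corollary.
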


\begin{proof}
First moment:
\begin{align*}
\int_{\omega} \bz \phi(\bx;\mathbf{0},\Sigma)d\bz =&  \int_{\omega} \bz \frac{e^{-\frac{1}{2}\bz^T\Sigma^{-1}\bz}}{(2\pi)^{K/2}|\det(\Sigma)|^{1/2}}d\bz\\
=& \sum_{\Delta \in S(\omega)}\sum_{(s,C) \in T(\Delta)} s \int_{C} \bz \frac{e^{-\frac{1}{2}(R_C\bz)^T(R_C^T)^{-1}\Sigma_{\omega}^{-1}R^{-1}_CR_C\bz}}{(2\pi)^{K/2}|\det(\Sigma_{\omega})|^{1/2}}d\bz\\
=&  \sum_{\Delta \in S(\omega)}\sum_{(s,C) \in T(\Delta)} s\int_{\bl(C)} R^{-1}\bu \frac{e^{-\frac{1}{2}\bu^T(R_C\Sigma_{\omega}R_C^T)^{-1}\bu}}{(2\pi)^{K/2}|\det(R_C)||\det(\Sigma_{\omega})|^{1/2}}d\bu\\
=& \sum_{\Delta \in S(\omega)}\sum_{(s,C) \in T(\Delta)} sR^{-1}_C\int_{\bl(C)} \bu \phi(\bu;\mathbf{0},R_C\Sigma_{\omega}R_C^T)d\bu\\
=& \sum_{\Delta \in S(\omega)}\sum_{(s,C) \in T(\Delta)} sR^{-1}_C(R_C\Sigma_{\omega}R_C^TF(\bl(C))\\
=& \Sigma_{\omega}\sum_{\Delta \in S(\omega)}\sum_{(s,C) \in T(\Delta)} sR_C^TF(\bl(C))
\end{align*}

Second moment
\begin{align*}
\int_{\omega} \bz\bz^T \phi(\bx;\mathbf{0},\Sigma)d\bz =&  \int_{\omega} \bz\bz^T \frac{e^{-\frac{1}{2}\bz^T\Sigma^{-1}\bz}}{(2\pi)^{K/2}|\det(\Sigma)|^{1/2}}d\bz\\
=& \sum_{\Delta \in S(\omega)}\sum_{(s,C) \in T(\Delta)} s \int_{C} \bz\bz^T \frac{e^{-\frac{1}{2}(R_C\by)^T(R_C^T)^{-1}\Sigma_{\omega}^{-1}R^{-1}_CR_C\by}}{(2\pi)^{K/2}|\det(\Sigma_{\omega})|^{1/2}}d\bz\\
=&  \sum_{\Delta \in S(\omega)}\sum_{(s,C) \in T(\Delta)} s\int_{\bl(C)} R_C^{-1}\bu\bu^T(R_C^{-1})^T \frac{e^{-\frac{1}{2}\bu^T(R_C\Sigma_{\omega}R_C^T)^{-1}\bu}}{(2\pi)^{K/2}|\det(R_C)||\det(\Sigma_{\omega})|^{1/2}}d\bu\\
=& \sum_{\Delta \in S(\omega)}\sum_{(s,C) \in T(\Delta)} sR^{-1}_C\int_{\bl(C)} \bu\bu^T \phi(\bu;\mathbf{0},R_C\Sigma_{\omega}R_C^T)d\bu(R^{-1}_C)^T\\
=& \sum_{\Delta \in S(\omega-\bmu_{\omega}(\bx))}\sum_{(s,C) \in T(\Delta)} sR^{-1}_C\Big[
\Phi_{[\bl(C),\infty)}(\mathbf{0},R_C\Sigma_{\omega}R_C^T) R_C\Sigma_{\omega}R_C^T\\
&+R_C\Sigma_{\omega}R_C^T \left(\frac{\bl(C) \odot F(\bl(C))+\big(R_C\Sigma_{\omega}R_C^T \odot G(\bl(C)) \big)\mathbf{1}}{\diag(R_C\Sigma_{\omega}R_C^T)}\right)(R_C\Sigma_{\omega}R_C^T)^T
\Big](R^{-1}_C)^T\\
=& \sum_{\Delta \in S(\omega-\bmu_{\omega}(\bx))}\sum_{(s,C) \in T(\Delta)} s\Big[
\Phi_{[\bl(C),\infty)}(\mathbf{0},R_C\Sigma_{\omega}R_C^T)\Sigma_{\omega}\\
&+\Sigma_{\omega}R_C^T \left(\frac{\bl(C) \odot F(\bl(C))+\big(R_C\Sigma_{\omega}R_C^T \odot G(\bl(C)) \big)\mathbf{1}}{\diag(R_C\Sigma_{\omega}R_C^T)}\right)R_C\Sigma_{\omega}
\Big]\\
&\hspace{-2cm}= e^0_{\omega-\bmu_{\omega}(\bx)}\Sigma_{\omega}+\Sigma_{\omega}\big[\sum_{\Delta \in S(\omega-\bmu_{\omega}(\bx))}\sum_{(s,C) \in T(\Delta)} sR_C^T \left(\frac{\bl(C) \odot F(\bl(C))+\big(R_C\Sigma_{\omega}R_C^T \odot G(\bl(C)) \big)\mathbf{1}}{\diag(R_C\Sigma_{\omega}R_C^T)}\right)R_C\big]\Sigma_{\omega}
\end{align*}
\end{proof}

\subsection{Proof of Theorem~\ref{thm:all_analytical}}
\label{proof:all_analytical}

\begin{proof}
Constant:$R_{C}=\begin{pmatrix}C^T\\H^T\Sigma_{\omega}^{-1}\end{pmatrix}$
\begin{align*}
\int_{\omega} p(\bz|\bx)d\bz 
= \alpha_{\omega}(\bx)\int_{\omega}\phi(\bz;\bmu_{\omega}(\bx),\Sigma_{\omega})d\bz
= \alpha_{\omega}(\bx)\int_{\omega-\bmu_{\omega}(\bx)}\phi(\bz;\mathbf{0},\Sigma_{\omega})d\bz
=\alpha_{\omega}(\bx)e^0_{\omega-\bmu_{\omega}(\bx)}
\end{align*}

First moment:
\begin{align*}
\int_{\omega} \bz p(\bz|\bx)d\bz =&  \alpha_{\omega}(\bx)\int_{\omega} \bz \frac{e^{-\frac{1}{2}(\bz-\bmu_{\omega}(\bx))^T\Sigma_{\omega}^{-1}(\bz-\bmu_{\omega}(\bx))}}{(2\pi)^{K/2}|\det(\Sigma_{\omega})|^{1/2}}d\bz\\
=&  \alpha_{\omega}(\bx)\int_{\omega-\bmu_{\omega}(\bx)} (\by+\bmu_{\omega}(\bx)) \frac{e^{-\frac{1}{2}\by^T\Sigma_{\omega}^{-1}\by}}{(2\pi)^{K/2}|\det(\Sigma_{\omega})|^{1/2}}d\bz\\
=&  \alpha_{\omega}(\bx)\left(\be^1_{\omega-\bmu_{\omega}(\bx)}+e^0_{\omega-\bmu_{\omega}(\bx)}\bmu_{\omega}(\bx)\right)
\end{align*}

Second moment:
\begin{align*}
\int \bz\bz^T p(\bz|\bx)d\bz =&  \alpha_{\omega}(\bx)\int_{\omega}\bz\bz^T \phi(\bz;\bmu_{\omega}(\bx),\Sigma_{\omega})d\bz\\
=& \alpha_{\omega}(\bx)\int_{\omega-\mu_{\omega}(\bx)}(\by+\mu_{\omega}(\bx))(\by+\mu_{\omega}(\bx))^T \phi(\bz;\mathbf{0},\Sigma_{\omega})d\bz\\
=& \alpha_{\omega}(\bx)\left( \bE^2+\bmu_{\omega}(\bx)\be^1_{\omega}(\bSigma_{\omega})^T+\be^1_{\omega}(\bSigma_{\omega})\mu_{\omega}(\bx)^T+\mu_{\omega}(\bx)\mu_{\omega}(\bx)^Te^0_{\omega-\mu_{\omega}(\bx)}(\bSigma_{\omega})\right)
\end{align*}

\end{proof}

\section{Proof of EM-step}
\label{proof:EM}

We now derive the expectation maximization steps for a piecewise affine and continuous DGN.

\subsection{E-step derivation}
\label{proof:Estep}
\begin{align}
    E_{\bz|\bx}[(A_{\omega}\bz+B_{\omega})\Indic_{\omega}]&=A m^1_{\omega}+Be^0_{\omega}\\
    E_{\bz|\bx}[\bz^TA_{\omega}^TA_{\omega}\bz\Indic_{\omega}]&=\trace(A^T_{\omega}A_{\omega}m^2)
\end{align}
\begin{align*}
    E_{Z|X}\left[\log \left( p_{X|Z}(\bx|\bz)p_{Z}(\bz) \right)\right]=&
    E_{Z|X}\left[\log \left( \frac{e^{-\frac{1}{2}(\bx-g(\bz))^T\Sigma_{\bx}^{-1}(\bx-g(\bz))}}{(2\pi)^{D/2}\sqrt{|\det(\bSigma_{\bx})|}} \frac{e^{-\frac{1}{2}\bz^T\Sigma_{\bz}^{-1}\bx}}{(2\pi)^{S/2}\sqrt{|\det(\bSigma_{\bz})|}} \right)\right]\\
    =-\log\Big( (2\pi)^{(S+D)/2}&\sqrt{|\det(\bSigma_{\bz})|}\sqrt{|\det(\bSigma_{\bx})|}) \Big)-\frac{1}{2}E_{Z|X}\bigg[(\bx-g(\bz))^T\bSigma_{\bx}^{-1}(\bx-g(\bz))+\bz^T\bSigma_{\bz}^{-1}\bz\bigg]\\
    =-\log\Big( (2\pi)^{(S+D)/2}&\sqrt{|\det(\bSigma_{\bz})|}\sqrt{|\det(\bSigma_{\bx})|}) \Big)\\
    &-\frac{1}{2}\bigg(\bx^T\Sigma_{\bx}^{-1}\bx+E_{Z|X}\bigg[-2 \bx^T\bSigma_{\bx}^{-1}g(\bz) +g(\bz)^T\bSigma_{\bx}^{-1}g(\bz)+\bz^T\bSigma_{\bz}^{-1}\bz\bigg]\bigg)\\
    =-\log\Big( (2\pi)^{(S+D)/2}&\sqrt{|\det(\bSigma_{\bz})|}\sqrt{|\det(\bSigma_{\bx})|}) \Big)-\frac{1}{2}\bigg(\bx^T\Sigma_{\bx}^{-1}\bx+\trace(E_{Z|X}[\bz\bz^T\bSigma_{\bz}^{-1}])\\
    &\hspace{4cm}+E_{Z|X}\bigg[-2 \bx^T\bSigma_{\bx}^{-1}g(\bz) +g(\bz)^T\bSigma_{\bx}^{-1}g(\bz)\bigg]\bigg)\\
    =-\log\Big( (2\pi)^{(S+D)/2}&\sqrt{|\det(\bSigma_{\bz})|}\sqrt{|\det(\bSigma_{\bx})|}) \Big)-\frac{1}{2}\bigg(\bx^T\bSigma^{-1}_{\bx}\bx-2 \bx^T\bSigma_{\bx}^{-1}\left(\sum_{\omega}\bA_{\omega}\be^1_{\omega}(\bx)+\bb_{\omega}e^0_{\omega}(\bx)\right)\\
    +&\sum_{\omega} e^0_{\omega}\bb_{\omega}^T\bSigma_{\bx}^{-1}\bb_{\omega}+\trace(\bA_{\omega}^T\Sigma_{\bx}^{-1}\bA_{\omega}\bE^2_{\omega}(\bx))+2(\bA_{\omega}\bm_{\omega}^1(\bx))^T\bSigma_{\bx}^{-1}\bb_{\omega}+\trace(\Sigma_{\bz}^{-1}\bE^2(\bx)) \bigg)
\end{align*}

\iffalse
\subsection{Hard-E step derivation}
\label{proof:hard_E}

Due to Lemma~\ref{lemma:zeronoise} we have that the posterior $p(\bz|\bx)$ is a Dirac distribution as $\bz^*(\bx)$. Given that, the expectation w.r.t. this distribution of any constant is $1$ and $E_{Z|X}[\sum_{\omega} \Indic_{\bz \in \omega}( \bA_{\omega}\bz+\bb_{\omega})]=\bA_{\omega^*}\bz^*(\bx)+\bb_{\omega^*})=g(\bz^*(\bx)$. In fact, $\bz^*(\bx)$ is in the region $\omega^*$ in the latent space DGN partition, and in that region, it is a Dirac located exactly at $\bz^*(\bx)$ thus leading to this equality. Finally, for the quadratic term, the same goes and one has $E_{Z|X}[\sum_{\omega} \|\bA_{\omega}\bz\|_2^2]=\|\bA_{\omega}\bz^*(\bx) \|_2^2$. Finally, due to this property that only the parameters of a single region are nonzero ($\omega^*$) the final expression can be rewritten as a single $\ell_2$ norm between the input $\bx$ and the DGN $g(\bz^*(\bx))$.

\fi

\subsection{Proof of M step}

Let first introduce some notations:
\begin{gather*}
    \bA^{L\rightarrow i}_{\omega}\triangleq (\bA^{L\rightarrow i}_{\omega})^T(\text{back-propagation matrix to layer $i$}),\\
    r_{\omega}^{\ell}(\bx) \triangleq \left(\bx e^0_{\omega}(\bx)-\left(\bA_{\omega}\be^1_{\omega}(\bx)+\sum_{i\not = \ell}m^{0}_{\omega}(\bx)\bA^{i + 1\rightarrow L}_{\omega}\bQ^{i}_{\omega}\bv^{i}\right)\right)\;\;(\text{expected residual without  $\bv^\ell$})\\
    \hat{\bz}_{\omega}^{\ell}(\bx)\triangleq \bQ_{\omega}^{\ell-1}\left(\bA_{\omega}^{1\rightarrow \ell-1} m^1_{\omega}(\bx)+\bb_{\omega}^{1\rightarrow \ell-1}e^0_{\omega}\right)\;\;(\text{expected feature map of layer $\ell$})
\end{gather*}
we can now provide the analytical forms of the M step for each of the learnable parameters:
\begin{gather}
    \bSigma_{\bx}^*\hspace{-0.1cm}=\frac{1}{N}\sum_{\bx}\left(\bx\bx^T\hspace{-0.13cm}+\hspace{-0.1cm}\sum_{\omega}\bb_{\omega} \left(\bb_{\omega}m_{\omega}^0(\bx)+2\bA_{\omega}\be^1_{\omega}(\bx)\right)^T\hspace{-0.13cm}-2 \bx(\hat{\bz}_{\omega}^{L}(\bx))^T\hspace{-0.13cm}+\hspace{-0.05cm}\bA_{\omega}\bE^2_{\omega}(\bx)\bA^T_{\omega}\right),\\
    {\bv^{\ell}}^* =\left(\sum_{\bx}\sum_{\omega}\bQ^{\ell}_{\omega}\bA^{L\rightarrow \ell+1}_{\omega}\bSigma_{\bx}^{-1}\bA^{\ell+1\rightarrow L}_{\omega}\bQ^{\ell}_{\omega}\right)^{-1}\hspace{-0.13cm}\left(\sum_{\bx}\sum_{\omega \in \Omega} \underbrace{\bQ^{\ell}_{\omega}\bA^{L\rightarrow \ell+1}_{\omega}\bSigma_{\bx}^{-1}  r^{\ell}_{\omega}(\bx)}_{\text{residual back-propagated to layer $\ell$}}\right),\\
    \vect({\bW^{\ell}}^{*})= U_{\omega}^{-1}\vect\bigg(\hspace{-0.05cm}\sum_{\bx}\sum_{\omega}\underbrace{\bQ_{\omega}^{\ell}\bA^{L\rightarrow \ell + 1}\bSigma_{\bx}^{-1}\hspace{-0.05cm}\left(\bx-\sum_{i=\ell}^{L}\bA_{\omega}^{i + 1 \rightarrow L}\bQ^{i}_{\omega}\bv^{i}   \right)}_{\text{residual back-propagated to layer $\ell$}}(\hat{\bz}^{\ell}_{\omega}(\bx))^T\hspace{-0.05cm}\bigg),
\end{gather}

we provide detailed derivations below.

\subsubsection{Update of the bias parameter}

Recall from (\ref{eq:region_parameters}) that 
$
\bb_{\omega} = \bv^{L}+\sum_{i=1}^{L-1}\bW^{L}\bQ_{\omega}^{L-1}\bW^{L-1}\dots \bQ_{\omega}^{i}\bv^{i},\l
$
we can thus rewrite the loss as

\begin{align*}
    L(\bv^{\ell})=&\loss\\
    %%%%%%%%%%%%%%%%%
    =&-\frac{1}{2}\bigg(-2 \bx^T\bSigma_{\bx}^{-1}\left(\sum_{\omega}\bb_{\omega}e^0_{\omega}(\bx)\right)
    +\sum_{\omega} e^0_{\omega}\bb_{\omega}^T\bSigma^{-1}_{\bx}\bb_{\omega}+2\sum_{\omega} (\bA_{\omega}\bm_{\omega}^1(\bx))^T\bSigma_{\bx}^{-1} \bb_{\omega}\Big)+cst\\
    =&-\frac{1}{2}\sum_{\omega}\bigg(-2 \bx^T\bSigma_{\bx}^{-1}\left(\bA^{\ell+1\rightarrow L}_{\omega}\bQ^{\ell}_{\omega}\bv^{\ell}e^0_{\omega}(\bx)\right)
    + e^0_{\omega}(\bA^{\ell+1\rightarrow L}_{\omega}\bQ^{\ell}_{\omega}\bv^{\ell})^T\bSigma_{\bx}^{-1}(\bA^{\ell+1\rightarrow L}_{\omega}\bQ^{\ell}_{\omega}\bv^{\ell})\\
    &+2e^0_{\omega}(\bx)(\sum_{i\not = \ell}\bA^{i+1\rightarrow L}_{\omega}\bQ^{i}_{\omega}\bv^{i})^T \bSigma_{\bx}^{-1}(\bA^{\ell+1\rightarrow L}_{\omega}\bQ^{\ell}_{\omega}\bv^{\ell})
    +2 ((m_{\omega}^1(\bx))^T(\bA_{\omega})^T\bSigma_{\bx}^{-1}\bA^{\ell+1\rightarrow L}_{\omega}\bQ^{\ell}_{\omega}\bv^{\ell}\Big)+cst\\
    =&-\frac{1}{2}\sum_{\omega}\bigg( e^0_{\omega}(\bA^{\ell+1\rightarrow L}_{\omega}\bQ^{\ell}_{\omega}\bv^{\ell})^T\bSigma_{\bx}^{-1}(\bA^{\ell+1\rightarrow L}_{\omega}\bQ^{\ell}_{\omega}\bv^{\ell})&&(A)\\
    &+2(e^0_{\omega}(\bx)(\sum_{i\not = \ell}\bA^{i+1\rightarrow L}_{\omega}\bQ^{i}_{\omega}\bv^{i}-\bx)+\bA_{\omega}\be^1_{\omega}(\bx))^T \bSigma_{\bx}^{-1}(\bA^{\ell+1\rightarrow L}_{\omega}\bQ^{\ell}_{\omega}\bv^{\ell})\Big)+cst&&(B)\\
    \implies \partial L(\bv^{\ell})=& -\frac{1}{2}\sum_{\omega}\Bigg[-e^0_{\omega}(\bx)2\bQ^{\ell}_{\omega}\bA^{L\rightarrow \ell+1}_{\omega}\bSigma_{\bx}^{-1}\bA^{\ell+1\rightarrow L}_{\omega}\bQ^{\ell}_{\omega}\bv^{\ell}\\
    &+2\left(\bA^{\ell+1\rightarrow L}_{\omega}\bQ^{\ell}_{\omega}\right)^T\bSigma_{\bx}^{-1}\left(e^0_{\omega}(\bx)\left(\sum_{i\not = \ell}\bA^{i+1\rightarrow L}_{\omega}\bQ^{i}_{\omega}\bv^{i}-\bx\right)+\bA_{\omega}\be^1_{\omega}(\bx)\right)\Bigg]\\
    \implies \bv^{\ell}=&\left(\sum_{\bx}\sum_{\omega}e^0_{\omega}(\bx)\bQ^{\ell}_{\omega}\bA^{L\rightarrow \ell+1}_{\omega}\bSigma_{\bx}^{-1}\bA^{\ell+1\rightarrow L}_{\omega}\bQ^{\ell}_{\omega}\right)^{-1}\\
    &\times\sum_{\bx}\sum_{\omega \in \Omega} \bQ^{\ell}_{\omega}\bA^{L\rightarrow \ell+1}_{\omega}\bSigma_{\bx}^{-1} \left( \bx e^0_{\omega}(\bx)-\left(\bA_{\omega}m^1_{\omega}(\bx)+\sum_{i\not = \ell}m^{0}_{\omega}(\bx)\bA^{i + 1\rightarrow L}_{\omega}\bQ^{i}_{\omega}\bv^{i}\right)\right)
\end{align*}
as

\begin{align*}
    (A)=&e^0_{\omega}(\bx)(\bA^{\ell+1\rightarrow L}_{\omega}\bQ^{\ell}_{\omega}\bv^{\ell})^T\bSigma_{\bx}^{-1}(\bA^{\ell+1\rightarrow L}_{\omega}\bQ^{\ell}_{\omega}\bv^{\ell})\\
    \implies \partial (A) =& e^0_{\omega}(\bx)2\bQ^{\ell}_{\omega}\bA^{L\rightarrow \ell+1}_{\omega}\bSigma_{\bx}^{-1}\bA^{\ell+1\rightarrow L}_{\omega}\bQ^{\ell}_{\omega}\bv^{\ell}\\
(B)=&2\left[\left(e^0_{\omega}(\bx)\left(\sum_{i\not = \ell}\bA^{i+1\rightarrow L}_{\omega}\bQ^{i}_{\omega}\bv^{i}-\bx\right)+\bA_{\omega}\be^1_{\omega}(\bx)\right)^T \bSigma_{\bx}^{-1}(\bA^{\ell+1\rightarrow L}_{\omega}\bQ^{\ell}_{\omega}\bv^{\ell})\right]+cst \\
\implies \partial (B) =&  \left(\bA^{\ell+1\rightarrow L}_{\omega}\bQ^{\ell}_{\omega}\right)^T\bSigma_{\bx}^{-1}\left(e^0_{\omega}(\bx)\left(\sum_{i\not = \ell}\bA^{i+1\rightarrow L}_{\omega}\bQ^{i}_{\omega}\bv^{i}-\bx\right)+\bA_{\omega}\be^1_{\omega}(\bx)\right)
\end{align*}

\subsubsection{Update of the slope parameter}

We can thus rewrite the loss as 

\begin{align*}
    L(\bv^{\ell})=&\loss\\
    =&
    \bx^T\bSigma_{\bx}^{-1}\left(\sum_{\omega}\bA_{\omega}\be^1_{\omega}(\bx)+\bb_{\omega}e^0_{\omega}(\bx)\right)
    -\frac{1}{2}\sum_{\omega} e^0_{\omega}\bb_{\omega}^T\bSigma^{-1}_{\bx}\bb_{\omega}  -\frac{1}{2}\sum_{\omega}\trace(\bA_{\omega}^T\bSigma_{\bx}^{-1}\bA_{\omega}\bE^2_{\omega}(\bx))\\
    &-\sum_{\omega}(\bA_{\omega}m_{\omega}^1(\bx))^T\bSigma_{\bx}^{-1} \bb_{\omega}
\end{align*}
Notice that we can rewrite $\bb_{\omega}=\bA^{\ell+1\rightarrow L}\bQ_{\omega}^{\ell}\bW^{\ell}\bQ_{\omega}^{\ell-1}\bb_{\omega}^{1\rightarrow \ell-1} + \sum_{i=\ell}^{L}\bA_{\omega}^{i + 1 \rightarrow L}\bQ^{i}_{\omega}\bv^{i}$ and $\bA_{\omega} = \bA^{\ell+1\rightarrow L}_{\omega}\bQ^{\ell}_{\omega}\bW^{\ell}\bQ^{\ell-1}_{\omega}\bA^{1\rightarrow \ell -1}$ and thus we obtain:
\begin{align*}
    L(\bv^{\ell})=&\sum_{\omega}
    \bx^T\bSigma_{\bx}^{-1}\bA^{\ell+1\rightarrow L}\bQ_{\omega}^{\ell}\bW^{\ell}\bQ_{\omega}^{\ell-1}\Big(\bA^{1\rightarrow \ell-1} \be^1_{\omega}(\bx)+\bb^{1\rightarrow \ell-1}_{\omega}e^0_{\omega}(\bx)\Big)\\
    &-\frac{1}{2}\sum_{\omega} e^0_{\omega}(\brewriteA)^T\bSigma^{-1}_{\bx}(\brewriteA)\\
    &-\sum_{\omega} e^0_{\omega}(\brewriteA)^T\bSigma^{-1}_{\bx}(\brewriteb)\\ 
    &-\frac{1}{2}\sum_{\omega}\trace((\Arewrite)^T\bSigma_{\bx}^{-1}(\Arewrite)\bE^2_{\omega}(\bx))\\
    &-\sum_{\omega} (\Arewrite m_{\omega}^1(\bx))^T\bSigma_{\bx}^{-1} (\brewriteA)\\
    &-\sum_{\omega} (\Arewrite m_{\omega}^1(\bx))^T\bSigma_{\bx}^{-1} (\brewriteb)+cst\\
    %%%%%%%%%%%%%%%%
    =&\sum_{\omega}
    \bx^T\bSigma_{\bx}^{-1}\bA^{\ell+1\rightarrow L}\bQ_{\omega}^{\ell}\bW^{\ell}\bQ_{\omega}^{\ell-1}\Big(\bA^{1\rightarrow \ell-1} \be^1_{\omega}(\bx)+\bb^{1\rightarrow \ell-1}_{\omega}e^0_{\omega}(\bx)\Big)&&(A)\\
    &-\frac{1}{2}\sum_{\omega} e^0_{\omega}(\brewriteA)^T\bSigma^{-1}_{\bx}(\brewriteA)&&(B)\\
    &-\sum_{\omega} \Big(\bA^{\ell+1\rightarrow L}\bQ_{\omega}^{\ell}\bW^{\ell}\bQ_{\omega}^{\ell-1}\big(\bA^{1\rightarrow \ell-1}\be^1_{\omega}(\bx)+\bb_{\omega}^{1\rightarrow \ell-1}e^0_{\omega}(\bx)\big)\Big)^T\bSigma^{-1}_{\bx}(\brewriteb)&&(C)\\ 
    &-\frac{1}{2}\sum_{\omega}\trace((\Arewrite)^T\bSigma_{\bx}^{-1}(\Arewrite)\bE^2_{\omega}(\bx))&&(D)\\
    &-\sum_{\omega} (\Arewrite m_{\omega}^1(\bx))^T\bSigma_{\bx}^{-1} (\brewriteA)+cst&&(E)
\end{align*}

\begin{align*}
    A =&\sum_{\omega}
    \bx^T\bSigma_{\bx}^{-1}\bA^{\ell+1\rightarrow L}\bQ_{\omega}^{\ell}\bW^{\ell}\bQ_{\omega}^{\ell-1}\Big(\bA^{1\rightarrow \ell-1} \be^1_{\omega}(\bx)+\bb^{1\rightarrow \ell-1}_{\omega}e^0_{\omega}(\bx)\Big)\\
    \implies \partial A =& \sum_{\omega}\bQ^{\ell}_{\omega}\bA^{L\rightarrow \ell+1}_{\omega}\bSigma_{\bx}^{-1}\bx\left(\bQ^{\ell-1}_{\omega}(\bA^{1\rightarrow \ell -1}\be^1_{\omega}(\bx)+\bb^{1\rightarrow \ell-1}e^0_{\omega}(\bx))\right)^T\\
\end{align*}

\begin{align*}
    B=&-\frac{1}{2}\sum_{\omega}e^0_{\omega}(\bx)\left(\bA^{\ell+1\rightarrow L}\bQ_{\omega}^{\ell}\bW^{\ell}\bQ_{\omega}^{\ell-1}\bb_{\omega}^{1\rightarrow \ell-1}\right)^T\bSigma_{\bx}^{-1}\left(\bA^{\ell+1\rightarrow L}\bQ_{\omega}^{\ell}\bW^{\ell}\bQ_{\omega}^{\ell-1}\bb_{\omega}^{1\rightarrow \ell-1}\right)\\
    =&-\frac{1}{2}\sum_{\omega}e^0_{\omega}(\bx)(\bQ_{\omega}^{\ell-1}\bb_{\omega}^{1\rightarrow \ell-1})^T(\bW^{\ell})^T(\bA^{\ell+1\rightarrow L}\bQ_{\omega}^{\ell})^T\bSigma_{\bx}^{-1}\left(\bA^{\ell+1\rightarrow L}\bQ_{\omega}^{\ell}\bW^{\ell}\bQ_{\omega}^{\ell-1}\bb_{\omega}^{1\rightarrow \ell-1}\right)\\
    =&-\frac{1}{2}\sum_{\omega}e^0_{\omega}(\bx)\trace \left((\bW^{\ell})^T(\bA^{\ell+1\rightarrow L}\bQ_{\omega}^{\ell})^T\bSigma_{\bx}^{-1}\bA^{\ell+1\rightarrow L}\bQ_{\omega}^{\ell}\bW^{\ell}\bQ_{\omega}^{\ell-1}\bb_{\omega}^{1\rightarrow \ell-1}(\bQ_{\omega}^{\ell-1}\bb_{\omega}^{1\rightarrow \ell-1})^T\right)\\
    &\implies \partial B =-\sum_{\omega} e^0_{\omega}(\bx)\bQ_{\omega}^{\ell}\bA^{L\rightarrow \ell+1}\bSigma_{\bx}^{-1}\bA^{\ell+1\rightarrow L}\bQ_{\omega}^{\ell}\bW^{\ell}\bQ_{\omega}^{\ell-1}\bb_{\omega}^{1\rightarrow \ell-1}(\bb_{\omega}^{1\rightarrow \ell-1})^T\bQ_{\omega}^{\ell-1}
\end{align*}

\begin{align*}
    C =& -\sum_{\omega} \Big(\bA^{\ell+1\rightarrow L}\bQ_{\omega}^{\ell}\bW^{\ell}\bQ_{\omega}^{\ell-1}\big(\bA^{1\rightarrow \ell-1}\be^1_{\omega}(\bx)+\bb_{\omega}^{1\rightarrow \ell-1}e^0_{\omega}(\bx)\big)\Big)^T\bSigma^{-1}_{\bx}(\brewriteb)\\
    =& -\sum_{\omega} (\bQ_{\omega}^{\ell-1}\big(\bA^{1\rightarrow \ell-1}\be^1_{\omega}(\bx)+\bb_{\omega}^{1\rightarrow \ell-1}e^0_{\omega}(\bx)\big)^T(\bW^{\ell})^T(\bA^{\ell+1\rightarrow L}\bQ_{\omega}^{\ell})^T\bSigma^{-1}_{\bx}(\brewriteb)\\
    \implies \partial C &=-\sum_{\omega} \bQ_{\omega}^{\ell}\bA^{L\rightarrow \ell+1}\bSigma^{-1}_{\bx}(\brewriteb)(\bQ_{\omega}^{\ell-1}\big(\bA^{1\rightarrow \ell-1}\be^1_{\omega}(\bx)+\bb_{\omega}^{1\rightarrow \ell-1}e^0_{\omega}(\bx)\big)^T
\end{align*}

\begin{align*}
    D=&-\frac{1}{2}\sum_{\omega}\trace((\bA^{\ell+1\rightarrow L}_{\omega}\bQ^{\ell}_{\omega}\bW^{\ell}\bQ^{\ell-1}_{\omega}\bA^{1\rightarrow \ell -1})^T\bSigma_{\bx}^{-1}\bA^{\ell+1\rightarrow L}_{\omega}\bQ^{\ell}_{\omega}\bW^{\ell}\bQ^{\ell-1}_{\omega}\bA^{1\rightarrow \ell -1}\bE^2_{\omega}(\bx))\\
    =&-\frac{1}{2}\sum_{\omega}\trace((\bW^{\ell})^T(\bA^{\ell +1\rightarrow L}\bQ^{\ell}_{\omega})^T\bSigma_{\bx}^{-1}\bA^{\ell+1\rightarrow L}_{\omega}\bQ^{\ell}_{\omega}\bW^{\ell}\bQ^{\ell-1}_{\omega}\bA^{1\rightarrow \ell -1}\bE^2_{\omega}(\bx)(\bQ^{\ell-1}_{\omega}\bA^{1\rightarrow \ell-1}_{\omega})^T)\\
    \implies \partial D=& -\sum_{\omega} \bQ^{\ell}_{\omega}\bA^{L\rightarrow \ell +1}\bSigma_{\bx}^{-1}\bA^{\ell+1\rightarrow L}_{\omega}\bQ^{\ell}_{\omega}\bW^{\ell}\bQ^{\ell-1}_{\omega}\bA^{1\rightarrow \ell -1}\bE^2_{\omega}(\bx)\bA^{\ell-1\rightarrow 1}_{\omega}\bQ^{\ell-1}_{\omega}
\end{align*}

\begin{align*}
     E=&-\sum_{\omega}\left( \bA^{\ell+1\rightarrow L}_{\omega}\bQ^{\ell}_{\omega}\bW^{\ell}\bQ^{\ell-1}_{\omega}\bA^{1\rightarrow \ell -1}m^1_{\omega}(\bx)\right)^T
     \bSigma_{\bx}^{-1}
     \left(\bA^{\ell+1\rightarrow L}\bQ_{\omega}^{\ell}\bW^{\ell}\bQ_{\omega}^{\ell-1}\bb_{\omega}^{1\rightarrow \ell-1}\right)\\
    =&-\sum_{\omega}\trace \left( (\bW^{\ell})^T(\bA^{\ell+1\rightarrow L}_{\omega}\bQ^{\ell}_{\omega})^T
    \bSigma_{\bx}^{-1}
    \bA^{\ell+1\rightarrow L}\bQ_{\omega}^{\ell}\bW^{\ell}\bQ_{\omega}^{\ell-1}\bb_{\omega}^{1\rightarrow \ell-1}(\bQ^{\ell-1}_{\omega}\bA^{1\rightarrow \ell -1}m^1_{\omega}(\bx))^T\right)\\
    \implies \partial E=&-\sum_{\omega}  \bQ^{\ell}_{\omega}\bA^{L\rightarrow \ell +1}\bSigma_{\bx}^{-1}\bA^{\ell+1\rightarrow L}_{\omega}\bQ^{\ell}_{\omega}\bW^{\ell}\left(\bQ^{\ell-1}\left(\bb_{\omega}^{1\rightarrow \ell-1}(m^1_{\omega}(\bx))^T\bA_{\omega}^{\ell-1\rightarrow 1}+\bA_{\omega}^{1\rightarrow \ell-1}m^1_{\omega}(\bx)(\bb_{\omega}^{1\rightarrow \ell-1})^T\right)(\bQ^{\ell-1})^T\right)
\end{align*}

we can group B,D and E together as well as A and C. Now to solve this equal $0$ we will need to consider the flatten version of $\bW^{\ell}$ which we denote by $\bw^{\ell}=\vect(\bW^{\ell})$ leading to

\begin{align*}
    \partial L=&\sum_{\omega}\bQ^{\ell}_{\omega}\bA^{L\rightarrow \ell+1}_{\omega}\bSigma_{\bx}^{-1}(\bx-\brewriteb)\left(\bQ^{\ell-1}_{\omega}(\bA^{1\rightarrow \ell -1}\be^1_{\omega}(\bx)+\bb^{1\rightarrow \ell-1}e^0_{\omega}(\bx))\right)^T\\
    &-\sum_{\omega}  \bQ^{\ell}_{\omega}\bA^{L\rightarrow \ell +1}\bSigma_{\bx}^{-1}\bA^{\ell+1\rightarrow L}_{\omega}\bQ^{\ell}_{\omega}\bW^{\ell}\bQ^{\ell-1}_{\omega}\Big(e^0_{\omega}(\bx)\bb_{\omega}^{1\rightarrow \ell-1}(\bb_{\omega}^{1\rightarrow \ell-1})^T+\bA_{\omega}^{1\rightarrow \ell-1}\bE^2_{\omega}(\bx)\bA_{\omega}^{\ell-1\rightarrow 1}\\
    &\hspace{4cm}+\bb_{\omega}^{1\rightarrow \ell-1}(m^1_{\omega}(\bx))^T\bA_{\omega}^{\ell-1\rightarrow 1}+\bA_{\omega}^{1\rightarrow \ell-1}m^1_{\omega}(\bx)(\bb_{\omega}^{1\rightarrow \ell-1})^T\Big)\bQ^{\ell-1}_{\omega}\\
    =&\sum_{\omega}P_{\omega}(\bx)^{\ell}-U^{\ell}_{\omega}\bW^{\ell}V^{\ell}_{\omega}(\bx)\\
    \implies &(\sum_{\bx}\sum_{\omega}U^{\ell}_{\omega}\otimes (V^{\ell}_{\omega}(\bx))^T )\vect(\bW^{\ell})=\sum_{\bx}\sum_{\omega}\vect(P^{\ell}_{\omega}(\bx)\\
    \implies \vect(\bW^{\ell})^* =& (\sum_{\bx}\sum_{\omega}U^{\ell}_{\omega}\otimes (V^{\ell}_{\omega}(\bx))^T )^{-1}(\sum_{\bx}\sum_{\omega}\vect(P^{\ell}_{\omega}(\bx))
\end{align*}

\section{Regularization}
\label{appendix:regularization}

We propose in this section a brief discussion on the impact of using a probabilistic prior on the weights of the GDN. In particular, it is clear that imposing a Gaussian prior with zero mean and isotropic covariance on the weights falls back in the log likelihood to impose a $l2$ regularization of the weights with parameter based on the covariance of the prior. If the prior is a Laplace distribution, the log-likelihood will turn the prior into an $l1$ regularization of the weights, again with regularization coefficient based on the prior covariance. Finally, in the case of uniform prior with finite support, the log likelihood will be equivalent to a weight clipping, a standard technique employed in DNs where the weights can not take values outside of a predefined range.

\section{Computational Complexity}
\label{appendix:complexity}

The computational complexity of the method increases drastically with the latent space dimension, and the number of regions, and the number of faces per regions. Those last quantities are directly tied into the complexity (depth and width) of the DGNs. This complexity bottleneck comes from the need to search for all regions, and the need to decompose each region into simplices. As such, the EM learning is not yet suitable for large scale application, however based on the obtained analytical forms, it is possible to derive an approximation of the true form that would be more tractable while providing approximation error bounds as opposed to current methods.

\section{Additional Experiments}
\label{appendix:more_XP}

In this section we propose to complement the toy circle experiment from the main paper first we an additional $2d$ case and then with the MNIST dataset.

{\bf Wave}

We propose here a simple example where the read data is as follows:
\begin{figure}[H]
    \centering
    \includegraphics[width=0.2\linewidth]{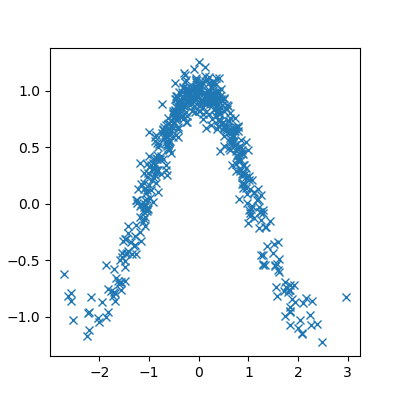}
    \caption{sample of noise data for the wave dataset}
\end{figure}

We train on this dataset the EM and VAE based learning with various learning rates and depict below the evolution of the NLL for all models, we also depict the samples after learning.

\begin{figure}
    \centering
    \includegraphics[width=0.8\linewidth]{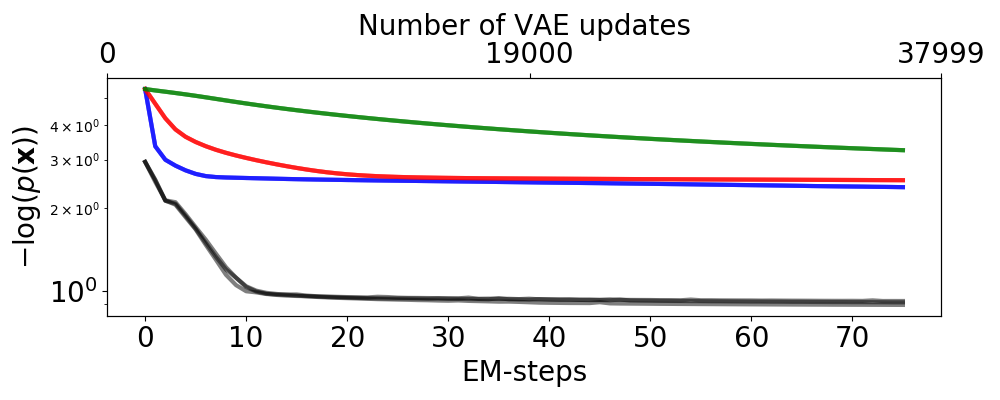}
    \caption{Depiction of the evolution of the NLL during training for the EM and VAE algorithms, we can see that despite the high number of training steps, VAEs are not yet able to correctly approximate the data distribution as opposed to EM training which benefits from much faster convergence. We also see how the VAEs tend to have a large KL divergence between the true posterior and the variational estimate due to this gap, we depict below samples from those models.}
\end{figure}

\begin{figure}
    \centering
    \includegraphics[width=0.8\linewidth]{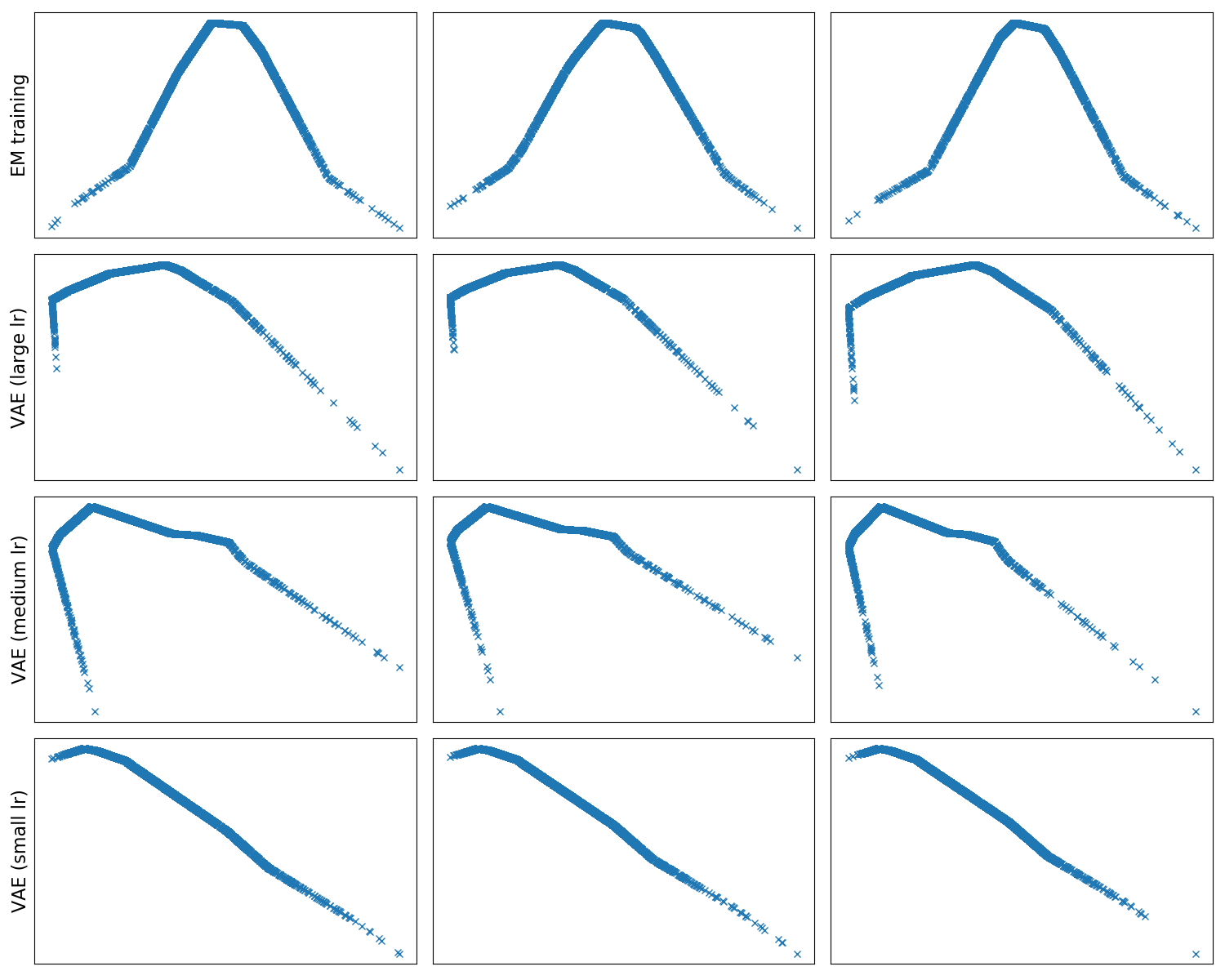}
    \caption{Samples from the various models trained on the wave dataset. We can see on {\bf top} the result of EM training where each column represents a different run, the remaining three rows correspond to the VAE training. Again, EM demonstrates much faster convergence, for VAE to reach the actual data distribution, much more updates are needed.}
\end{figure}

{\bf MNIST}
We now employ MNIST which consists of images of digits, and select the $4$ class. Note that due to complexity overhead we maintain a univariate latent space of the GDN and employ a three layer DGN with 8 and 16 hidden units. We provide first the evolution of the NLL through learning for all the training methods and then sample images from the trained DGNs demonstrating how for small DGNs EM learning is able to learn a better data distribution and thus generated realistic samples as opposed to VAEs which need much longer training steps.

\begin{figure}[h]
    \centering
    \includegraphics[width=\linewidth]{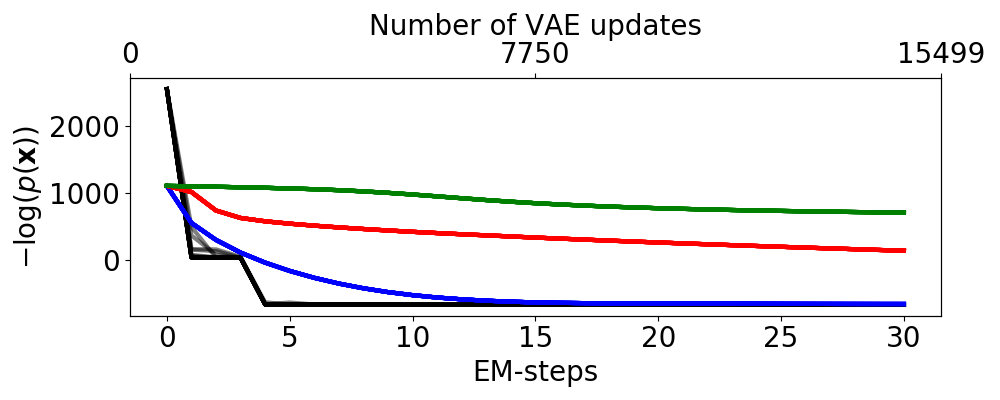}
    \caption{Evolution of the true data negative log-likelihood (in semilogy-y plot on MNIST (class $4$) for EM and VAE training for a small DGN as described above. The experiments are repeated multiple times, we can see how the learning rate is clearly impacting the learning significantly despite the use of Adam, and that even with the large learning rate, the EM learning is able to reach lower NLL, in fact the quality of the generated samples of the EM modes is much higher as shows below.}
    \label{fig:my_label}
\end{figure}

\begin{figure}[h]
    \centering
    Expectation-Maximization training\\
    \includegraphics[width=0.68\linewidth]{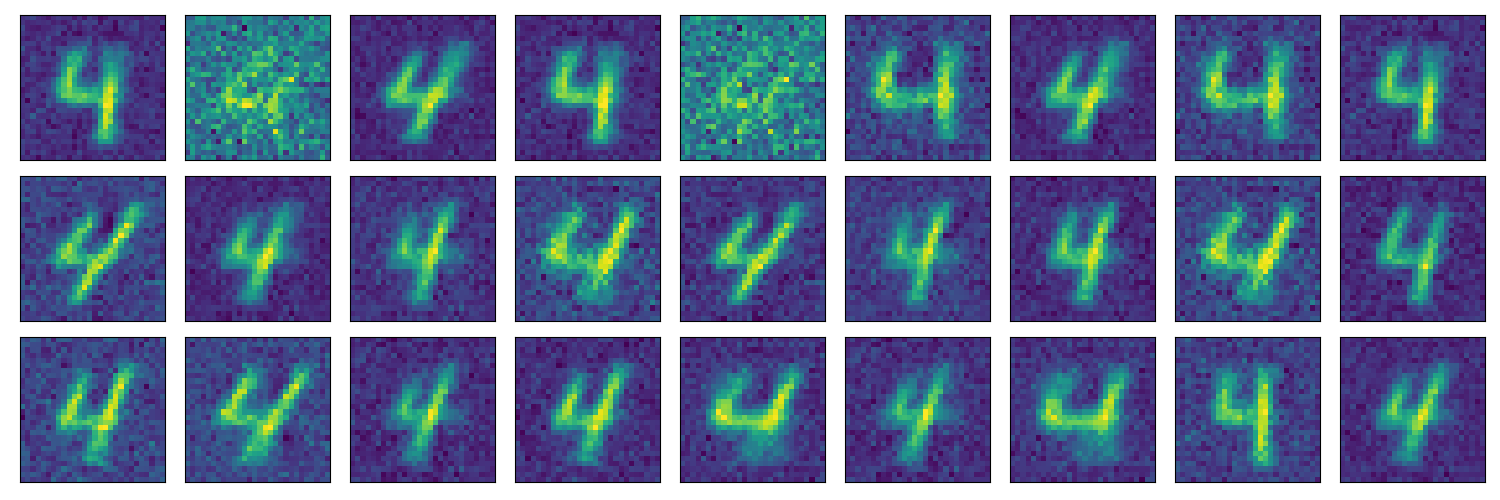}\\
    VAE training (large learning rate)\\
    \includegraphics[width=0.68\linewidth]{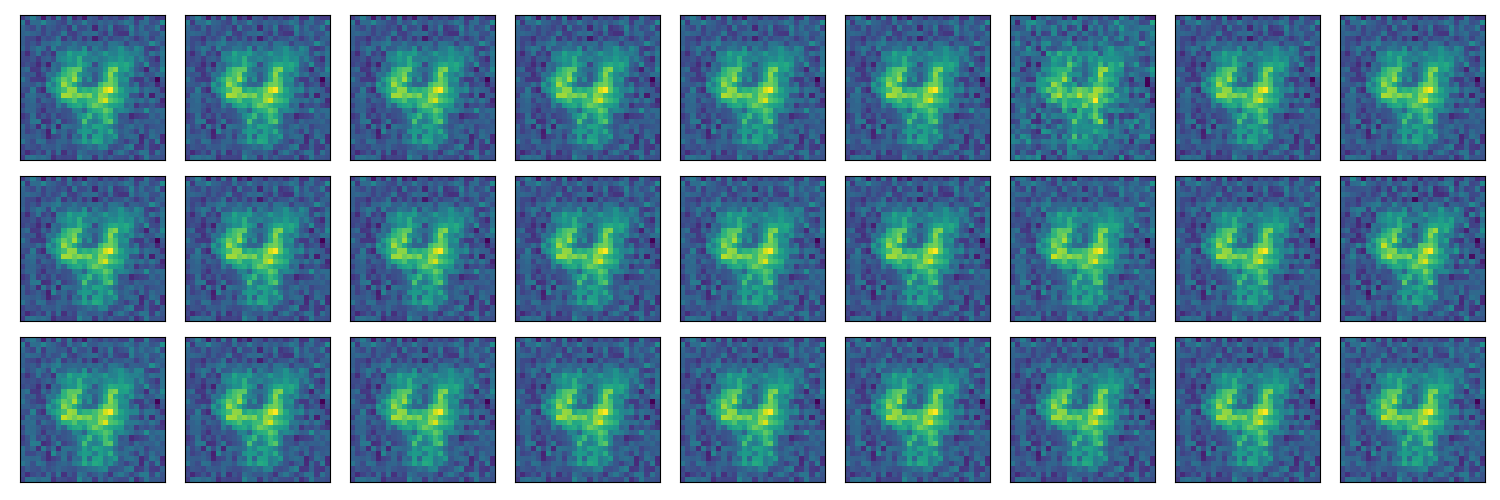}\\
    VAE training (medium learning rate)\\
    \includegraphics[width=0.68\linewidth]{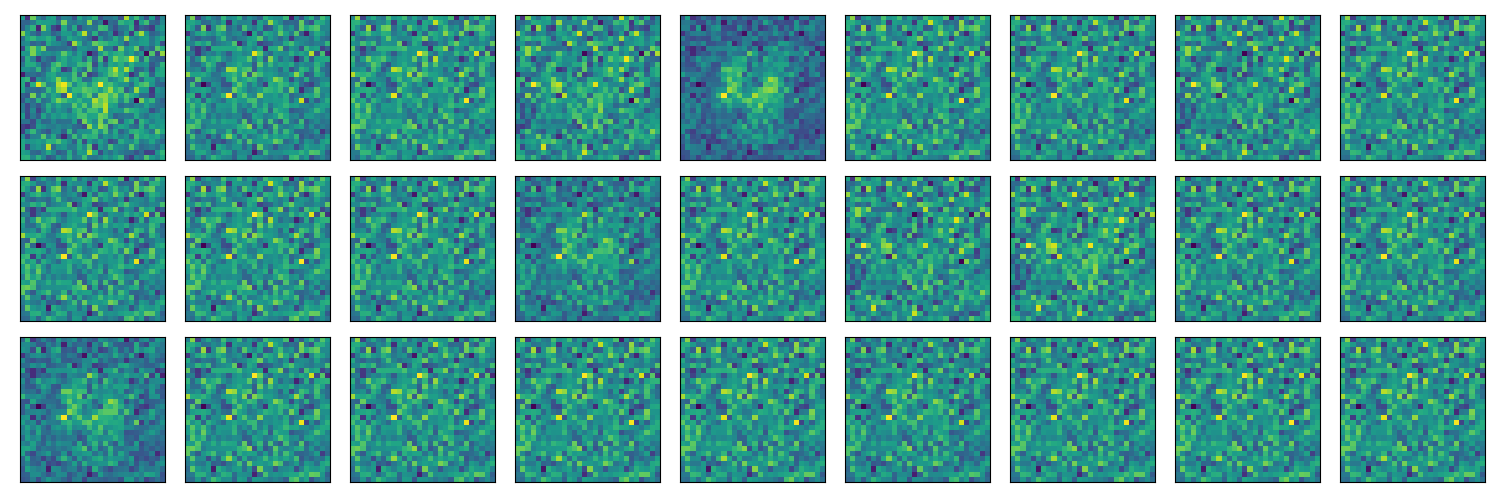}\\
    VAE training (small learning rate)\\
    \includegraphics[width=0.68\linewidth]{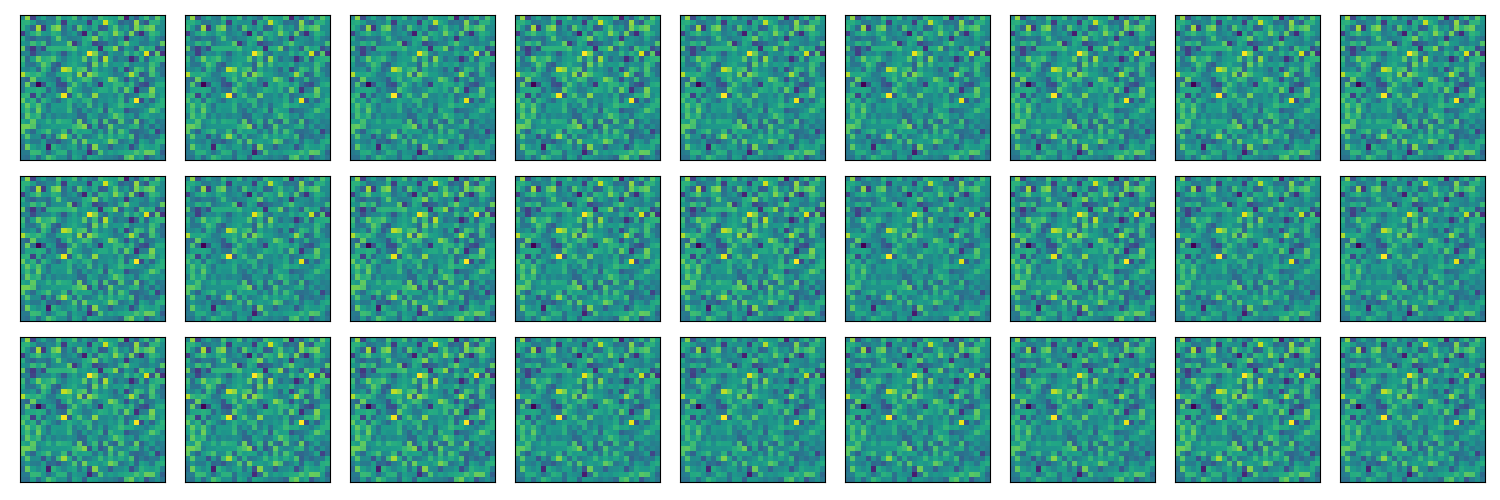}
    \caption{{\small Random samples from trained DGNs with EM or VAEs on a MNIST experiment (with digit $4$). We see the ability of EM training to produce realistic and diversified samples despite using a latent space dimension of $1$ and a small generative network.}}
    \label{fig:my_label}
\end{figure}

\end{document}